\theoremstyle{plain}
\newtheorem{theorem}{Theorem}[section]
\newtheorem{lemma}[theorem]{Lemma}
\newtheorem{corollary}[theorem]{Corollary}
\theoremstyle{definition}
\newtheorem{definition}[theorem]{Definition}
\newtheorem{assumption}[theorem]{Assumption}
\theoremstyle{remark}
\newcommand{\indep}{\perp \!\!\! \perp}
\newcommand{\argmax}{\mathop{\mathrm{arg\,max}}}
\newcommand{\argmin}{\mathop{\mathrm{arg\,min}}}
\icmltitlerunning{Enhancing Treatment Effect Estimation via Active Learning: A Counterfactual Covering Perspective}
\begin{document}

\twocolumn[
\icmltitle{Enhancing Treatment Effect Estimation via Active Learning: \\A Counterfactual Covering Perspective}



\icmlsetsymbol{equal}{*}

\begin{icmlauthorlist}
\icmlauthor{Hechuan Wen}{uq}
\icmlauthor{Tong Chen}{uq}
\icmlauthor{Mingming Gong}{unimelb,mzuai}
\icmlauthor{Li Kheng Chai}{hwq}
\icmlauthor{Shazia Sadiq}{uq}
\icmlauthor{Hongzhi Yin}{uq}
\end{icmlauthorlist}

\icmlaffiliation{uq}{School of EECS, The University of Queensland, Australia}
\icmlaffiliation{hwq}{Health and Wellbeing Queensland, Australia}
\icmlaffiliation{unimelb}{School of Mathematics and Statistics, The University of Melbourne, Australia}
\icmlaffiliation{mzuai}{Department of Machine Learning, Mohamed
 bin Zayed University of Artificial Intelligence, United Arab Emirates}

\icmlcorrespondingauthor{Hongzhi Yin}{h.yin1@uq.edu.au}

\icmlkeywords{Machine Learning, ICML}

\vskip 0.3in
]



\printAffiliationsAndNotice{} 

\begin{abstract}


Although numerous complex algorithms for treatment effect estimation have been developed in recent years, their effectiveness remains limited when handling insufficiently labeled training sets due to the high cost of labeling the effect after treatment, e.g., expensive tumor imaging or biopsy procedures needed to evaluate treatment effects. Therefore, it becomes essential to actively incorporate more high-quality labeled data, all while adhering to a constrained labeling budget. To enable data-efficient treatment effect estimation, we formalize the problem through rigorous theoretical analysis within the active learning context, where the derived key measures -- \textit{factual} and \textit{counterfactual covering radius} determine the risk upper bound. To reduce the bound, we propose a greedy radius reduction algorithm, which excels under an idealized, balanced data distribution. To generalize to more realistic data distributions, we further propose FCCM, which transforms the optimization objective into the \textit{Factual} and \textit{Counterfactual Coverage Maximization} to ensure effective radius reduction during data acquisition. Furthermore, benchmarking FCCM against other baselines demonstrates its superiority across both fully synthetic and semi-synthetic datasets. Code: \url{https://github.com/uqhwen2/FCCM}





\end{abstract}

\section{Introduction\label{section:intro}}

Understanding the causal effects of interventions is essential for making informed decisions, positioning treatment effect estimation as a fundamental tool with broad applications across diverse domains, including randomized control trials (RCTs) in medication \cite{pilat2015exploring}, A/B testing for business decision-making \cite{kohavi2015online} and government policy evaluation \cite{mackay2020government}, etc. However, real-world scenarios often involve the trade-off between cost and return, e.g., the high cost of tumor imaging or biopsy at scale frequently limits the amount of treatment effects (labels) collected from the individuals given a particular drug, which in turn impacts the accuracy of estimation. Such a challenge highlights the need for designing data-efficient treatment effect estimation methods, which can be formulated as the following optimization problem:
\begin{equation}
    \min_{\mathcal{S}\subset \mathcal{D}} \epsilon(f_{\mathcal{S}}) ~~~~\text{s.t.}~~|\mathcal{S}|\leq B,\label{eq:problem_1}
\end{equation} where $\epsilon$ is a risk metric e.g. precision in estimation of heterogeneous effect (PEHE) \cite{shalit2017estimating},  $\mathcal{D}$ is the pool set containing all candidate samples, $f_{\mathcal{S}}$ is the regression model trained on subset $\mathcal{S}$, $\mathcal{D}$ is the pool set, and $B$ is the labeling budget.

Depending on whether the treatment indicator $t$ is provided alongside with the covariates in pool set $\mathcal{D}$ or not, there are two branches of research related to the formulation in Eq. (\ref{eq:problem_1}). If $t$ is not given, one is to build the dataset $\mathcal{S}$ via active experimental design \cite{addanki2022sample,connolly2023task,ghadiri2024finite}, i.e., a subset of $\mathcal{D}$ consisting only of covariates is selected, which is then partitioned to receive treatments, and subsequently annotated with treatment effects. Otherwise, given the pool set $\mathcal{D}$ consisting of both the covariates and the treatment indicator \cite{sundin2019active,qin2021budgeted,jesson2021causal,wen2024progressive}, the algorithm only deals with selection from the pool set, then let the oracle label the selected samples. Although both aim to facilitate reliable, data-efficient predictions by selectively expanding the labeled dataset for training, studies on the latter are scarcer compared with those on the former. Despite the under-exploration of the latter setting, it has been widely encountered in real-world applications, such as collecting customer preferences from those who have already received different services, tracking patients' side effects from those who have already been administered different drugs, etc. Therefore, in this paper, we study the latter scenario where both the covariates and the treatment indicator are known in the pool set, due to the high cost of labeling the treatment effect at scale and limited budget, only a subset of them are selected for the oracle to label. 

Unlike traditional optimization problems where abundant training data is assumed for treatment effect estimation \cite{shalit2017estimating,shi2019adapting,jesson2020identifying,wang2024optimal}, Eq. (\ref{eq:problem_1}) is in-essence an active learning (AL) problem \cite{settles2009active} and considers the practical issue of the scarcity of labeled training data, in conjunction with a limited labeling budget to expand the dataset for training a more generalizable model. Despite the simple formulation, solving the problem is NP-hard due to the combinatorial nature of selecting the optimal subset of data points to label \cite{tsang2005core,settles2009active}. From a general AL perspective, 
by simplifying the pre-assigned treatment as a feature variable, AL-based regression methods \cite{gal2017deep,sener2018active,ash2019deep} might be directly applicable to Eq. (\ref{eq:problem_1}). However, straightforward adoption of AL for Eq. (\ref{eq:problem_1}) is suboptimal as it omits the distribution alignment between different treatment groups during data acquisition.

Thus far, some designated AL approaches have been proposed to address the constrained regression problem in Eq. (\ref{eq:problem_1}) with observational data. \citet{qin2021budgeted} formalize a theoretical framework QHTE that directly extends the treatment effect estimation risk upper bound \cite{shalit2017estimating} by the core-set \cite{tsang2005core} approach, however, the derived optimizable quantities do not account for the importance of promoting distribution alignment during data acquisition. Following that, \citet{jesson2021causal} propose $\mu\rho$BALD from an information theory perspective, which reduces the distributional discrepancy between treatment groups by scaling the acquisition criterion with the inverse of counterfactual uncertainty. However, such a method relies heavily on the accuracy of the quantified uncertainty and the training of complex estimators, e.g., deep kernel learning \cite{wilson2016deep}. To get the best of both worlds, \citet{wen2024progressive} devise a simple yet performant algorithm MACAL, which considers the reduction of distributional discrepancy while remaining model-independent during data acquisition. However, MACAL has to query the data in pairs (i.e., treated/control group each has one), which hinders its generalizability when optimality can be achieved by querying from one treatment group. Additionally, it is hard to obtain the overall risk upper bound convergence despite the convergence analysis on sub-objectives.

\textbf{Contribution.} Considering the aforementioned theoretical and practical limitations of existing methods for data-efficient treatment effect estimation, this paper presents a three-fold contribution to this area of research. 1). We establish a theoretical framework rooted in the active learning paradigm, specifically tailored to address Eq. (\ref{eq:problem_1}) which is not directly optimizable. Unlike Causal-BALD, the proposed theorem outlines the \textit{model-independent} reducible quantities as the optimization alternative, i.e., \textit{factual} and \textit{counterfactual} covering radius. Our theorem further generalizes QHTE by additionally accounting for the distribution alignment with the \textit{counterfactual} covering radius. 2). In contrast to MACAL, we propose two model-independent algorithms, both can obtain the optimality by a single data acquisition instead of enforcing the pair query, to minimize the covering radius-based objective: a greedy radius reduction method that works well in idealized distributions, and a greedy \textit{factual} and \textit{counterfactual} coverage maximization method -- FCCM that allows for greater flexibility on the data distribution of the pool set; 3). We demonstrate the superiority of FCCM against other baselines on real-world covariates with extensive performance evaluations and qualitative visualizations.

\section{Preliminaries}


\textbf{Treatment effect estimation.} In this paper, we estimate the treatment effect under the potential outcome framework \cite{imbens2015causal}. Let the covariate, treatment, and treatment outcome spaces be denoted as $\mathcal{X}$, $\mathcal{T}$, and $\mathcal{Y}$, respectively. We train the treatment effect estimator $f_{\mathcal{D}}:\mathcal{X}\times\mathcal{T}\rightarrow\mathcal{Y}$ based on the dataset $\mathcal{D}=\{\mathbf{x}_{i}, t_{i}, y_{i}\}_{i=1}^{N}$, where $\mathbf{x}_i$, $t_i$, $y_i$ are respectively the feature vectors, treatment assignment, treatment outcome that correspond to the $i$-th individual. For now, we consider binary treatments $t\in\{0,1\}$, and denote $Y^{t=1}$ and $Y^{t=0}$ as the potential outcomes with treatment assignment $t=1$ and $t=0$ respectively. The ground truth individual treatment effect (ITE) for individual $\mathbf{x}$ is defined as \cite{shalit2017estimating}:
\begin{equation}\label{eq:true_effect}
   \tau(\mathbf{x}) = \mathbb{E}[Y^{t=1} - Y^{t=0}| \mathbf{x}].
\end{equation} 
To evaluate the performance of the trained model $f_{\mathcal{D}}$, we adopt the expected precision in estimation of heterogeneous effect (PEHE) \cite{hill2011bayesian}, which is the go-to choice for various treatment effect estimation tasks \cite{shalit2017estimating,louizos2017causal,shi2019adapting,jesson2021causal,wang2024optimal}:

\begin{definition}\label{definition:pehe}
    \textit{The expected PEHE of the estimator $f$ with squared loss metric $\xi(\cdot)$ is defined as:}
\begin{equation}
    \epsilon_{\text{PEHE}}(f)=\int_{\mathcal{X}}\xi(\mathbf{x};f)p(\mathbf{x})d\mathbf{x},
    \label{eq:pehe}
\end{equation} where $\xi(\mathbf{x};f)=(\hat{\tau}(\mathbf{x})-\tau(\mathbf{x}))^{2}$, $\tau(\mathbf{x})$ is the ground truth effect defined in (\ref{eq:true_effect}), and $\hat{\tau}(\mathbf{x})=f(\mathbf{x},t=1)-f(\mathbf{x},t=0)$ is its estimation. The lower the PEHE value, the better the model performance.
\end{definition}

For the identifiability of the treatment effect $\tau(\mathbf{x})$, the following assumptions from the causal inference literature are the sufficient conditions to let it hold \cite{shalit2017estimating, pearl2009causality}:

\begin{assumption}[Consistency\label{assumption:consistency}]
Only one potential outcome is seen by each unit given the treatment $t$, i.e., $y=Y^{t=0}$ if $t=0$ or $y=Y^{t=1}$ if $t=1$.
\end{assumption}

\begin{assumption}[Strong Ignorability\label{assumption:strong_ingore}]
The independence relation $\{Y^{t=0}, Y^{t=1}\}\indep t | \mathbf{x}$ and the conditional probability $0<p(t=1|\mathbf{x})<1$ hold for all $\mathbf{x}$.
\end{assumption}

\textbf{Integrating active learning.} In situations where the labels $y_i$ in $\mathcal{D}$ are unavailable, we thus introduce AL to build a labeled training set $\mathcal{S}$ out of $\mathcal{D}$. To enhance treatment effect estimators via AL, we 1). feed the oracle-labeled dataset $\mathcal{S}=\{\mathbf{x}_{i}, t_{i}, y_{i}\}_{i=1}^{k}$ for training estimator $f_{\mathcal{S}}$; 2). evaluate the performance of the model; 3). determine if training can be terminated based on performance or labeling budget; 4). if \textit{no} to the above, select the unlabeled subset $\Tilde{\mathcal{S}}^{*}$ from the pool set $\mathcal{D}=\{\mathbf{x}_{i}, t_{i}\}_{i=1}^{n}$ for the oracle to label; 5). expand the fully labeled subset $\mathcal{S}$ with newly labeled data points $\Tilde{\mathcal{S}}^{*}$ and return to step 1). This recursive procedure terminates when the desired performance is reached or the labeling budget is exhausted, and to achieve the objective in Eq. (\ref{eq:problem_1}), the key is to identify the best strategy to construct $\mathcal{S}$.


\textbf{Related work.} In addition to the analysis in Section \ref{section:intro}, we include a more detailed review of existing work on treatment effect estimation, AL, and data-efficient treatment effect estimation in Appendix \ref{appendix:related_works}.

\section{Bounds: A Counterfactual Covering Perspective}

We upper-bound Eq. (\ref{eq:problem_1}) in a general form under the AL paradigm, using a formulation similar to that defined in \cite{sener2018active} for the classification problem. Thus, the population risk -- $\epsilon_{\text{PEHE}}(f_{\mathcal{S}})$ is constrained as:
\begin{equation}
\begin{split}
    &\epsilon_{\text{PEHE}}(f_{\mathcal{S}})\\
    =&\epsilon_{\text{PEHE}}(f_{\mathcal{S}})-\frac{1}{n}\sum^{n}_{i=1}\xi(\mathbf{x}_{i};f_{\mathcal{S}})+\frac{1}{n}\sum^{n}_{i=1}\xi(\mathbf{x}_{i};f_{\mathcal{S}})-\\
    &\frac{1}{|\mathcal{S}|}\sum^{|\mathcal{S}|}_{j=1}l(\mathbf{x}_{j}, y_{j}, t_{j};f_{\mathcal{S}})+\frac{1}{|\mathcal{S}|}\sum^{|\mathcal{S}|}_{j=1}l(\mathbf{x}_{j}, y_{j}, t_{j};f_{\mathcal{S}})\\
    \leq&\underbrace{\left|\epsilon_{\text{PEHE}}(f_{\mathcal{S}})-\frac{1}{n}\sum^{n}_{i=1}\xi(\mathbf{x}_{i};f_{\mathcal{S}})\right|}_{\text{Generalization Error}}+\\
    &\underbrace{\left|\frac{1}{n}\sum^{n}_{i=1}\xi(\mathbf{x}_{i};f_{\mathcal{S}})-\frac{1}{|\mathcal{S}|}\sum^{|\mathcal{S}|}_{j=1}l(\mathbf{x}_{j}, y_{j}, t_{j};f_{\mathcal{S}})\right|}_{\text{Subset Generalization Gap }\Delta}+\\
    &\underbrace{\frac{1}{|\mathcal{S}|}\sum^{|\mathcal{S}|}_{j=1}l(\mathbf{x}_{j}, y_{j}, t_{j};f_{\mathcal{S}})}_{\text{Empirical Training Loss}}\label{eq:population_risk_bound},
\end{split}
\end{equation} where $\xi(\cdot;f_{\mathcal{S}})$ is in Definition \ref{definition:pehe} (incalculable without the counterfactual outcomes), and $l(\cdot;f_{\mathcal{S}})$ a loss function for the labeled training set $\mathcal{S}$ with observed potential outcomes.

The expected model risk $\epsilon_{\text{PEHE}}(f_{\mathcal{S}})$ at the population level with trained estimator $f_{\mathcal{S}}$ on subset $\mathcal{S}$ is controlled by three terms as shown in Eq. (\ref{eq:population_risk_bound}). The generalization error is bounded w.r.t. the size $n$ as in the general machine learning research \cite{vapnik1999overview}. Commonly, the counterfactual effect for the same unit is rarely observable, e.g., a patient undergoes only one type of surgery, or a customer experiences only a single version of the software in an A/B test. Consequently, rendering the term $\xi(\mathbf{x}_{i};f_{\mathcal{S}})$ non-computable. Nonetheless, the key is that we can bound the gap between the critical yet incalculable term with the training loss on $\mathcal{S}$ in the remaining two terms. By assuming zero training loss in the same fashion as \cite{sener2018active}, we can explicitly formalize Eq. (\ref{eq:problem_1}) into:
\begin{equation}
\begin{split}\label{eq:new_problem}
    \min_{\mathcal{S}\subset \mathcal{D}} \left|\frac{1}{n}\sum^{n}_{i=1}\xi(\mathbf{x}_{i};f_{\mathcal{S}})-\frac{1}{|\mathcal{S}|}\sum^{|\mathcal{S}|}_{j=1}l(\mathbf{x}_{i}, y_{i}, t_{i};f_{\mathcal{S}})\right| \\
    \text{s.t.}\quad|\mathcal{S}|\leq B,\quad\quad\quad\quad\quad\quad\quad
\end{split}
\end{equation} where the subset generalization gap $\Delta$ in $\epsilon_{\text{PEHE}}(f_{\mathcal{S}})$ is maintained as the final objective.

To optimize the objective in Eq. (\ref{eq:new_problem}) that is incalculable because of $\xi(\mathbf{x}_{i};f_{\mathcal{S}})$, we instead focus on identifying reducible quantities by capping the objective with a probabilistic risk upper bound outlined in Theorem \ref{theorem:overall}. The key reducible quantities, i.e., factual covering radius $\delta_{(t,t)}$, and the counterfactual covering radius $\delta_{(t,1-t)}$ are defined below.

\begin{definition}\label{definition:covering_radius}
    The covering radius $\delta_{(t,t')}$ is defined as the radius of the smallest ball centered at the labeled samples from treatment group $t$, such that the union of these balls covers all samples within group $t'$, $\forall t'\in\{t,1-t\}$, with factual covering induced by $t'=t$ and counterfactual covering induced by $t'=1-t$. 
\end{definition}



\begin{figure*}[t!]
  \centering
  
  \includegraphics[scale=0.46]{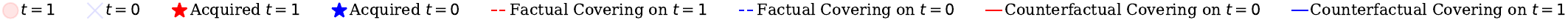}
  \subfigure[FC on $t=1$]{\includegraphics[width=0.24\textwidth]{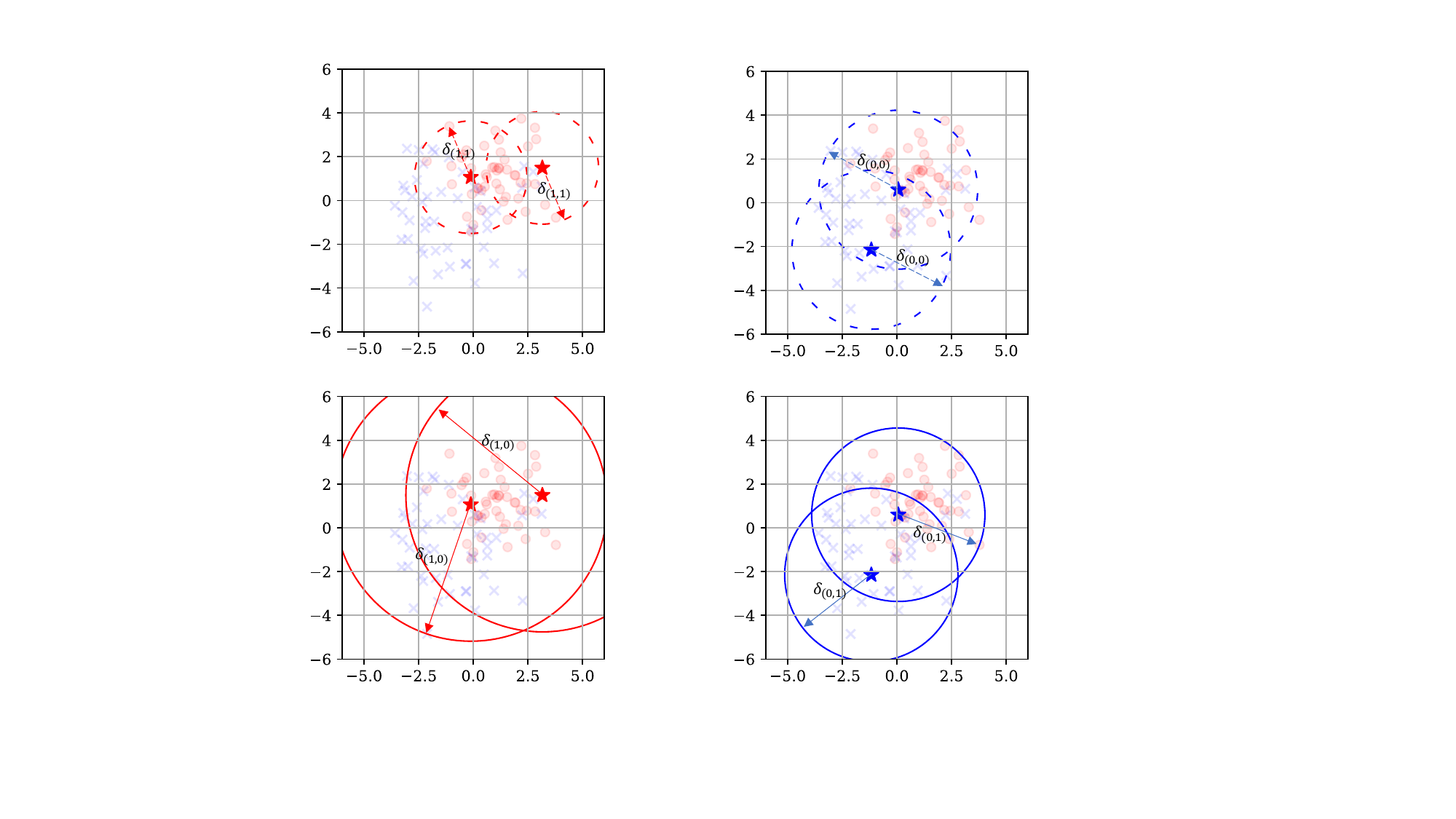}\label{fig:factual_covering_11}}
  \subfigure[CFC on $t=0$]{\includegraphics[width=0.24\textwidth]{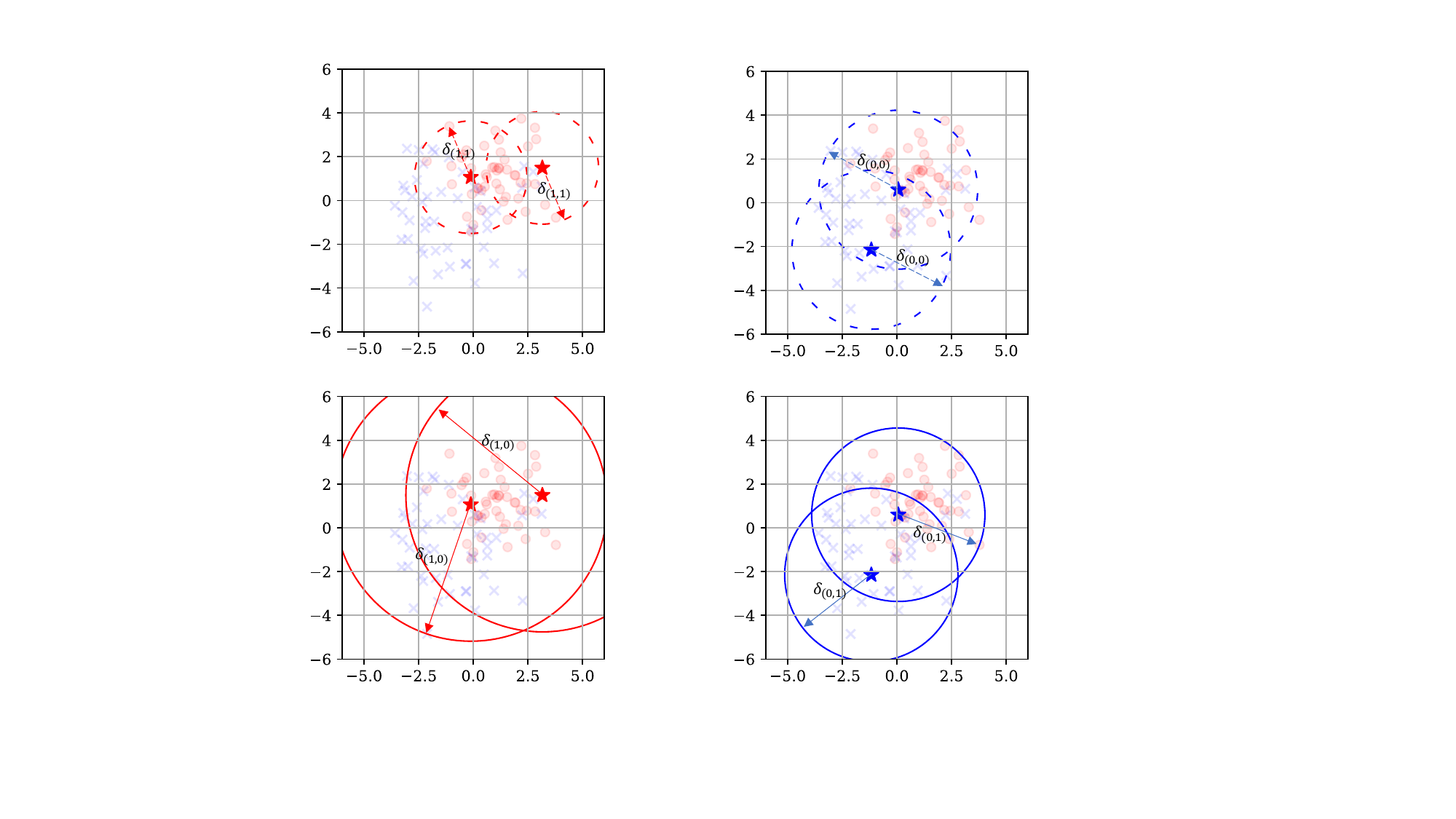}\label{fig:factual_covering_10}}
    \subfigure[FC on $t=0$]{\includegraphics[width=0.24\textwidth]{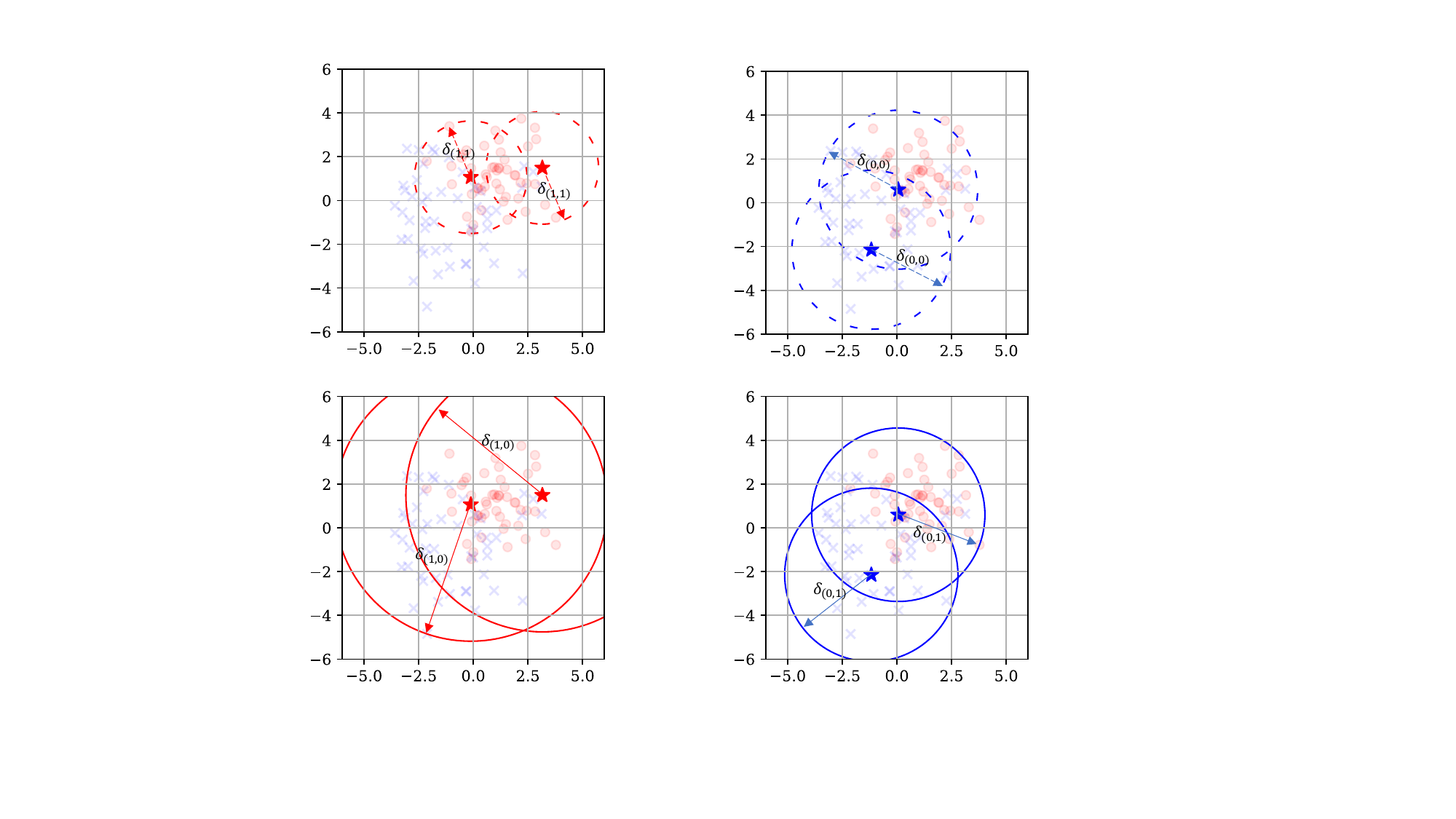}\label{fig:factual_covering_00}}
  \subfigure[CFC on $t=1$]{\includegraphics[width=0.24\textwidth]{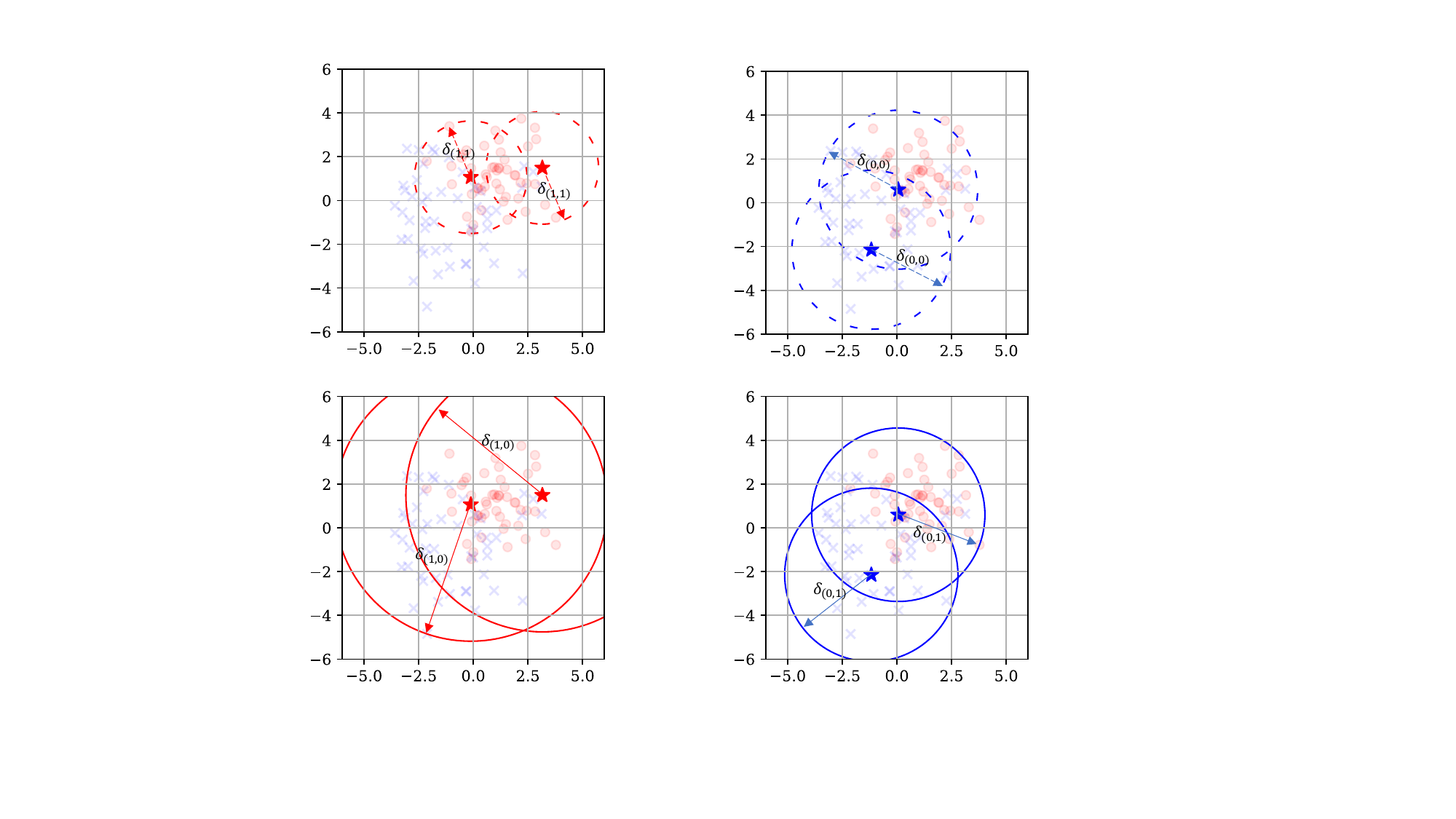}\label{fig:factual_covering_01}}
  \caption{Visualization of the factual covering (FC) and the counterfactual covering (CFC) on the dataset by the acquired samples from each group. Note that each covering is constrained by the full coverage on the desired dataset with the minimum radius.\label{fig:covering_radius}}
  \vspace{-0.2cm}
\end{figure*}

To derive Theorem \ref{theorem:overall}, we further assume the Lipschitz continuity and the existence of the constant $\kappa$, with the discussions on their practicability given in Appendix \ref{appendix:discussions_on_assumptions}.

\begin{assumption}[Lipschitz Continuity]\label{assumption:Lipschitz}
    Assume that the conditional probability density function $p^{t}(y|\mathbf{x})$ is $\lambda_{t}$-Lipschitz, the squared loss function $l$ is $\lambda_{l}$-Lipschitz and $l$ is further bounded by $L_{l}$.
\end{assumption}

\begin{assumption}[Constant $\kappa$]\label{assumption:kappa}
    Let $\mathcal{H} = \{h | h: \mathcal{X}\rightarrow \mathbb{R}\}$ be a family of functions and $f:\mathcal{X}\times\mathcal{T}\rightarrow\mathcal{Y}$ be the hypothesis. Assume that there exists a constant $\kappa>0$, such that $h_{f}(\mathbf{x},t):=\frac{1}{\kappa}l_{f}(\mathbf{x},t)\in \mathcal{H}$.
\end{assumption}

\begin{theorem}\label{theorem:overall}
    Let $\mathbf{x}$ be sampled i.i.d. $n$ times from domain $\mathcal{X}$. Under Assumption \ref{assumption:Lipschitz} and Assumption \ref{assumption:kappa}, with probability at least $1-\gamma$, where $\gamma\in(0,1)$,  the subset generalization gap $\Delta$ is upper-bounded as:
    \begin{equation}
    \begin{split}
        &\left|\frac{1}{n}\sum^{n}_{i=1}\xi(\mathbf{x}_{i};f_{\mathcal{S}})-\frac{1}{|\mathcal{S}|}\sum^{|\mathcal{S}|}_{j=1}l(\mathbf{x}_{i}, y_{i}, t_{i};f_{\mathcal{S}})\right|\\
        \leq&\sum_{t\in\{0,1\}}\kappa_{t}\,\left(\delta_{(t,t)}
        +\delta_{(t,1-t)}\right) + 2\,\kappa_{\mathcal{H}}+\sqrt{\frac{L^{2}_{l}\log \frac{1}{\gamma}}{2n}},
    \end{split}
    \end{equation} where the constants $\kappa_{t}=2\,(\lambda_{l}+\frac{1}{3}\lambda_{t}L^{\frac{3}{2}}_{l})$, and $\kappa_{\mathcal{H}}=\kappa\cdot\text{IPM}_{\mathcal{H}}(p^{t=1}(\mathbf{x}), p^{t=0}(\mathbf{x}))$ with $\text{IPM}_{\mathcal{H}}(\cdot,\cdot)$ being the integral probability metric induced by $\mathcal{H}$, and $p^{t}(\mathbf{x})$ denotes the density distribution of treatment group $t$.
\end{theorem}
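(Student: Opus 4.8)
The plan is to bound the absolute subset generalization gap $\Delta$ appearing in \eqref{eq:new_problem} by first invoking the zero-training-loss assumption used to reach that display: it forces both the empirical training loss $\frac{1}{|\mathcal{S}|}\sum_j l(\mathbf{x}_j,y_j,t_j;f_{\mathcal{S}})$ and each within-group training error $\epsilon_{\mathcal{S}_t}$ to vanish, so that $\Delta$ collapses to the nonnegative quantity $\frac{1}{n}\sum_i \xi(\mathbf{x}_i;f_{\mathcal{S}})$ and its absolute value may be dropped. It then suffices to upper-bound this single empirical average, which I would do in two stages: a probabilistic concentration step passing from the empirical average to its expectation, followed by a deterministic covering step that controls the expectation through Definition \ref{definition:covering_radius}.

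First I would handle the concentration. Since the $\mathbf{x}_i$ are drawn i.i.d. from $\mathcal{X}$, the summands $\xi(\mathbf{x}_i;f_{\mathcal{S}})$ are i.i.d. and, by the boundedness of the loss by $L_l$ in Assumption \ref{assumption:Lipschitz}, lie in an interval of length $L_l$. Applying Hoeffding's inequality one-sidedly and setting the failure probability to $\gamma$ yields, with probability at least $1-\gamma$,
\begin{equation*}
\frac{1}{n}\sum_{i=1}^{n}\xi(\mathbf{x}_i;f_{\mathcal{S}}) \le \mathbb{E}\!\left[\frac{1}{n}\sum_{i=1}^{n}\xi(\mathbf{x}_i;f_{\mathcal{S}})\right] + \sqrt{\frac{L_l^2\log\frac{1}{\gamma}}{2n}},
\end{equation*}
which already supplies the final additive term; everything downstream is deterministic and holds on this same high-probability event.

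Next I would decompose and cover. I would invoke the intermediate decomposition lemma (proved in the appendix) splitting the expected PEHE integrand into factual and counterfactual parts, $\mathbb{E}[\frac{1}{n}\sum_i\xi]\le 2(\epsilon_F+\epsilon_{CF})$, where the factor $2$ comes from the elementary inequality $(a-b)^2\le 2a^2+2b^2$ applied to the ITE error written as a difference of the two potential-outcome errors. I would then apply the factual and counterfactual covering lemmas: under Assumption \ref{assumption:Lipschitz} the per-point error is transferred to the nearest labeled center of the appropriate treatment group at a cost equal to the covering radius times $(\lambda_l+\frac{1}{3}\lambda_t L_l^{3/2})$, the $L_l^{3/2}$ factor arising from integrating the $\lambda_t$-Lipschitz density over a ball of that radius, while Assumption \ref{assumption:kappa} converts the residual group-distribution mismatch in the counterfactual term into the IPM quantity $\kappa_{\mathcal{H}}$. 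With the training errors $\epsilon_{\mathcal{S}_t}$ gone, substituting gives $2(\epsilon_F+\epsilon_{CF})\le 2\sum_t \mu_t\,\delta_{(t,t)}(\lambda_l+\tfrac{1}{3}\lambda_t L_l^{3/2})+2\sum_t\mu_{1-t}\,\delta_{(t,1-t)}(\lambda_l+\tfrac{1}{3}\lambda_t L_l^{3/2})+2\kappa_{\mathcal{H}}$.

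Finally I would collect constants: relaxing the mixing weights by $\mu_t\le 1$ and $\mu_{1-t}\le 1$ merges the two weighted sums into $\sum_{t}\kappa_t(\delta_{(t,t)}+\delta_{(t,1-t)})$ with $\kappa_t=2(\lambda_l+\frac{1}{3}\lambda_t L_l^{3/2})$, leaving the IPM term as $2\kappa_{\mathcal{H}}$ and the Hoeffding term exactly as stated. I expect the main obstacle to be the bookkeeping at the interface of the probabilistic and deterministic parts—ensuring the Hoeffding range matches the loss bound $L_l$ used in the covering lemmas, and tracking precisely where the factor $2$ and the relaxation $\mu\le 1$ are spent so the constants coincide with $\kappa_t$ and $2\kappa_{\mathcal{H}}$—since the genuinely delicate estimates (the density-integration constant and the emergence of the IPM) live inside the covering lemmas, which I would treat as established.
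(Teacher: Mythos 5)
Your proposal is correct and follows essentially the same route as the paper's proof: collapse $\Delta$ to $\frac{1}{n}\sum_i\xi(\mathbf{x}_i;f_{\mathcal{S}})$ via the zero-training-loss assumption, bound its expectation by $2(\epsilon_F+\epsilon_{CF})$ (Lemma \ref{lemma:expected_risk}), control these by the factual and counterfactual covering lemmas (Lemmas \ref{lemma:factual_error} and \ref{lemma:counterfactual_error}) with $\mu_t\le 1$, and finish with the one-sided Hoeffding step. The only difference is the order in which the concentration and covering steps are applied, which is immaterial since the covering bound on the expectation is deterministic.
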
 

Proof of the theorem is provided in Appendix \ref{appendix:theorem_1}. To enhance interpretation, we visualize the factual and counterfactual covering radius in Figure \ref{fig:covering_radius} with four labeled samples from a random two-dimensional toy dataset. For example, Figure \ref{fig:factual_covering_11} shows the factual covering with radius $\delta_{(1,1)}$ such that the union of the circles centered at the two labeled $t=1$ sample covers \textit{all} samples from group $t=1$. Noting that the covering radius $\delta_{(t,t')}$ decreases monotonically as the size of labeled set $\mathcal{S}$ grows under the AL paradigm, we present a further corollary to reveal the convergence of the subset generalization gap $\Delta$ given the fixed-size pool set $\mathcal{D}$.

\begin{corollary}[Informal]\label{corollary:covergence}
    Let $n$ be fixed, given $\delta_{(t,t')}$ decreases monotonically as $\mathcal{S}$ grows under the AL paradigm, then, with probability at least $1-\gamma$, we have:
    \begin{equation}
        \Delta=\mathcal{O}(\delta_{(1,1)})+\mathcal{O}(\delta_{(1,0)})+\mathcal{O}(\delta_{(0,0)})+\mathcal{O}(\delta_{(0,1)}).\label{eq:convergence}
    \end{equation}
\end{corollary}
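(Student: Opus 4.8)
The plan is to treat the corollary as an essentially immediate consequence of Theorem~\ref{theorem:overall}, so the work is reorganization and asymptotic bookkeeping rather than fresh estimation. First I would invoke the high-probability bound of Theorem~\ref{theorem:overall}, which already holds with probability at least $1-\gamma$; the corollary therefore inherits the same confidence level directly, with no need to redo any concentration argument. I would then expand the summation over $t\in\{0,1\}$ explicitly into its four constituent radius terms, namely $\kappa_{1}\delta_{(1,1)}$, $\kappa_{1}\delta_{(1,0)}$, $\kappa_{0}\delta_{(0,0)}$ and $\kappa_{0}\delta_{(0,1)}$, together with the residual $2\kappa_{\mathcal{H}}+\sqrt{L_{l}^{2}\log(1/\gamma)/(2n)}$.

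Next I would classify each quantity by its dependence on the labeled subset $\mathcal{S}$. The coefficients $\kappa_{t}=2(\lambda_{l}+\tfrac{1}{3}\lambda_{t}L_{l}^{3/2})$ are fixed by the Lipschitz and boundedness constants of Assumption~\ref{assumption:Lipschitz}, hence are absolute constants independent of $\mathcal{S}$; the discrepancy term $2\kappa_{\mathcal{H}}=2\kappa\cdot\text{IPM}_{\mathcal{H}}(p^{t=1}(\mathbf{x}),p^{t=0}(\mathbf{x}))$ depends only on the two treatment-group marginals of the fixed pool set and so is also constant in $\mathcal{S}$; and with $n$ and $\gamma$ held fixed the concentration term is constant as well. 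Since each $\kappa_{t}$ is a positive constant, I absorb it into the asymptotic notation to write $\kappa_{t}\delta_{(t,t')}=\mathcal{O}(\delta_{(t,t')})$, and then use additivity of $\mathcal{O}(\cdot)$ over finitely many summands to collect the four contributions into $\mathcal{O}(\delta_{(1,1)})+\mathcal{O}(\delta_{(1,0)})+\mathcal{O}(\delta_{(0,0)})+\mathcal{O}(\delta_{(0,1)})$. The monotonicity hypothesis, that each $\delta_{(t,t')}$ decreases as $\mathcal{S}$ grows, is what lends these terms their interpretation as the genuinely reducible part of $\Delta$ and supplies the convergence reading of the statement.

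The main obstacle is honest handling of the two additive constants $2\kappa_{\mathcal{H}}$ and $\sqrt{L_{l}^{2}\log(1/\gamma)/(2n)}$: because neither shrinks as the covering radii tend to zero, the bound possesses an irreducible floor, and so the literal identity $\Delta=\mathcal{O}(\delta_{(1,1)})+\cdots+\mathcal{O}(\delta_{(0,1)})$ cannot hold in the strict $\delta\to 0$ sense, since the ratio of a nonzero constant to a vanishing radius diverges. This is precisely why the corollary is flagged \emph{Informal}. I would resolve it by recording the formal version as $\Delta=\sum_{t,t'}\mathcal{O}(\delta_{(t,t')})+\mathcal{O}(1)$, where the $\mathcal{O}(1)$ gathers the IPM and concentration residuals, and then noting that in the idealized balanced regime targeted by the greedy algorithm the discrepancy $\text{IPM}_{\mathcal{H}}$ is negligible, so the four covering radii are the only quantities governing the achievable decrease of the subset generalization gap. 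Displaying only the radius-dependent terms, as in Eq.~(\ref{eq:convergence}), then communicates the intended message that reducing $\Delta$ reduces to driving down the four radii.
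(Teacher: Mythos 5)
Your proposal is correct and takes essentially the same route as the paper, which offers no separate proof and presents the corollary as an immediate consequence of Theorem~\ref{theorem:overall} obtained by absorbing the constants $\kappa_{t}$ into the $\mathcal{O}(\cdot)$ notation. Your explicit accounting for the non-vanishing residuals $2\kappa_{\mathcal{H}}$ and $\sqrt{L_{l}^{2}\log(1/\gamma)/(2n)}$ --- recording the honest form $\Delta=\sum_{t,t'}\mathcal{O}(\delta_{(t,t')})+\mathcal{O}(1)$ --- is in fact more careful than the paper, which handles this only by labeling the corollary \emph{Informal}.
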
 

Observing that the factual covering radius is qualitatively related to the sample diversity showing an inverse trend, e.g., if the two marked samples in Figure \ref{fig:factual_covering_11} were overlapped (low diversity), the radius $\delta_{(1,1)}$ were larger, and the counterfactual covering radius is also qualitatively related to the distributional discrepancy with an inverse trend, e.g., if the two marked $t=1$ samples in Figure \ref{fig:factual_covering_10} were further away from the center of the group $t=0$, the radius $\delta_{(1,0)}$ were larger.  Thus, it is evident that the objective is to minimize the covering radius as much as possible within the labeling budget, thereby reducing the risk upper bound. Intuitively, the radius reduction given limited centers is analogous to the $k$-Center problem \cite{wolf2011facility}. However, under the treatment effect estimation setting, the derived bound introduces the counterfactual covering radius which fundamentally differs from the classical $k$-Center problem, where the centers only cover data points from the same class (i.e., treatment group in our case), as per Figure \ref{fig:factual_covering_11} and \ref{fig:factual_covering_00}. 

\section{Methodologies}

In this section, we propose two greedy algorithms to minimize the four covering radii in Eq. (\ref{eq:convergence}). The first algorithm draws direct inspiration from the Corollary \ref{corollary:covergence} and extends the core-set solution \cite{tsang2005core,sener2018active} into the counterfactual covering perspective, while the second algorithm provides more flexibility on the data distribution and relaxes the full coverage constraint to achieve stronger radius reduction under the same labeling budget.

\subsection{Factual and Counterfactual Radii Reduction\label{section:fccs}}

Denote $\mathbf{x}^{t}_{i}$, $\mathcal{D}_{t}$, and $\mathcal{S}_{t}$ as the individual covariate vector, pool set, and labeled training set for the treatment group $t$, respectively. Note that $d(\cdot,\cdot)$ is a distance metric, and $\Tilde{\mathcal{S}}_{t}$ is a proxy collection which is explicitly explained in Appendix \ref{appendix:detailed_alg_1}. Motivated by the Corollary \ref{corollary:covergence}, the objective is to find the optimal subset $\mathcal{S}$ that minimizes the sum of the factual and the counterfactual covering radii as follows:
\begin{align}\label{eq:radius_reduction}
    \min_{\mathcal{S}=\mathcal{S}_{0}\cup \mathcal{S}_{1}, |\mathcal{S}|\leq B} \delta_{(1,1)} + \delta_{(1,0)} + \delta_{(0,0)} + \delta_{(0,1)},
\end{align} and for $t\in\{0,1\}$, $\delta_{(t,t)}=\max_{i\in \mathcal{D}_{t}\backslash \mathcal{S}_{t}}\min_{j\in \mathcal{S}_{t}} d(\mathbf{x}^{t}_{i},\mathbf{x}^{t}_{j})$, $\delta_{(t,1-t)}=\max_{i\in \mathcal{D}_{1-t}\backslash \Tilde{S}_{1-t}}\min_{j\in \mathcal{S}_{t}} d(\mathbf{x}^{1-t}_{i},\mathbf{x}^{t}_{j})$.

However, to say the least, this minimization is as difficult as solving the classic $k$-Center problem that is NP-hard \cite{cook1998combi}. The minimization of the factual covering radii, i.e., $\delta_{(1,1)}$ and $\delta_{(0,0)}$, is essentially the classical $k$-Center problem, while the minimization of the counterfactual covering radii, i.e., $\delta_{(1,0)}$ and $\delta_{(0,1)}$, involves the unconventional covering of one group by the centers from the other group, e.g., covering \textit{all} samples within group $t=1$ by the centers from group $t=0$ or vise versa, as visually depicted in Figure \ref{fig:factual_covering_10} and \ref{fig:factual_covering_01}.

To solve Eq. (\ref{eq:radius_reduction}), we provide a greedy radius reduction algorithm in Algorithm \ref{alg:fccs} (\textit{sketch}, with details in Appendix \ref{appendix:detailed_alg_1}), where the largest radius among the four radii is prioritized to be reduced. Recall Assumption \ref{assumption:strong_ingore} (Strong Ignorability), the counterfactual covering radii returned by our proposed algorithm can be effectively reduced under this assumption for the data distribution between the treatment groups, i.e., $0<p(t=1|\mathbf{x})<1$. Furthermore,  denoting the minimal covering radius under the optimal solution as ${OPT}_{\delta_{(\cdot,\cdot)}}$, we give the theoretical guarantee for Algorithm \ref{alg:fccs} as follows:

\begin{algorithm}[t!]
\caption{Greedy Radius Reduction (\textit{Sketch})}
\begin{algorithmic}[1]\label{alg:fccs}
   \STATE \textbf{Input:} $\mathcal{D}_{1}$, $\mathcal{D}_{0}$; randomly initialized $\mathcal{S}_{1}$, $\mathcal{S}_{0}$; budget $B$; pseudo operator $\Gamma=\argmax\min\,d(\cdot,\cdot)$
   \STATE $\mathcal{S}^{\text{init}}\leftarrow\mathcal{S}_{1}\cup\mathcal{S}_{0}, \mathcal{S}\leftarrow\mathcal{S}^{\text{init}}, \Tilde{S}_{1}\leftarrow\varnothing,\Tilde{S}_{0}\leftarrow\varnothing$
    \WHILE{$|\mathcal{S}|<|S^{\text{init}}|+B$}
    

    \STATE Calculate $\delta = \max\{\delta_{(1,1)}, \delta_{(1,0)}, \delta_{(0,0)}, \delta_{(0,1)}\}$

    \IF{$\delta = \delta_{(1,1)}$ or $\delta = \delta_{(0,0)}$}

    \STATE Find point $a$ to reduce the factual radius via $\Gamma$.

    \ELSE \STATE Find proxy point $a'$ to reduce the counterfactual radius via $\Gamma$, and then compute corresponded $a$.
    
    \ENDIF
    
    \STATE $\mathcal{S}\leftarrow\mathcal{S}\cup\{a\}$ \#$a$ is not labeled.
    
    \ENDWHILE
    \STATE \textbf{Output:} $\mathcal{S}$ \#Labels in $\mathcal{S}$ is not used in querying.
\end{algorithmic}
\end{algorithm}


\begin{theorem}\label{theorem:2opt}
Under Assumption \ref{assumption:strong_ingore}, the sum of the covering radii returned by Algorithm \ref{alg:fccs} is upper-bounded by $2\times\sum_{t\in \{0,1\}}(\text{OPT}_{\delta_{(t,t)}}+\text{OPT}_{\delta_{(t,1-t)}})$.
\end{theorem}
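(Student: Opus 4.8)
The plan is to establish a separate $2$-approximation for each of the four coverings, i.e.\ $\delta_{(t,t')}\le 2\,\text{OPT}_{\delta_{(t,t')}}$ for every $t\in\{0,1\}$ and $t'\in\{t,1-t\}$, and then sum these four inequalities to obtain the claimed bound. The engine throughout is the classical farthest-point (Gonzalez) analysis of the $k$-Center problem: if a center set $\mathcal{C}$ of size $k$ covers a target set with radius $r$, and one can exhibit $k+1$ target points that are pairwise at distance at least $r$, then pigeonholing them into the $k$ clusters of the optimal solution (each of diameter at most $2\,\text{OPT}$) forces two into a common cluster, yielding $r\le 2\,\text{OPT}$ by the triangle inequality. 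I would package this pigeonhole step as a reusable lemma and then instantiate it for the factual and counterfactual radii.

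For the factual radii $\delta_{(t,t)}$, the centers $\mathcal{S}_t\subseteq\mathcal{D}_t$ cover $\mathcal{D}_t$, and the $|\mathcal{S}_t|+1$ separated points would be read off from the greedy insertions together with the point $p\in\mathcal{D}_t$ realizing $\delta_{(t,t)}$. The subtlety is that Algorithm~\ref{alg:fccs} shares a single budget across all four radii and at each step only shrinks the current maximum, so $\mathcal{S}_t$ also accrues centers inserted for counterfactual reasons; a priori these need not be mutually far apart in the factual metric, which would break the pairwise-separation step. I would resolve this using Strong Ignorability (Assumption~\ref{assumption:strong_ingore}): since $0<p(t=1\mid\mathbf{x})<1$ forces the two groups to share support, a group-$t$ point lies arbitrarily close to every group-$(1-t)$ point, so each insertion into $\mathcal{S}_t$ — whether triggered by $\delta_{(t,t)}$ or by $\delta_{(t,1-t)}$ — coincides with selecting the farthest point of the common domain from the current $\mathcal{S}_t$. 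Consequently the entire insertion sequence into $\mathcal{S}_t$ behaves as Gonzalez's farthest-point heuristic on that domain, the centers are pairwise separated by at least the running radius, and the pigeonhole lemma applies with the full budget $|\mathcal{S}_t|$. Monotonic non-increase of the radius along the greedy run is the key auxiliary fact making this rigorous.

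For the counterfactual radii $\delta_{(t,1-t)}$, I would argue through the proxy collection $\Tilde{\mathcal{S}}_{1-t}$: each counterfactual reduction selects the farthest still-uncovered proxy target $a'\in\mathcal{D}_{1-t}$ and inserts a corresponding center $a\in\mathcal{D}_t$. This is where I expect the main obstacle to lie, since covering one group by centers drawn from the other is a $k$-supplier-type problem whose generic approximation factor is $3$ rather than $2$: the naive triangle-inequality bound picks up an extra $\text{OPT}_{\delta_{(t,1-t)}}$ term from the displacement $d(a,a')$ between a proxy target and its surrogate center. The $2$-factor is recovered precisely by Strong Ignorability, which makes this displacement negligible, so the proxy targets $\Tilde{\mathcal{S}}_{1-t}$ act as genuine Gonzalez centers on $\mathcal{D}_{1-t}$ and their farthest-point separation transfers to the real centers $\mathcal{S}_t$; the pigeonhole lemma then gives $\delta_{(t,1-t)}\le 2\,\text{OPT}_{\delta_{(t,1-t)}}$. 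Summing the two factual and two counterfactual inequalities over $t\in\{0,1\}$ produces $2\sum_{t\in\{0,1\}}(\text{OPT}_{\delta_{(t,t)}}+\text{OPT}_{\delta_{(t,1-t)}})$. Verifying rigorously that the displacement term is absorbed under Assumption~\ref{assumption:strong_ingore} — rather than leaving the extra additive $\text{OPT}$ that would degrade the guarantee to the classical factor $3$ — is the crux that separates this result from the generic $k$-supplier bound.
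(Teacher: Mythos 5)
Your overall architecture is the paper's: prove $\delta_{(t,t')}\le 2\,\text{OPT}_{\delta_{(t,t')}}$ separately for each of the four radii via the Gonzalez/Hochbaum--Shmoys pigeonhole argument and sum, with Strong Ignorability invoked exactly where the paper invokes it --- to make the displacement $d(a,a')$ between a counterfactual proxy target and its surrogate center vanish, so that the proxy collection $\Tilde{\mathcal{S}}_{1-t}$ can be treated as a genuine center set on $\mathcal{D}_{1-t}$ and the generic $k$-supplier factor $3$ collapses to $2$. Your counterfactual half is essentially the content of the paper's Lemma \ref{lemma:2opt_counterfactual}, and you correctly identify that absorbing the displacement term is where the assumption does its work.

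The one place you diverge is the factual bound, and your resolution there has a gap. You claim that, under Strong Ignorability, every insertion into $\mathcal{S}_t$ --- including those triggered by $\delta_{(t,1-t)}$ --- coincides with selecting the farthest point of a common domain from the current $\mathcal{S}_t$, so that the entire insertion sequence is one Gonzalez run with budget $|\mathcal{S}_t|$. That step is not justified: a counterfactual-triggered insertion selects the group-$t$ point nearest to the farthest uncovered proxy in $\mathcal{D}_{1-t}\backslash\Tilde{\mathcal{S}}_{1-t}$, where ``uncovered'' is measured against the proxy collection rather than against the factual coverage by $\mathcal{S}_t$; even with shared support this point need not be far from the existing centers of $\mathcal{S}_t$ in the factual metric, so the pairwise-separation property that the pigeonhole step requires can fail. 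It also leaves ambiguous which optimum you are pigeonholing against, since $\text{OPT}_{\delta_{(t,t)}}$ benchmarks the covering of $\mathcal{D}_t$ alone, not the combined factual-plus-counterfactual task with budget $|\mathcal{S}_t|$. The paper sidesteps all of this: it partitions the budget as $\mathcal{S}_t=\mathcal{S}_{(t,t)}\cup\mathcal{S}_{(t,1-t)}$, uses monotonicity of the covering radius in the center set to write $d(u,\mathcal{S}_t)\le d(u,\mathcal{S}_{(t,t)})$, and runs the standard two-case pigeonhole analysis on $\mathcal{S}_{(t,t)}$ alone against an optimal solution of the matching size $B_{(t,t)}$ (Lemma \ref{lemma:2opt_factual}). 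Your argument is repaired by the same move: when bounding $\delta_{(t,t)}$, simply discard the counterfactually-motivated centers rather than trying to certify that they, too, are farthest-point selections.
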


Proof is provided in Appendix \ref{appendix:theorem_2}. It is noted that data distribution can strongly affect the effectiveness of radius reduction by Algorithm \ref{alg:fccs}. For example, if two treatment groups share identical distributions, Eq. (\ref{eq:radius_reduction}) simply reduces to the conventional $k$-Center problem, for which a quick convergence of the covering radii is foreseeable. However, partially overlapped data distributions between groups can bring exponential challenges in effectively minimizing the sum of radii. In Figure \ref{fig:radius_convergence}, we illustrate the reduction of radii and their sum (denoted as Bound) in the normalized form against the size of acquired data, e.g., 1.0 represents the max value, 0.5 represents half of the max value. Under the identical distribution scenario, in Figure \ref{fig:ideal_convergence}, the factual and counterfactual covering radii decline synchronously (five plots fully overlap with each other) under a quick risk convergence to zero, because Eq. (\ref{eq:radius_reduction}) is reduced to the simple $k$-Center problem (where Assumption \ref{assumption:strong_ingore} surely satisfies) which guarantees a $2$-$OPT$ approximation and the greedy nature of Algorithm \ref{alg:fccs} forces the synchronous reduction. However, real-world data distribution commonly witnesses large group-wise discrepancies, e.g., in CMNIST benchmark \cite{jesson2021quantifying}, a significantly slower bound convergence is observed in Figure \ref{fig:realistic_convergence} due to the difficulty of consistently reducing the counterfactual covering radii $\delta_{(1,0)}$ and $\delta_{(0,1)}$. Note that the synchronous reduction of $\delta_{(1,1)}$ and $\delta_{(0,0)}$ is also forced by the greedy nature of Algorithm 1.

\begin{figure}[t!]
  \centering

  \subfigure[Identical Distribution]{\includegraphics[width=0.24\textwidth]{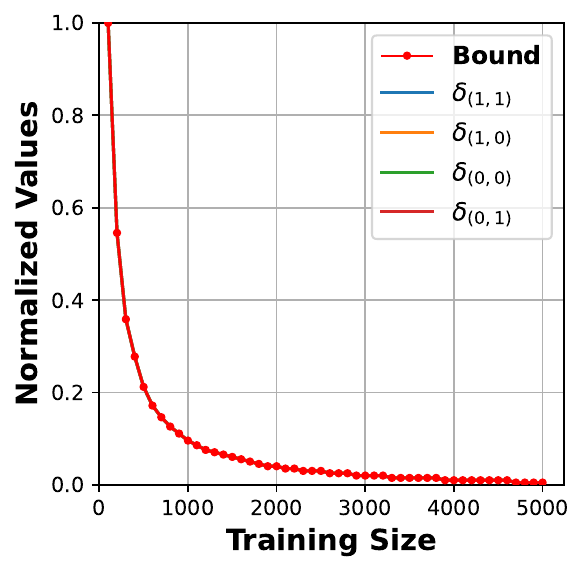}\label{fig:ideal_convergence}}
  \subfigure[CMNIST]{\includegraphics[width=0.224\textwidth]{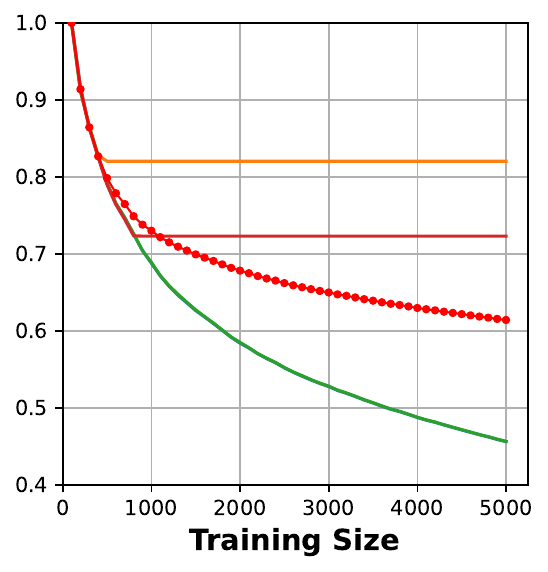}\label{fig:realistic_convergence}}
  \caption{Visuals of the radius reduction and the descent of the Bound under ideal and realistic data distributions by Algorithm \ref{alg:fccs}.}
  \label{fig:radius_convergence}
\end{figure}

\textbf{Discussions:} Given the probability threshold $\gamma$ and fixed-size pool set $\mathcal{D}$, Theorem \ref{theorem:overall} provides an adjustable upper-bound which depends on the declining covering radii as $\mathcal{S}$ grows. However, the derivation of the upper bound is constrained by the full coverage on the pool set, which challenges the counterfactual covering radius reduction due to the uncontrollable real-world data distribution. In light of this, we further explore the possibility of relaxing the full coverage constraint to a more flexible condition, i.e., approximating the full coverage given the relatively smaller fixed covering radius. This underlying mentality resonates with the duality discussion of the core-set and the max coverage problem in \cite{yehuda2022active}. In the following section, we propose the factual and counterfactual coverage maximization (FCCM) solution to solve the radius reduction problem under compromised data distributions while maintaining high satisfaction with the constraint.

\subsection{Counterfactual-integrated Coverage Maximization\label{section:fccm}}

 Let the factual and the counterfactual covering balls, for treatment group $t$ be defined as follows:

\begin{definition}\label{definition:covering_ball}
    Given the fixed covering radius $\delta_{(t,t')}>0$, the covering ball $\mathcal{A}_{(t,t')}$ centered at $\mathbf{x}\in \mathcal{S}_{t}$ is defined as: $\mathcal{A}_{(t,t')}(\mathbf{x})=\{\mathbf{x'}\in \mathcal{D}_{t'}:\|\mathbf{x}-\mathbf{x'}\|\leq\delta_{(t,t')}\}$, $\forall t'\in\{t,1-t\}$, with the factual covering ball induced by $t'=t$ and the counterfactual covering ball induced by $t'=1-t$.
\end{definition}



Therefore, let the union of the factual covering balls be $\mathcal{A}_{F}^{t}=\bigcup_{\mathbf{x}\in \mathcal{S}_{t}} \mathcal{A}_{(t,t)}(\mathbf{x})$ and the factual coverage be $P(\mathcal{A}_{F}^{t})=\frac{|\mathcal{A}_{F}^{t}|}{|\mathcal{S}_{t}|}\in(0,1]$; let the union of the counterfactual covering balls be $\mathcal{A}_{CF}^{t}=\bigcup_{\mathbf{x}\in \mathcal{S}_{t}} \mathcal{A}_{(t,1-t)}(\mathbf{x})$ and the counterfactual coverage be $P(\mathcal{A}_{CF}^{t})=\frac{|\mathcal{A}_{CF}^{t}|}{|\mathcal{S}_{1-t}|}\in[0,1]$. Then, we further define the mean coverage $P(\mathcal{A})$ and sum of the radii $\delta_{\text{sum}}$:
\begin{equation}
\begin{split}\label{eq:mean_coverage}
    P(\mathcal{A})=\frac{1}{4}(P(\mathcal{A}_{F}^{t=1})+P(\mathcal{A}_{CF}^{t=1})+P(\mathcal{A}_{F}^{t=0})+P(\mathcal{A}_{CF}^{t=0})),\\
    \delta_{\text{sum}}=\delta_{(1,1)} + \delta_{(1,0)} + \delta_{(0,0)} + \delta_{(0,1)}.~~~~~~~~~~~~~~~
\end{split}
\end{equation} With the underlying full coverage constraint, Eq. (\ref{eq:radius_reduction}) can be explicitly expressed as: \begin{equation}
\begin{split}\min_{\mathcal{S}\in\mathcal{D}}\,\delta_{\text{sum}}\quad\text{s.t. } P(\mathcal{A})-1=0\label{eq:full_coverage_constraint}.
\end{split}
\end{equation} Noting that in Section \ref{section:fccs}, we discuss the dilemma of reducing $\delta_{(t,1-t)}$ due to the large discrepancy that exists in the realistic dataset. To further suppress the interested bound -- $\delta_{\text{sum}}$, we transform Eq. (\ref{eq:full_coverage_constraint}) into the mean coverage maximization in Eq. (\ref{eq:coverage_maximization}) to maximally satisfy the equality constraint given smaller radius for the bound: \begin{algorithm}[t!]
\caption{FCCM\label{alg:fccm}}
\begin{algorithmic}[1]
    \STATE \textbf{Input:} Covariate matrix $\mathbf{X}\in\mathcal{D}$, random $\mathcal{S}^{\text{init}}$; $\mathcal{S}\leftarrow\mathcal{S}^{\text{init}}$; radius $\delta_{(t,t')},\forall t,t'\in\{0,1\}$; weight $\alpha$; budget $B$
    
    \STATE $\mathcal{E}_{(t,t)}=\{(\mathbf{x}^{t},\mathbf{x}'):\mathbf{x}'\in \mathcal{A}^{t}_{F}(\mathbf{x}^{t})\}$, 
    $\mathcal{E}_{(t,1-t)}=\{(\mathbf{x}^{t},\mathbf{x}'): \mathbf{x}'\in \mathcal{A}_{CF}^{t}(\mathbf{x}^{t})\}$, $\mathcal{E}^{t=1}=\mathcal{E}_{(t,t)}\cup\mathcal{E}_{(t,1-t)}$\

    $W(\mathbf{x}^t, \mathbf{x}') =
    \begin{cases}
    1, & \text{if } \mathbf{x}' \in \mathcal{A}_{F}^t(\mathbf{x}^t) \\
    \alpha, & \text{if } \mathbf{x}' \in \mathcal{A}_{CF}^t(\mathbf{x}^t) \\
    \end{cases}$,
    
    $\text{Weighted \,}\mathcal{G}=(\mathcal{V}=\mathbf{X},\mathcal{E}=\mathcal{E}^{t=1}\cup \mathcal{E}^{t=0},\mathcal{W}=W)$

    \FOR{$v\in \mathcal{S}^{\text{init}}$}
    \STATE Remove the edges to the covered vertices: $\{(\mathbf{x}',\mathbf{x}^{t})\in \mathcal{E}_{(t,t)},(\mathbf{x}^{t},\mathbf{x}')\in \mathcal{E}_{(t,1-t)}:(v^{t},\mathbf{x}^{t})\in \mathcal{E}_{(t,t)},\forall t\in\{0,1\})\}$ 
    \ENDFOR
    
    \WHILE{$|\mathcal{S}|\leq|S^{\text{init}}|+B$} 

    \STATE $\mathcal{S}=\mathcal{S}\cup\{v\}$, where $v$ is the vertice with the highest scaled out-degree in graph $\mathcal{G}$ (see Appendix \ref{appendix:more_explanation_alg_2}).
    
    \STATE Remove the edges to the covered vertices: $\{(\mathbf{x}',\mathbf{x}^{t})\in \mathcal{E}_{(t,t)},(\mathbf{x}^{t},\mathbf{x}')\in \mathcal{E}_{(t,1-t)}:(v^{t},\mathbf{x}^{t})\in \mathcal{E}_{(t,t)}),\forall t\in\{0,1\}\}$ 

    \ENDWHILE
    \STATE \textbf{Output:} $\mathcal{S}$ \#Labels in $\mathcal{S}$ is not used in querying.
\end{algorithmic}
\end{algorithm} \begin{equation}\label{eq:coverage_maximization}
    \max_{\mathcal{S}\in\mathcal{D}} P(\mathcal{A}).
\end{equation} To solve (\ref{eq:coverage_maximization}), we propose a greedy solution -- factual and counterfactual coverage maximization (FCCM) in Algorithm \ref{alg:fccm}. Specifically, FCCM constructs a weighted graph $\mathcal{G}$ with the node $V$ by the entire covariate matrix  $\mathbf{X}\in\mathcal{D}$, and each node $v^{t}\in\mathcal{D}_{t}$ builds the directed edge $e(v^{t},u^{t})$ (with unit weight) pointing to the node $u^{t}$ within the ball $\mathcal{A}_{(t,t)}(v^{t})$, thus constructing a directed graph $\mathcal{G}_{t}$. Then, graph $\mathcal{G}_t$ is further expanded by adding the weighted edges $e(v^{t},u^{1-t})$ (with weight $\alpha$) for $v^{t}\in\mathcal{D}_{t}$ pointing to the node $u^{1-t}$ within the counterfactual ball $\mathcal{A}_{(t,1-t)}(v^{t})$. Once  $\mathcal{G}_{t=1}$ and $\mathcal{G}_{t=0}$ are both obtained, they are naturally connected to build the final weighted graph $\mathcal{G}$. Note that FCCM differs from ProbCover \cite{yehuda2022active} which only works on a single class/group, whereas FCCM not only handles binary treatment groups but also integrates the distinctive counterfactual covering to solve Eq. (\ref{eq:coverage_maximization}).

Unlike Algorithm \ref{alg:fccs}, which can be regarded as a \textit{top-down} approach that optimizes the risk upper bound throughout the AL process with the equality constraint satisfied at each step, Algorithm \ref{alg:fccm} adopts a \textit{bottom-up} approach, working on satisfying the equality constraint under a fixed bound. To further give Algorithm \ref{alg:fccm} a theoretical guarantee in Theorem \ref{theorem:coverage} (with proof provided in Appendix \ref{appendix:theorem_3}), we assume that the full factual and counterfactual coverage are possible by the returned labeled set $\mathcal{S}$ as follows:

\begin{assumption}\label{assumption:full_coverage}
    Given the fixed covering radius $\delta_{(t,t)}$ and $\delta_{(t,1-t)}$, there exists the optimal solution $\mathcal{S}^{*}_{t}$, $\mathcal{S}^{*}_{t}\subset \mathcal{S}^{*}$ for treatment group $t$, $\forall t\in\{0,1\}$, such that $\mathcal{A}_{F}^{t}\bigcup\mathcal{A}_{CF}^{t}=\mathcal{D}.$
\end{assumption}


\begin{figure}[t]
  \centering
  \subfigure[Coverage on CMNIST]{\includegraphics[width=0.235\textwidth]{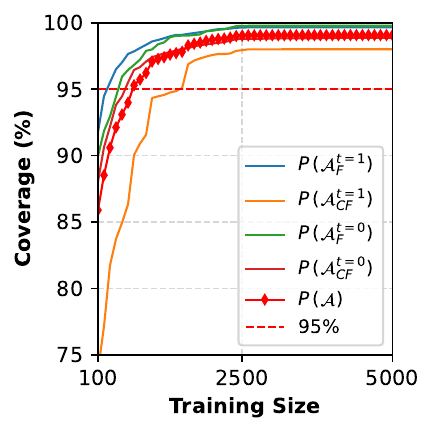}\label{fig:coverage}}
  \subfigure[Gain vs. Loss]{\includegraphics[width=0.235\textwidth]{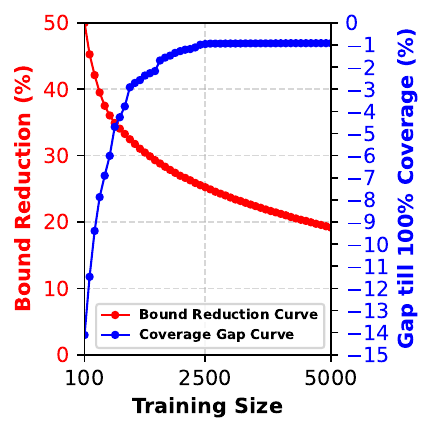}\label{fig:comparision}}
  \caption{Visualization of the high coverage by Algorithm \ref{alg:fccm} on CMNIST, and reduction gain over mean coverage loss by Algorithm \ref{alg:fccm} when compared to Algorithm \ref{alg:fccs}.}
  \label{fig:counterfactual_covering}
\end{figure}

\begin{theorem}\label{theorem:coverage}
    Under Assumption \ref{assumption:full_coverage}, Algorithm \ref{alg:fccm} is a $(1-\frac{1}{e})$ -- approximation for the full coverage constraint on the equally weighted graph and unscaled out-degree.
\end{theorem}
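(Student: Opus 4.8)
The plan is to recognize that, once the graph is equally weighted ($\alpha=1$) and out-degrees are left unscaled, maximizing the mean coverage in Eq.~(\ref{eq:coverage_maximization}) is exactly an instance of the classical maximum-coverage problem, after which the greedy selection in Algorithm~\ref{alg:fccm} inherits the standard $(1-\tfrac{1}{e})$ guarantee. First I would make the ground set explicit: for each vertex $u^{t}\in\mathcal{D}$ I introduce two ``coverage slots'', a factual slot and a counterfactual slot, so the universe $U$ has $|U|=2|\mathcal{D}|$, and selecting a center $v^{t}\in\mathcal{S}$ covers the factual slots of all $u^{t}\in\mathcal{A}_{(t,t)}(v^{t})$ together with the counterfactual slots of all $u^{1-t}\in\mathcal{A}_{(t,1-t)}(v^{t})$. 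With $\alpha=1$ and out-degrees left unscaled, maximizing $P(\mathcal{A})$ reduces to maximizing $F(\mathcal{S}):=\big|\bigcup_{v\in\mathcal{S}}\mathrm{cover}(v)\big|$, the number of covered slots, subject to $|\mathcal{S}|\le B$.

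Second, I would verify that $F$ is a monotone, submodular set function. Monotonicity is immediate because enlarging $\mathcal{S}$ can only enlarge the union, and submodularity (diminishing returns) holds because for $\mathcal{S}\subseteq\mathcal{T}$ any slot newly covered by adding a center on top of $\mathcal{T}$ is also newly covered when adding that same center on top of the smaller $\mathcal{S}$. Indeed $F$ is the union-cardinality of the sets $\mathrm{cover}(v)$, the canonical example of a coverage (hence submodular) function. Here I would make sure that a vertex covered both factually and counterfactually is not collapsed into a single element, which is precisely what splitting the universe into factual and counterfactual slots guarantees.

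Third, I would match the algorithm's greedy step to the greedy rule for max-coverage. The key observation is that the edge-removal lines delete exactly the edges pointing to slots that are already covered, so after each iteration the unscaled out-degree of a candidate vertex equals the number of currently uncovered slots it would cover, i.e.\ its marginal gain $F(\mathcal{S}\cup\{v\})-F(\mathcal{S})$; hence choosing the vertex of maximum residual out-degree is exactly the greedy marginal-gain selection. Invoking the classical Nemhauser--Wolsey--Fisher guarantee for greedy maximization of a monotone submodular function under a cardinality constraint then yields $F(\mathcal{S}_{\mathrm{greedy}})\ge(1-\tfrac{1}{e})\,F(\mathcal{S}^{*})$, where $\mathcal{S}^{*}$ is an optimal budget-$B$ set. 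Under Assumption~\ref{assumption:full_coverage} the optimum attains full coverage, $F(\mathcal{S}^{*})=|U|$, so Algorithm~\ref{alg:fccm} covers at least a $(1-\tfrac{1}{e})$ fraction of $U$, which is the claimed approximation for the full-coverage constraint.

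I expect the main obstacle to be the third step: rigorously certifying that the bookkeeping in the edge-removal lines of Algorithm~\ref{alg:fccm} deletes precisely the edges into already-covered slots, neither more nor fewer, so that ``highest unscaled out-degree'' coincides with the true marginal coverage gain. Everything else is a routine instantiation of submodular max-coverage; the delicate point is this exact correspondence between the graph operations and the set-cover marginal gains, together with the need to keep the factual and counterfactual coverage of the same vertex as distinct elements of the universe so that no coverage event is double-counted or silently dropped.
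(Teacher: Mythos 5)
Your proposal is correct and rests on the same core reduction as the paper's proof: identify each candidate center with the set of points it covers (its factual plus counterfactual ball), observe that after the edge-removal step the residual out-degree equals the marginal coverage gain, apply the greedy max-coverage guarantee, and use Assumption \ref{assumption:full_coverage} to equate the optimum with full coverage. The packaging differs. The paper (Appendix \ref{appendix:theorem_3}) treats each treatment group separately: for $\mathcal{S}_{t}$ it forms the sets $\mathcal{U}_{i}=\mathcal{A}_{(t,t)}(v_{i})\cup\mathcal{A}_{(t,1-t)}(v_{i})$ over the universe $\mathcal{D}$ (Definition \ref{definition:algorithm_2}), re-derives the $(1-\frac{1}{e})$ bound from first principles via the pigeonhole recursion $\eta_{i}\leq\eta_{i-1}(1-\frac{1}{B_{t}})$ (Lemma \ref{lemma:sets_approximation}), and then averages the two groups' coverages. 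You instead build a single universe of $2|\mathcal{D}|$ factual and counterfactual ``slots'', note that the induced coverage function is monotone submodular, and invoke the Nemhauser--Wolsey--Fisher theorem as a black box; this is more general, avoids redoing the recursion, and your slot construction cleanly separates the factual and counterfactual coverage of the same vertex, which the paper achieves implicitly by keeping $\mathcal{A}_{F}^{t}\subset\mathcal{D}_{t}$ and $\mathcal{A}_{CF}^{t}\subset\mathcal{D}_{1-t}$ disjoint. One caveat applies to both arguments equally: the objective in Eq.\ (\ref{eq:coverage_maximization}) is a \emph{mean} of four group-normalized coverages, while the $(1-\frac{1}{e})$ guarantee is naturally stated for unnormalized counts of covered elements; translating between the two is exact only when the per-group denominators coincide (the paper's passage from the four-term average to $\frac{1}{2}\sum_{t}P(\mathcal{A}_{F}^{t}\cup\mathcal{A}_{CF}^{t})$ makes the same implicit identification), so your proof is neither weaker nor stronger than the paper's on this point.
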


\textbf{Exploratory analysis}: Following our discussions on Algorithm \ref{alg:fccs}, we conduct the exploratory analysis on the CMNIST benchmark \cite{jesson2021quantifying} in Figure \ref{fig:coverage} to demonstrate the high coverage efficiency by Algorithm \ref{alg:fccm} on the real-world covariates. Further experiment on the performance compared to Algorithm \ref{alg:fccs} is accessible in Figure \ref{fig:comparision}, where both the high reduction of the sum of the radii and the low compromise of the coverage are observed, e.g., maximally 25\% reduction (red line) with 1\% mean coverage loss (blue line) around the training size of 2500.


\subsection{Approximating the Covering Radius\label{section:estimating_radius}}

The hyperparameters, i.e., $\delta_{(t,t)}$ and $\delta_{(t,1-t)}$ are crucial for the success of obtaining a relatively lower upper bound without heavily intruding into the full coverage constraint. For simplicity, we set the same size for all four radii and denote them as $\delta$ uniformly, in Figure \ref{fig:determine_delta}, we experiment different values of $\delta$ (normalized w.r.t. the max distance between points), and calculate the converged coverage on the growing training set given the pre-set covering radius. Thus, we estimate a narrower range for the relatively small radius with an achievable high mean coverage around the 95\% threshold, to reduce the search space for further hyperparameter tuning. 

\textbf{Uniform $\delta$:} The rationale to use uniform $\delta$ is to avoid making the search space prohibitively large -- on the order of $\mathcal{O}(m^4)$ if each radius has $m$ candidate settings. The key insight to reduce the search complexity is that, by the definition of the mean coverage $P(\mathcal{A})$ in Eq. (\ref{eq:mean_coverage}), Definition \ref{definition:covering_radius}, and Definition \ref{definition:covering_ball}, each radius is independent and each sub-term of $P(\mathcal{A})$ increases monotonically with its corresponding radius, which leads to the mean coverage $P(\mathcal{A})$ increases monotonically. Thus, by the independence and monotonicity, the search for four radii stays in the same direction, making the initial search space $\mathcal{O}(m)$ to identify the smallest radius for satisfying the 95\% mean coverage threshold. 

\begin{figure}[t!]
  \centering

  \subfigure[TOY]{\includegraphics[width=0.168\textwidth]{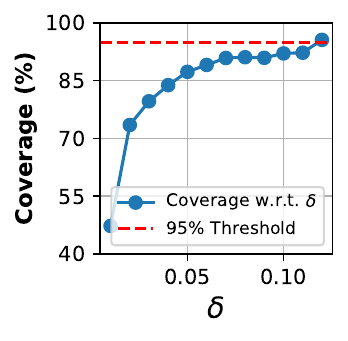}\label{fig:toy_delta}}
  \subfigure[IBM]{\includegraphics[width=0.145\textwidth]{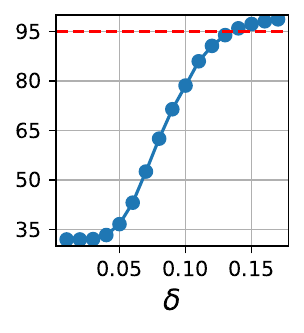}\label{fig:ibm_delta}}
  \subfigure[CMNIST]{\includegraphics[width=0.15\textwidth]{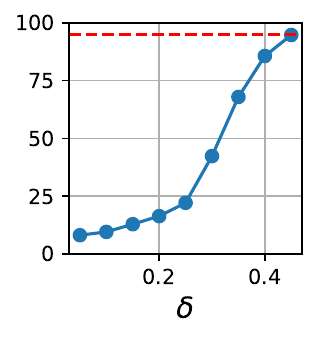}\label{fig:cmnist_delta}}
  \caption{Estimating a smaller range for the covering radius $\delta$ around the 95\% coverage threshold by Algorithm \ref{alg:fccm}.}
  \label{fig:determine_delta}
\end{figure}

\textbf{Different $\delta$:} Though the shared value among four radii allows for effective and efficient initial hyperparameter tuning, it could potentially fail if the distribution discrepancy between the two treatment groups is very large. This is because the counterfactual covering radius can be far larger than the factual covering radius, i.e., $\delta_{(t,1-t)}\gg\delta_{(t,t)}$, to let the counterfactual coverage $P(\mathcal{A}^{t}_{CF})$ to get close to full, lowering the utility of the uniform radius value. As such, if the distribution discrepancy between treatment groups is large, it is then viable to identify a different value for each covering radius to maintain the high coverage under the linear complexity due to the independence and monotonicity.

\section{Experiments}



\textbf{Datasets:} \textbf{Toy} -- a simulated 2-dimensional toy dataset based on 16,000 randomly generated samples. \textbf{IBM} \cite{shimoni2018benchmarking} -- a high-dimensional tabular dataset based on the publicly available Linked Births and Infant Deaths Database. Each simulation contains 25,000 samples with 177 real-world covariates randomly selected from a cohort of 100,000 individuals; \textbf{CMNIST} \cite{jesson2021quantifying} -- This dataset contains 60,000 image samples (10 classes) of size 28$\times$28, which are adapted from MINIST \cite{lecun1998mnist} benchmark. Further details are deferred to Appendix \ref{appendix:dataset}.

\textbf{Metric:} The PEHE defined in Definition \ref{definition:pehe} with the squared root empirical expression: $\sqrt{\epsilon_{\text{PEHE}}}=\sqrt{\Sigma_{i=1}^{N}((y^{t=1}_{i}-y^{t=0}_{i})-\tau_{i})^{2}/N}$, is used for measuring the risk of the estimator at the individual level.

\begin{table}[t!]
\centering
\caption{Summary of the Acquisition Setup and Testing}\label{table:acquisition_summary}
\small
\begin{tabularx}{.48\textwidth}{X *{6}{>{\centering\arraybackslash}X}} 
\toprule
\multicolumn{1}{l}{Dataset} & \multicolumn{1}{c}{Start} & \multicolumn{1}{c}{Length} & \multicolumn{1}{c}{Steps} & \multicolumn{1}{c}{Pool} & \multicolumn{1}{c}{Val} & \multicolumn{1}{c}{Test} \\
\midrule
TOY &ALL*&1&50&7200&2880&1600\\
IBM &ALL*&50&50&2891&3180&6250\\
CMNIST &ALL*&50&50&16706&10500&18000\\
\bottomrule
\end{tabularx}
\end{table}

\textbf{Baselines:} We compare FCCM to two groups of models, namely, the general AL model from the broader research field: BADGE \cite{ash2019deep}, BAIT \cite{ash2021gone}, and LCMD \cite{holzmuller2023framework}. And the designated model for treatment effect estimation with AL: QHTE \cite{qin2021budgeted}, Causal-Bald \cite{jesson2021causal} (with variants $\mu$BALD, $\rho$BALD, and $\mu\rho$BALD), and MACAL \cite{wen2024progressive} are the designated algorithms proposed to deal with the treatment effect estimation with AL.

\textbf{Estimators:} The same setup as described in \cite{jesson2021causal,wen2024progressive} is adopted here by utilizing the following two open-source estimators: \textbf{DUE-DNN} \cite{van2021feature} for tabular data, and \textbf{DUE-CNN} \cite{van2021feature} for image data. Model training details can be found in Appendix \ref{appendix:model_trianing}.

\begin{figure*}[t!]
  \centering

  \includegraphics[width=.1\textwidth]{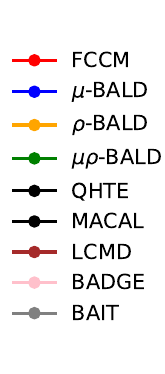}
  \subfigure[TOY]{\includegraphics[width=0.295\textwidth]{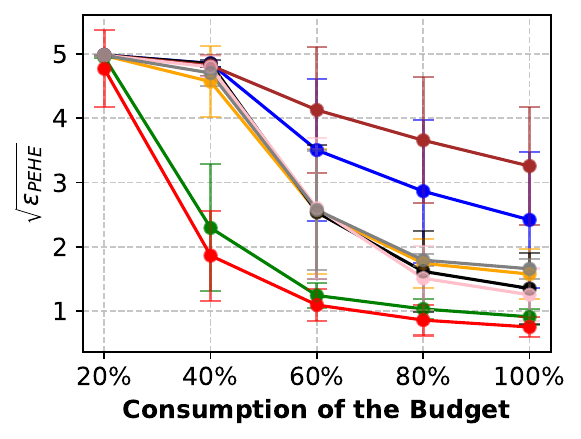}\label{fig:toy_results}}
    \subfigure[IBM]{\includegraphics[width=0.295\textwidth]{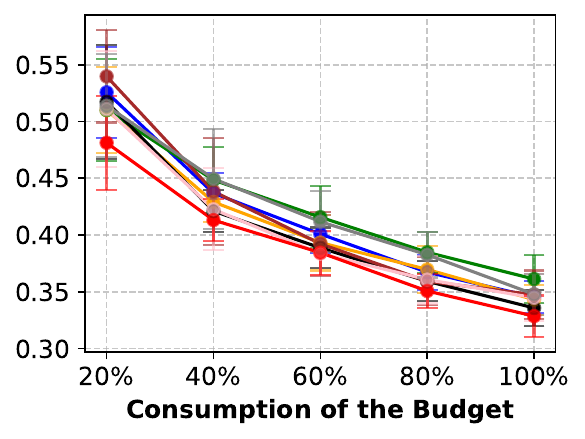}}
  \subfigure[CMNIST]{\includegraphics[width=0.295\textwidth]{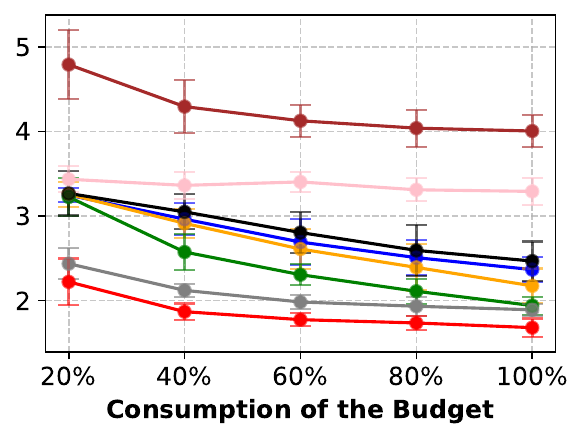}}
\vspace{-0.1cm}
  \caption{All plots are the mean values averaged from 10 simulations associated with the standard deviation as the error bar. Note that all models at 0\% exhibit the same performance given the fixed estimators and are thus neglected. The performance under 2\% granularity is presented in Appendix \ref{appendix:high_resolution}.}
  \label{fig:main_results}
\vspace{-0.2cm}
\end{figure*}

\begin{figure*}[t!]
  \centering
    
  \subfigure[Data Distribution]{\includegraphics[width=0.235\textwidth]{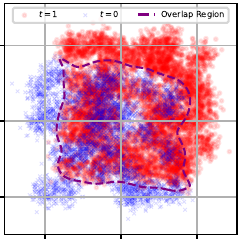}\label{fig:data_distribution}}
  \subfigure[FCCM]{\includegraphics[width=0.24\textwidth]{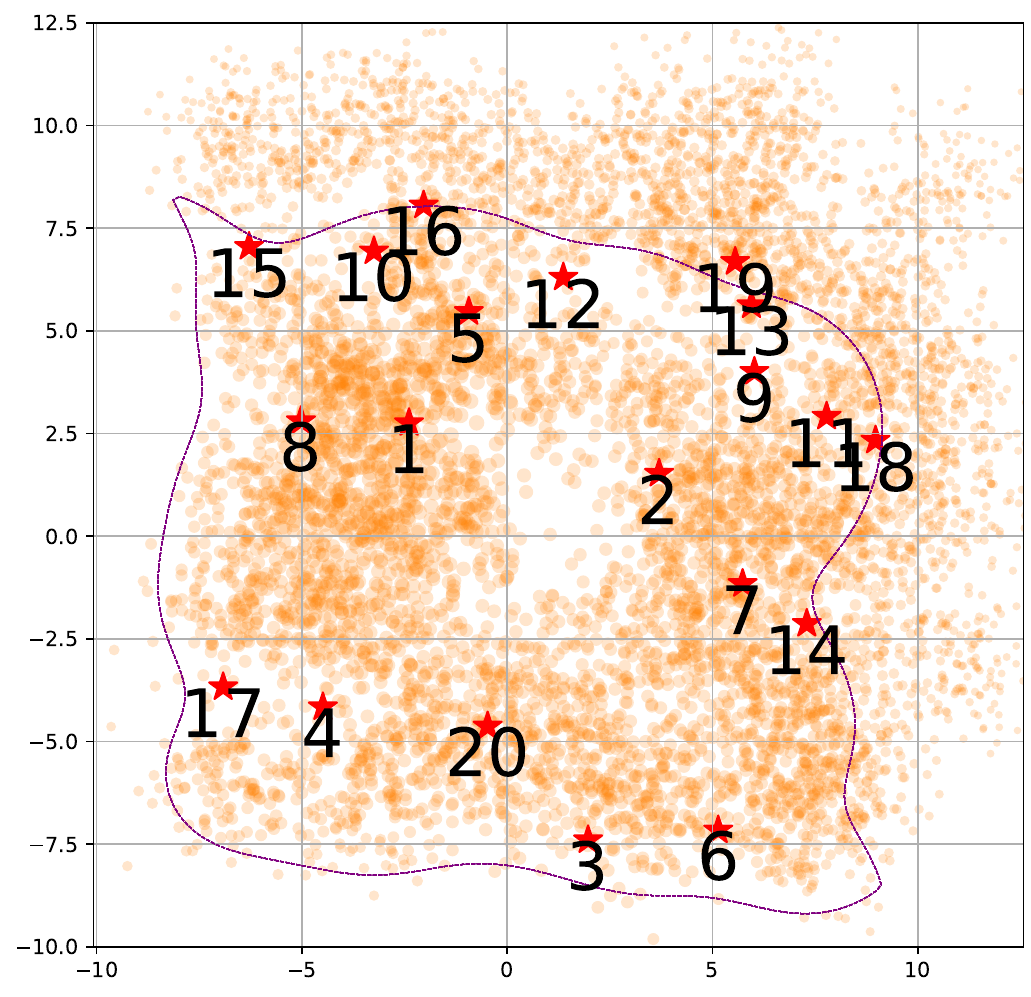}\label{fig:FCCM_acquisition}}
    \subfigure[$\mu\rho$BALD]{\includegraphics[width=0.24\textwidth]{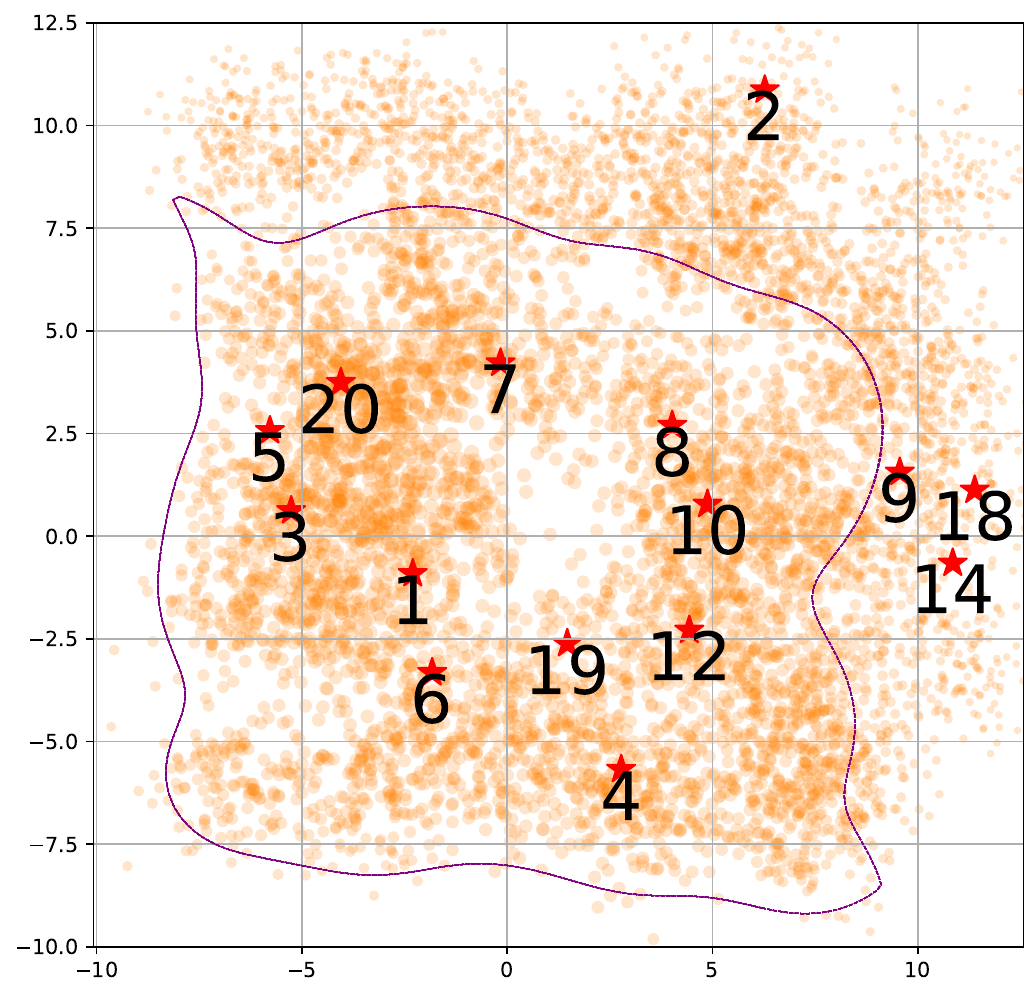}\label{fig:BALD_acquisition}}
  \subfigure[BAIT]{\includegraphics[width=0.26\textwidth]{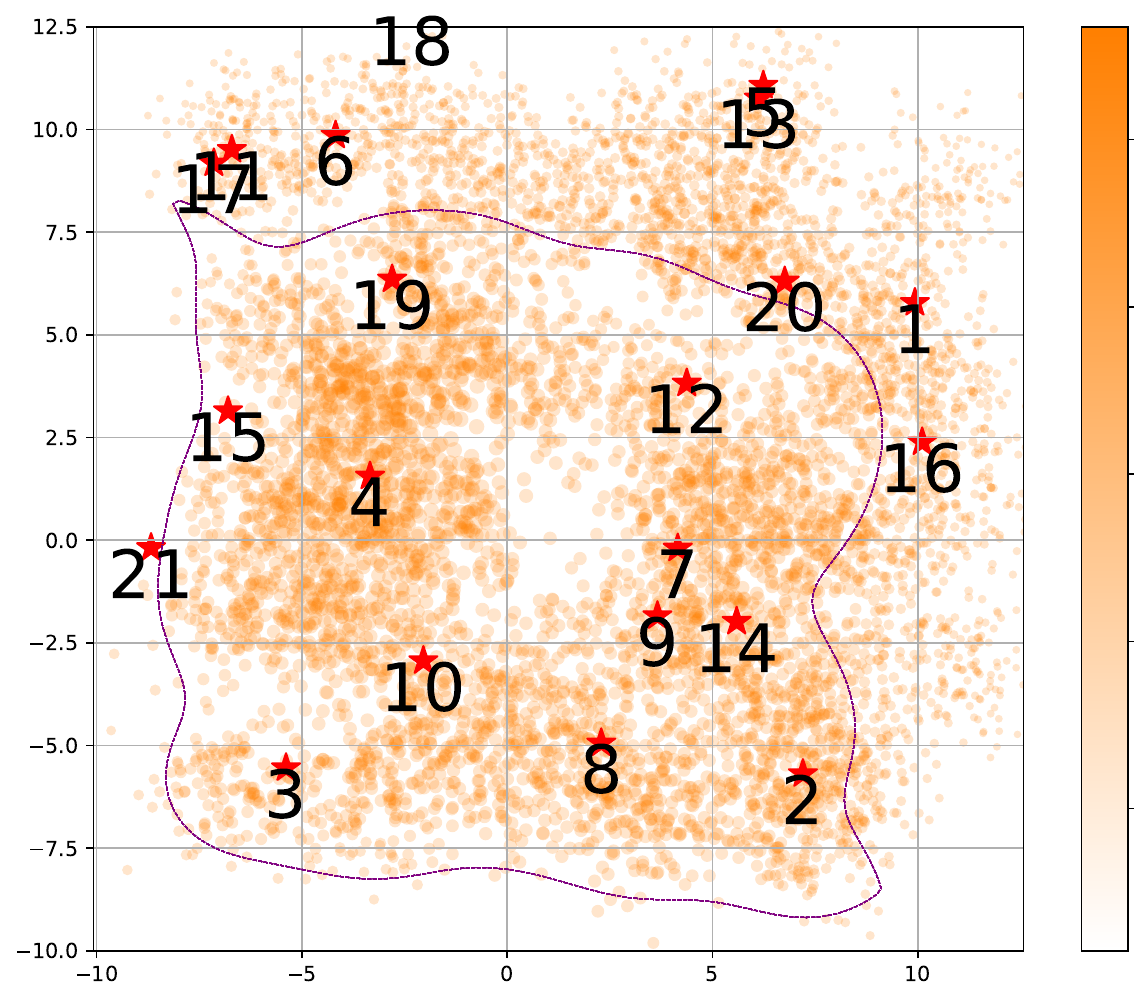}\label{fig:BAIT_acquisition}}
\vspace{-0.2cm}
  \caption{Visualizations of the toy dataset distribution, and the actual acquisition of the data by FCCM, $\mu\rho$BALD, and BAIT. The size of the data point and color from (b) to (c) is adjusted to its associated density, with deeper color representing higher density and vice versa.}
  \label{fig:data_acquisition}
\vspace{-0.2cm}
\end{figure*}

\textbf{Evaluation scheme:} The details of the data acquisition setup is summarized in Table \ref{table:acquisition_summary}, where we initialize the training set $\mathcal{S}$ with the entire labeled samples (denoted as ALL*) from group $t=0$ and start acquisition only on the sample from $t=1$, which simulates scenarios with a significant number of missing counterfactual samples. Then, a fixed step length is enforced at each acquisition step with fifty data acquisition steps. Note that for Toy dataset we do single data acquisition to show fine-granularity results, and batch-mode acquisition for the other two datasets. Each evaluation is done by the estimator trained from the last best checkpoint without completely re-train from scratch.

\subsection{Risk Evaluations}

In Figure \ref{fig:main_results}, it is observed that our proposed method is generally served as the risk lower bound in all three datasets. Its outstanding performance empirically proves the superiority of our method which considers the joint coverage on the factual and counterfactual data throughout the process of the querying. This acquisition scheme leads to a lower estimation risk that can be further explained qualitatively with the two underlying properties: i). querying from a high-density region; ii). considering the satisfactions of the positivity assumption alongside. The first property leverages the generalizability of the trained estimator to the acquired samples' neighborhoods, where a high-density neighborhood accounts for higher loss. The second property considers the pivotal overlapping assumption for treatment effect estimation, for which robust estimation toward an individual can be derived by pairing such an individual's factual or counterfactual part from the overlapping region. Thus, our proposed method outperforms the other baselines by prioritizing the data acquisition toward the overlapping region with high density, while others cannot do both essentially.

Given the above qualitative analysis, it is explainable that the other method designed for data-efficient treatment effect estimation (DTEE), e.g., $\mu\rho$BALD, underperforms our method. For example, although $\mu\rho$BALD bias the data acquisition toward the overlapping region, it does not further embed the property to query from the high-density region, and thus accounts for less risk. The same mentality applies to the other DTEE methods, e.g., MACAL, QHTE, etc. Also, it is observed that the baselines from the general AL field, e.g., LCMD and BAIT, underperform our method by not directly considering the data acquisition toward the overlapping region, which is pivotal for treatment effect estimation. However, it is interestingly observed that BAIT can outperform many DTEE baselines on CMNIST, but it underperforms on TOY and IBM. Furthermore, the current DTEE baselines cannot consistently outperform the general AL methods across various datasets.

\subsection{Acquisition Visualization}

The original distribution of the two-dimensional toy dataset is given in Figure \ref{fig:data_distribution} for reference, where the overlapping region is plotted in a dashed purple line by kernel density estimation \cite{scott2015multivariate}. From Figure \ref{fig:BALD_acquisition} to \ref{fig:BAIT_acquisition}, the density of each data point is calculated in by Line 2 in Algorithm \ref{alg:fccm}, the higher the out-degree of each node, the denser it is as visually observed in the overlapped region. Under the fine granularity of querying one sample at a time, it's observed that in Figure \ref{fig:FCCM_acquisition} FCCM consistently delivers the high priority to query from the overlapping region with high density, while $\mu\rho$BALD is witnessed to have more queries fall outside the overlapping region and less priority on query from the high-density region (query No.2 is fall at the edge of the distribution, where sparse and no overlapping is observed) as shown in Figure \ref{fig:BALD_acquisition}. Also, BAIT's acquisition spreads out the entire data on group $t=1
$ without considering the presence of the counterfactual samples and thus queries the least from the desired region as shown in Figure \ref{fig:BAIT_acquisition}. Their associated performance by 20 acquired samples (40\% consumption of budget) also seen with a significant gap as illustrated in Figure \ref{fig:toy_results}.

\subsection{Ablation and Sensitivity Study\label{section:ablation}}

\begin{table}[h!]
\centering
\caption{Ablation Study of the Counterfactual Covering Radii}\label{table:ablation_study}
\small
\begin{tabularx}{.48\textwidth}{@{}l l *{5}{>{\centering\arraybackslash}X}} 
\toprule
\multicolumn{1}{@{}l}{\multirow{2}{*}{Dataset}} & \multicolumn{1}{l}{\multirow{2}{*}{Method}} & \multicolumn{5}{c}{Consumption of the Total Budget} \\ 
\cmidrule(l){3-7}
& & \multicolumn{1}{c}{1/5} & \multicolumn{1}{c}{2/5} & \multicolumn{1}{c}{3/5} & \multicolumn{1}{c}{4/5} & \multicolumn{1}{c}{5/5} \\
\toprule
\multirow{3}{*}{TOY} & FCCM- & 4.7680 & 2.2496 & 1.3372 & 1.0545 & 0.9024 \\  
& FCCM & 4.7664 & 1.8655 & 1.0978 & 0.8637 & 0.7565 \\ 
\cmidrule(l){2-7}
& Gain &+0\%&+17\%&+18\%&+18\%&+16\%\\
\midrule
\multirow{3}{*}{IBM} & FCCM- & 0.4745 & 0.4088 & 0.3797 & 0.3512 & 0.3291 \\  
& FCCM & 0.4813 & 0.4132 & 0.3845 & 0.3507 & 0.3286 \\ 
\cmidrule(l){2-7}
& Gain &-1\%&-1\%&-1\%&+0\%&+0\%\\
\midrule
\multirow{3}{*}{CMNIST} & FCCM- & 2.9250 & 2.6627 & 2.4652 & 2.3105 & 2.2073 \\  
& FCCM & 2.2207 & 1.8681 & 1.7735 & 1.7344 & 1.6790 \\ 
\cmidrule(l){2-7}
& Gain &+24\%&+30\%&+28\%&+25\%&+24\%\\
\bottomrule
\end{tabularx}
\end{table}

The ablation study is conducted to study the effect of maximizing the counterfactual coverage, e.g., $P(\mathcal{A}^{t=1}_{CF})$. Essentially, two models are compared, our proposed method FCCM, and the $\text{FCCM}^{-}$ (by setting counterfactual covering radii $\delta_{(1,0)}=0$ and $\delta_{(0,1)}=0$, and this action tailors the ProbCover \cite{yehuda2022active} to align with the context for binary-class AL). In Table \ref{table:ablation_study}, we capture five stages of the total query steps, and calculate the performance gain by $\frac{\sqrt{\epsilon_{\text{PEHE}}}_{\text{FCCM}^{-}}-\sqrt{\epsilon_{\text{PEHE}}}_{\text{FCCM}}}{\sqrt{\epsilon_{\text{PEHE}}}_{\text{FCCM}^{-}}}\times100\%$. It is noted that the performance gain on Toy and CMNIST datasets are significant, however, with neck-to-neck performance observed on the IBM dataset. This phenomenon is explainable since the treated and control distributions on IBM are heavily overlapped such that the high-density region on the treated sample ($t=1$) is in fact the high-density region on the counterfactual side ($t=0$). We provide the visualization of the IBM and CMNIST in Appendix \ref{appendix:discussion_ablation} for further discussions of the underlying rationale. Additionally, the sensitivity analysis of $\alpha$ and the covering radii $\delta_{(1,1)}$ and $\delta_{(1,0)}$ is presented in Appendix \ref{appendix:sensitivity_study}. There, we generally observe that a non-zero weight $\alpha$ is influential for dataset with less overlap, and that a larger covering radius can lead to greater risk reduction in the early stage, albeit at the cost of reduced performance in the later stage.

\section{Conclusion}

We formalize the data-efficient treatment effect estimation problem under a solid theoretical framework, where the convergence of the risk upper bound is governed by the reduction of the derived factual and counterfactual covering radii. To reduce the bound, we propose a greedy radius reduction algorithm, which is $2-OPT$ under an idealized data distribution assumption. To generalize to more realistic data distributions for higher radius reduction, we further propose FCCM, which transforms the optimization objective into the factual and counterfactual coverage maximization with a $(1-\frac{1}{e})$--approximation to the full coverage constraint. Also, benchmarking with other baselines further proves the superiority of FCCM on solving the data-efficient treatment effect estimation problem.


\section*{Limitation}

FCCM is designed to better handle the partially overlapped data for a quicker bound reduction while maintaining high coverage (Figure \ref{fig:comparision}). As such, in scenarios where the two treatment groups have non-overlapping regions in the raw feature space (e.g., biased treatment assignment), the data acquisition of FCCM will be challenged as there are no overlapping, counterfactual pairs to be identified. One possible remedy is to operate FCCM in the latent space where the inter-group distributions are aligned by methods like \cite{shalit2017estimating,zhang2020learning,wang2024optimal}, but it also offsets the model-independent advantage of FCCM.

\section*{Impact Statement}

This paper presents work whose goal is to advance the field of 
Machine Learning. There are many potential societal consequences 
of our work, none which we feel must be specifically highlighted here.


\bibliography{example_paper}

\begin{thebibliography}{59}
\providecommand{\natexlab}[1]{#1}
\providecommand{\url}[1]{\texttt{#1}}
\expandafter\ifx\csname urlstyle\endcsname\relax
  \providecommand{\doi}[1]{doi: #1}\else
  \providecommand{\doi}{doi: \begingroup \urlstyle{rm}\Url}\fi

\bibitem[Addanki et~al.(2022)Addanki, Arbour, Mai, Musco, and Rao]{addanki2022sample}
Addanki, R., Arbour, D., Mai, T., Musco, C., and Rao, A.
\newblock Sample constrained treatment effect estimation.
\newblock \emph{NeurIPS}, 35:\penalty0 5417--5430, 2022.

\bibitem[Alaa \& Van Der~Schaar(2017)Alaa and Van Der~Schaar]{alaa2017bayesian}
Alaa, A.~M. and Van Der~Schaar, M.
\newblock Bayesian inference of individualized treatment effects using multi-task gaussian processes.
\newblock \emph{NeurIPS}, 2017.

\bibitem[Ash et~al.(2021)Ash, Goel, Krishnamurthy, and Kakade]{ash2021gone}
Ash, J., Goel, S., Krishnamurthy, A., and Kakade, S.
\newblock Gone fishing: Neural active learning with fisher embeddings.
\newblock \emph{NeurIPS}, 34:\penalty0 8927--8939, 2021.

\bibitem[Ash et~al.(2019)Ash, Zhang, Krishnamurthy, Langford, and Agarwal]{ash2019deep}
Ash, J.~T., Zhang, C., Krishnamurthy, A., Langford, J., and Agarwal, A.
\newblock Deep batch active learning by diverse, uncertain gradient lower bounds.
\newblock In \emph{International Conference on Learning Representations}, 2019.

\bibitem[Chen et~al.(2024)Chen, Cai, Yang, Qiao, Yan, Li, and Hao]{chen2024doubly}
Chen, W., Cai, R., Yang, Z., Qiao, J., Yan, Y., Li, Z., and Hao, Z.
\newblock Doubly robust causal effect estimation under networked interference via targeted learning.
\newblock \emph{arXiv preprint arXiv:2405.03342}, 2024.

\bibitem[Connolly et~al.(2023)Connolly, Moore, Schwedes, Adam, Willis, Feige, and Frye]{connolly2023task}
Connolly, B., Moore, K., Schwedes, T., Adam, A., Willis, G., Feige, I., and Frye, C.
\newblock Task-specific experimental design for treatment effect estimation.
\newblock In \emph{International Conference on Machine Learning}, pp.\  6384--6401. PMLR, 2023.

\bibitem[Cook et~al.(1998)Cook, Cunningham, Pulleyblank, and Schrijver]{cook1998combi}
Cook, W.~J., Cunningham, W.~H., Pulleyblank, W.~R., and Schrijver, A.
\newblock \emph{Combinatorial Optimization}.
\newblock Springer, 1998.

\bibitem[Deng et~al.(2011)Deng, Pineau, and Murphy]{deng2011active}
Deng, K., Pineau, J., and Murphy, S.
\newblock Active learning for personalizing treatment.
\newblock In \emph{2011 IEEE Symposium on Adaptive Dynamic Programming and Reinforcement Learning (ADPRL)}, pp.\  32--39. IEEE, 2011.

\bibitem[Dinitz(2019)]{dinitz2019lecture4}
Dinitz, M.
\newblock Lecture 4: Approximation algorithms - vertex cover and set cover.
\newblock \url{https://www.cs.jhu.edu/~mdinitz/classes/ApproxAlgorithms/Spring2019/Lectures/lecture4.pdf}, 2019.

\bibitem[Fujii \& Kashima(2016)Fujii and Kashima]{fujii2016budgeted}
Fujii, K. and Kashima, H.
\newblock Budgeted stream-based active learning via adaptive submodular maximization.
\newblock \emph{NeurIPS}, 29, 2016.

\bibitem[Gal et~al.(2017)Gal, Islam, and Ghahramani]{gal2017deep}
Gal, Y., Islam, R., and Ghahramani, Z.
\newblock Deep bayesian active learning with image data.
\newblock In \emph{International conference on machine learning}, pp.\  1183--1192. PMLR, 2017.

\bibitem[Ghadiri et~al.(2024)Ghadiri, Arbour, Mai, Musco, and Rao]{ghadiri2024finite}
Ghadiri, M., Arbour, D., Mai, T., Musco, C., and Rao, A.~B.
\newblock Finite population regression adjustment and non-asymptotic guarantees for treatment effect estimation.
\newblock \emph{NeurIPS}, 36, 2024.

\bibitem[Hill(2011)]{hill2011bayesian}
Hill, J.~L.
\newblock Bayesian nonparametric modeling for causal inference.
\newblock \emph{Journal of Computational and Graphical Statistics}, 20\penalty0 (1):\penalty0 217--240, 2011.

\bibitem[Holzm{\"u}ller et~al.(2023)Holzm{\"u}ller, Zaverkin, K{\"a}stner, and Steinwart]{holzmuller2023framework}
Holzm{\"u}ller, D., Zaverkin, V., K{\"a}stner, J., and Steinwart, I.
\newblock A framework and benchmark for deep batch active learning for regression.
\newblock \emph{Journal of Machine Learning Research}, 24\penalty0 (164):\penalty0 1--81, 2023.

\bibitem[Imbens \& Rubin(2015)Imbens and Rubin]{imbens2015causal}
Imbens, G.~W. and Rubin, D.~B.
\newblock \emph{Causal inference in statistics, social, and biomedical sciences}.
\newblock Cambridge University Press, 2015.

\bibitem[Jesson et~al.(2020)Jesson, Mindermann, Shalit, and Gal]{jesson2020identifying}
Jesson, A., Mindermann, S., Shalit, U., and Gal, Y.
\newblock Identifying causal-effect inference failure with uncertainty-aware models.
\newblock \emph{NeurIPS}, 33:\penalty0 11637--11649, 2020.

\bibitem[Jesson et~al.(2021{\natexlab{a}})Jesson, Mindermann, Gal, and Shalit]{jesson2021quantifying}
Jesson, A., Mindermann, S., Gal, Y., and Shalit, U.
\newblock Quantifying ignorance in individual-level causal-effect estimates under hidden confounding.
\newblock In \emph{International Conference on Machine Learning}, pp.\  4829--4838. PMLR, 2021{\natexlab{a}}.

\bibitem[Jesson et~al.(2021{\natexlab{b}})Jesson, Tigas, van Amersfoort, Kirsch, Shalit, and Gal]{jesson2021causal}
Jesson, A., Tigas, P., van Amersfoort, J., Kirsch, A., Shalit, U., and Gal, Y.
\newblock Causal-bald: Deep bayesian active learning of outcomes to infer treatment-effects from observational data.
\newblock \emph{NeurIPS}, 34:\penalty0 30465--30478, 2021{\natexlab{b}}.

\bibitem[Kallus(2020)]{kallus2020deepmatch}
Kallus, N.
\newblock Deepmatch: Balancing deep covariate representations for causal inference using adversarial training.
\newblock In \emph{International Conference on Machine Learning}, pp.\  5067--5077. PMLR, 2020.

\bibitem[Kirsch et~al.(2021)Kirsch, Farquhar, Atighehchian, Jesson, Branchaud-Charron, and Gal]{kirsch2021stochastic}
Kirsch, A., Farquhar, S., Atighehchian, P., Jesson, A., Branchaud-Charron, F., and Gal, Y.
\newblock Stochastic batch acquisition: A simple baseline for deep active learning.
\newblock \emph{arXiv preprint arXiv:2106.12059}, 2021.

\bibitem[Kohavi \& Longbotham(2015)Kohavi and Longbotham]{kohavi2015online}
Kohavi, R. and Longbotham, R.
\newblock Online controlled experiments and a/b tests.
\newblock \emph{Encyclopedia of machine learning and data mining}, pp.\  1--11, 2015.

\bibitem[LeCun(1998)]{lecun1998mnist}
LeCun, Y.
\newblock The mnist database of handwritten digits.
\newblock \emph{http://yann. lecun. com/exdb/mnist/}, 1998.

\bibitem[Lin et~al.(2023)Lin, Zhang, Lu, Bao, Takeuchi, and Kashima]{lin2023estimating}
Lin, X., Zhang, G., Lu, X., Bao, H., Takeuchi, K., and Kashima, H.
\newblock Estimating treatment effects under heterogeneous interference.
\newblock In \emph{Joint European Conference on Machine Learning and Knowledge Discovery in Databases}, pp.\  576--592. Springer, 2023.

\bibitem[Lin et~al.(2024)Lin, Zhang, Lu, and Kashima]{lin2024treatment}
Lin, X., Zhang, G., Lu, X., and Kashima, H.
\newblock Treatment effect estimation under unknown interference.
\newblock In \emph{Pacific-Asia Conference on Knowledge Discovery and Data Mining}, pp.\  28--42. Springer, 2024.

\bibitem[Lin et~al.(2025)Lin, Bao, Cui, Takeuchi, and Kashima]{lin2025scalable}
Lin, X., Bao, H., Cui, Y., Takeuchi, K., and Kashima, H.
\newblock Scalable individual treatment effect estimator for large graphs.
\newblock \emph{Machine Learning}, 114\penalty0 (1):\penalty0 1--19, 2025.

\bibitem[Louizos et~al.(2017)Louizos, Shalit, Mooij, Sontag, Zemel, and Welling]{louizos2017causal}
Louizos, C., Shalit, U., Mooij, J.~M., Sontag, D., Zemel, R., and Welling, M.
\newblock Causal effect inference with deep latent-variable models.
\newblock \emph{NeurIPS}, 30, 2017.

\bibitem[Ma et~al.(2022)Ma, Wan, Yang, Li, Hecht, and Teevan]{ma2022learning}
Ma, J., Wan, M., Yang, L., Li, J., Hecht, B., and Teevan, J.
\newblock Learning causal effects on hypergraphs.
\newblock In \emph{Proceedings of the 28th ACM SIGKDD Conference on Knowledge Discovery and Data Mining}, pp.\  1202--1212, 2022.

\bibitem[Ma \& Tresp(2021)Ma and Tresp]{ma2021causal}
Ma, Y. and Tresp, V.
\newblock Causal inference under networked interference and intervention policy enhancement.
\newblock In \emph{International Conference on Artificial Intelligence and Statistics}, pp.\  3700--3708. PMLR, 2021.

\bibitem[MacKay(2020)]{mackay2020government}
MacKay, D.
\newblock Government policy experiments and the ethics of randomization.
\newblock \emph{Philosophy \& Public Affairs}, 48\penalty0 (4):\penalty0 319--352, 2020.

\bibitem[Pearl(2009)]{pearl2009causality}
Pearl, J.
\newblock \emph{Causality}.
\newblock Cambridge university press, 2009.

\bibitem[Pilat et~al.(2015)Pilat, Frech, Wagner, Kr{\"u}ger, Hillebrecht, Pons-K{\"u}hnemann, Scheibelhut, B{\"o}deker, and Mooren]{pilat2015exploring}
Pilat, C., Frech, T., Wagner, A., Kr{\"u}ger, K., Hillebrecht, A., Pons-K{\"u}hnemann, J., Scheibelhut, C., B{\"o}deker, R.-H., and Mooren, F.-C.
\newblock Exploring effects of a natural combination medicine on exercise-induced inflammatory immune response: A double-blind rct.
\newblock \emph{Scandinavian Journal of Medicine \& Science in Sports}, 25\penalty0 (4):\penalty0 534--542, 2015.

\bibitem[Pinsler et~al.(2019)Pinsler, Gordon, Nalisnick, and Hern{\'a}ndez-Lobato]{pinsler2019bayesian}
Pinsler, R., Gordon, J., Nalisnick, E., and Hern{\'a}ndez-Lobato, J.~M.
\newblock Bayesian batch active learning as sparse subset approximation.
\newblock \emph{NeurIPS}, 32, 2019.

\bibitem[Qin et~al.(2021)Qin, Wang, and Zhou]{qin2021budgeted}
Qin, T., Wang, T.-Z., and Zhou, Z.-H.
\newblock Budgeted heterogeneous treatment effect estimation.
\newblock In \emph{International Conference on Machine Learning}, pp.\  8693--8702. PMLR, 2021.

\bibitem[Rakesh et~al.(2018)Rakesh, Guo, Moraffah, Agarwal, and Liu]{rakesh2018linked}
Rakesh, V., Guo, R., Moraffah, R., Agarwal, N., and Liu, H.
\newblock Linked causal variational autoencoder for inferring paired spillover effects.
\newblock In \emph{Proceedings of the 27th ACM International Conference on Information and Knowledge Management}, pp.\  1679--1682, 2018.

\bibitem[Ren et~al.(2021)Ren, Xiao, Chang, Huang, Li, Gupta, Chen, and Wang]{ren2021survey}
Ren, P., Xiao, Y., Chang, X., Huang, P.-Y., Li, Z., Gupta, B.~B., Chen, X., and Wang, X.
\newblock A survey of deep active learning.
\newblock \emph{ACM computing surveys (CSUR)}, 54\penalty0 (9):\penalty0 1--40, 2021.

\bibitem[Robins et~al.(1994)Robins, Rotnitzky, and Zhao]{robins1994estimation}
Robins, J.~M., Rotnitzky, A., and Zhao, L.~P.
\newblock Estimation of regression coefficients when some regressors are not always observed.
\newblock \emph{Journal of the American statistical Association}, 89\penalty0 (427):\penalty0 846--866, 1994.

\bibitem[Rosenbaum \& Rubin(1983)Rosenbaum and Rubin]{rosenbaum1983central}
Rosenbaum, P.~R. and Rubin, D.~B.
\newblock The central role of the propensity score in observational studies for causal effects.
\newblock \emph{Biometrika}, 70\penalty0 (1):\penalty0 41--55, 1983.

\bibitem[Scott(2015)]{scott2015multivariate}
Scott, D.~W.
\newblock \emph{Multivariate density estimation: theory, practice, and visualization}.
\newblock John Wiley \& Sons, 2015.

\bibitem[Sener \& Savarese(2018)Sener and Savarese]{sener2018active}
Sener, O. and Savarese, S.
\newblock Active learning for convolutional neural networks: A core-set approach.
\newblock In \emph{International Conference on Learning Representations}, 2018.

\bibitem[Settles(2009)]{settles2009active}
Settles, B.
\newblock Active learning literature survey.
\newblock \emph{Computer Sciences Technical Report}, 2009.

\bibitem[Shalit et~al.(2017)Shalit, Johansson, and Sontag]{shalit2017estimating}
Shalit, U., Johansson, F.~D., and Sontag, D.
\newblock Estimating individual treatment effect: generalization bounds and algorithms.
\newblock In \emph{International Conference on Machine Learning}, pp.\  3076--3085. PMLR, 2017.

\bibitem[Shi et~al.(2019)Shi, Blei, and Veitch]{shi2019adapting}
Shi, C., Blei, D., and Veitch, V.
\newblock Adapting neural networks for the estimation of treatment effects.
\newblock \emph{NeurIPS}, 32, 2019.

\bibitem[Shimoni et~al.(2018)Shimoni, Yanover, Karavani, and Goldschmnidt]{shimoni2018benchmarking}
Shimoni, Y., Yanover, C., Karavani, E., and Goldschmnidt, Y.
\newblock Benchmarking framework for performance-evaluation of causal inference analysis.
\newblock \emph{arXiv preprint arXiv:1802.05046}, 2018.

\bibitem[Smith(1918)]{smith1918standard}
Smith, K.
\newblock On the standard deviations of adjusted and interpolated values of an observed polynomial function and its constants and the guidance they give towards a proper choice of the distribution of observations.
\newblock \emph{Biometrika}, 12\penalty0 (1/2):\penalty0 1--85, 1918.

\bibitem[Sundin et~al.(2019)Sundin, Schulam, Siivola, Vehtari, Saria, and Kaski]{sundin2019active}
Sundin, I., Schulam, P., Siivola, E., Vehtari, A., Saria, S., and Kaski, S.
\newblock Active learning for decision-making from imbalanced observational data.
\newblock In \emph{International conference on machine learning}, pp.\  6046--6055. PMLR, 2019.

\bibitem[Tsang et~al.(2005)Tsang, Kwok, Cheung, and Cristianini]{tsang2005core}
Tsang, I.~W., Kwok, J.~T., Cheung, P.-M., and Cristianini, N.
\newblock Core vector machines: Fast svm training on very large data sets.
\newblock \emph{Journal of Machine Learning Research}, 6\penalty0 (4), 2005.

\bibitem[Van~Amersfoort et~al.(2021)Van~Amersfoort, Smith, Jesson, Key, and Gal]{van2021feature}
Van~Amersfoort, J., Smith, L., Jesson, A., Key, O., and Gal, Y.
\newblock On feature collapse and deep kernel learning for single forward pass uncertainty.
\newblock \emph{arXiv preprint arXiv:2102.11409}, 2021.

\bibitem[Vapnik(1999)]{vapnik1999overview}
Vapnik, V.~N.
\newblock An overview of statistical learning theory.
\newblock \emph{IEEE transactions on neural networks}, 10\penalty0 (5):\penalty0 988--999, 1999.

\bibitem[Wang et~al.(2024)Wang, Fan, Chen, Li, Liu, Liu, Dai, Wang, Dong, and Tang]{wang2024optimal}
Wang, H., Fan, J., Chen, Z., Li, H., Liu, W., Liu, T., Dai, Q., Wang, Y., Dong, Z., and Tang, R.
\newblock Optimal transport for treatment effect estimation.
\newblock \emph{NeurIPS}, 36, 2024.

\bibitem[Wang et~al.(2015)Wang, Hu, Yuan, and Lu]{wang2015active}
Wang, L., Hu, X., Yuan, B., and Lu, J.
\newblock Active learning via query synthesis and nearest neighbour search.
\newblock \emph{Neurocomputing}, 147:\penalty0 426--434, 2015.

\bibitem[Wen et~al.(2025)Wen, Chen, Ye, Chai, Sadiq, and Yin]{wen2024progressive}
Wen, H., Chen, T., Ye, G., Chai, L.~K., Sadiq, S., and Yin, H.
\newblock Progressive generalization risk reduction for data-efficient causal effect estimation.
\newblock In \emph{Proceedings of 31st ACM SIGKDD Conference on Knowledge Discovery and Data Mining}, volume V.1, pp.\  1575--1586, 2025.

\bibitem[Wilson et~al.(2016)Wilson, Hu, Salakhutdinov, and Xing]{wilson2016deep}
Wilson, A.~G., Hu, Z., Salakhutdinov, R., and Xing, E.~P.
\newblock Deep kernel learning.
\newblock In \emph{Artificial intelligence and statistics}, pp.\  370--378. PMLR, 2016.

\bibitem[Wolf(2011)]{wolf2011facility}
Wolf, G.~W.
\newblock Facility location: concepts, models, algorithms and case studies, 2011.

\bibitem[Wu(2018)]{wu2018pool}
Wu, D.
\newblock Pool-based sequential active learning for regression.
\newblock \emph{IEEE transactions on neural networks and learning systems}, 30\penalty0 (5):\penalty0 1348--1359, 2018.

\bibitem[Yao et~al.(2018)Yao, Li, Li, Huai, Gao, and Zhang]{yao2018representation}
Yao, L., Li, S., Li, Y., Huai, M., Gao, J., and Zhang, A.
\newblock Representation learning for treatment effect estimation from observational data.
\newblock \emph{NeurIPS}, 31, 2018.

\bibitem[Yehuda et~al.(2022)Yehuda, Dekel, Hacohen, and Weinshall]{yehuda2022active}
Yehuda, O., Dekel, A., Hacohen, G., and Weinshall, D.
\newblock Active learning through a covering lens.
\newblock \emph{NeurIPS}, 35:\penalty0 22354--22367, 2022.

\bibitem[Yoon et~al.(2018)Yoon, Jordon, and Van Der~Schaar]{yoon2018ganite}
Yoon, J., Jordon, J., and Van Der~Schaar, M.
\newblock Ganite: Estimation of individualized treatment effects using generative adversarial nets.
\newblock In \emph{International Conference on Learning Representations}, 2018.

\bibitem[Zhan et~al.(2022)Zhan, Wang, Huang, Xiong, Dou, and Chan]{zhan2022comparative}
Zhan, X., Wang, Q., Huang, K.-h., Xiong, H., Dou, D., and Chan, A.~B.
\newblock A comparative survey of deep active learning.
\newblock \emph{arXiv preprint arXiv:2203.13450}, 2022.

\bibitem[Zhang et~al.(2020)Zhang, Bellot, and Schaar]{zhang2020learning}
Zhang, Y., Bellot, A., and Schaar, M.
\newblock Learning overlapping representations for the estimation of individualized treatment effects.
\newblock In \emph{International Conference on Artificial Intelligence and Statistics}, pp.\  1005--1014. PMLR, 2020.

\end{thebibliography}
\bibliographystyle{icml2025}

\newpage
\appendix
\onecolumn

\section{Theory}

\subsection{Proof of Theorem \ref{theorem:overall}\label{appendix:theorem_1}}

\textbf{Assumption \ref{assumption:Lipschitz}} (Lipschitz Continuity)\textbf{.} \emph{Assume that the conditional probability density function $p^{t}(y|\mathbf{x})$ is $\lambda_{t}$-Lipschitz, the squared loss function $l$ is $\lambda_{l}$-Lipschitz and $l$ is further bounded by $L_{l}$.
}

\textbf{Assumption \ref{assumption:kappa}} (Constant $\kappa$)\textbf{.} \emph{
    Let $\mathcal{H} = \{h | h: \mathcal{X}\rightarrow \mathbb{R}\}$ be a family of functions and $f:\mathcal{X}\times\mathcal{T}\rightarrow\mathcal{Y}$ be the hypothesis. Assume that there exists a constant $\kappa>0$, such that $h_{f}(\mathbf{x},t):=\frac{1}{\kappa}l_{f}(\mathbf{x},t)\in \mathcal{H}$.
}

\textbf{Theorem \ref{theorem:overall}.} \emph{Let $\mathbf{x}$ be sampled i.i.d. $n$ times from domain $\mathcal{X}$. Under Assumption \ref{assumption:Lipschitz} and Assumption \ref{assumption:kappa}, with probability at least $1-\gamma$, where $\gamma\in(0,1)$,  the subset generalization gap $\Delta$ is upper-bounded as:
    \begin{equation}
    \begin{split}
        &\left|\frac{1}{n}\sum^{n}_{i=1}\xi(\mathbf{x}_{i};f_{\mathcal{S}})-\frac{1}{|\mathcal{S}|}\sum^{|\mathcal{S}|}_{j=1}l(\mathbf{x}_{i}, y_{i}, t_{i};f_{\mathcal{S}})\right|\\
        \leq&\sum_{t\in\{0,1\}}\kappa_{t}\,\left(\delta_{(t,t)}
        +\delta_{(t,1-t)}\right) + 2\,\kappa_{\mathcal{H}}+\sqrt{\frac{L^{2}_{l}\log \frac{1}{\gamma}}{2n}},
    \end{split}
    \end{equation} where the constants $\kappa_{t}=2\,(\lambda_{l}+\frac{1}{3}\lambda_{t}L^{\frac{3}{2}}_{l})$, and $\kappa_{\mathcal{H}}=\kappa\cdot\text{IPM}_{\mathcal{H}}(p^{t=1}(\mathbf{x}), p^{t=0}(\mathbf{x}))$ with $\text{IPM}_{\mathcal{H}}(\cdot,\cdot)$ denotes the integral probability metric induced by $\mathcal{H}$ and $p^{t}(\mathbf{x})$ denotes the density distribution of treatment group $t$.
}

\begin{proof}[Proof of Theorem \ref{theorem:overall}]
\begin{subequations}

The proof is done in three main steps: Firstly, we bound the expected value of the interested term $\frac{1}{n}\sum^{n}_{i=1}\xi(\mathbf{x}_{i};f_{\mathcal{S}})$ with factual and counterfactual loss, namely, $\epsilon_{F}$ and $\epsilon_{CF}$ over the domain of the pool set. Secondly, we constrain the $\epsilon_{F}$ with the factual covering radius $\delta_{(t,t)}$ and the $\epsilon_{CF}$ with the counterfactual covering radius $\delta_{(t,1-t)}$. Lastly, we conclude the probabilistic bound by Hoeffding's inequality.

\begin{align}
        \mathbb{E}\left[\frac{1}{n}\sum^{n}_{i=1}\xi(\mathbf{x}_{i};f_{\mathcal{S}})\right]&\leq\,2\,(\epsilon_{F}+\epsilon_{CF})\label{eq:decomposition}\\
        &\leq\sum_{t\in\{0,1\}}2\,\mu_{t}\left(\epsilon_{\mathcal{S}_{t}}+\delta_{(t,t)}
        (\lambda_{l}+\frac{1}{3}\lambda_{t}L^{\frac{3}{2}}_{l})\right)+\sum_{t\in\{0,1\}}2\,\mu_{1-t}\left(\epsilon_{\mathcal{S}_{t}}+\delta_{(t,1-t)}(\lambda_{l}+\frac{1}{3}\lambda_{t}L^{\frac{3}{2}}_{l})\right) + 2\,\kappa_{\mathcal{H}}\label{eq:F_CF_decompisition}\\
        &\leq\sum_{t\in\{0,1\}}2\,(\lambda_{l}+\frac{1}{3}\lambda_{t}L^{\frac{3}{2}}_{l})\,\left(\delta_{(t,t)}
        +\delta_{(t,1-t)}\right) + 2\,\kappa_{\mathcal{H}}.\label{eq:general_form}
    \end{align}
    The inequality (\ref{eq:decomposition}) is by Lemma \ref{lemma:expected_risk}, the inequality (\ref{eq:F_CF_decompisition}) is by Lemma \ref{lemma:factual_error} and \ref{lemma:counterfactual_error}, the equality in (\ref{eq:general_form}) is by the zero training loss assumption and the fact that $\mu_{t}\leq1,\forall t\in\{0,1\}$. With Hoeffding's inequality, we have:
   \begin{align}
        \mathbb{P}\left(\frac{1}{n}\sum^{n}_{i=1}\xi(\mathbf{x}_{i};f_{\mathcal{S}})-\mathbb{E}\left[\frac{1}{n}\sum^{n}_{i=1}\xi(\mathbf{x}_{i};f_{\mathcal{S}})\right]\geq\epsilon
        \right)=\exp{\left(-\frac{2n\epsilon^{2}}{L^{2}_{l}}\right)}.
   \end{align}

   By setting $\gamma=\exp{\left(-\frac{2n\epsilon^{2}}{L^{2}_{l}}\right)}$, we solve for the bounding gap $\epsilon=\sqrt{\frac{L^{2}_{l}\log\frac{1}{\gamma}}{2n}}$. Thus, we can derive that with probability of at least $1-\gamma$, the following inequality holds:
    \begin{equation}
        \frac{1}{n}\sum^{n}_{i=1}\xi(\mathbf{x}_{i};f_{\mathcal{S}})\leq\mathbb{E}\left[\frac{1}{n}\sum^{n}_{i=1}\xi(\mathbf{x}_{i};f_{\mathcal{S}})\right]+\sqrt{\frac{L^{2}_{l}\log\frac{1}{\gamma}}{2n}}\label{eq:hoeffdings_inequality}
    \end{equation}

   To conclude, we have:
    \begin{align}
        &\left|\frac{1}{n}\sum^{n}_{i=1}\xi(\mathbf{x}_{i};f_{\mathcal{S}})-\frac{1}{|\mathcal{S}|}\sum^{|\mathcal{S}|}_{j=1}l(\mathbf{x}_{i}, y_{i}, t_{i};f_{\mathcal{S}})\right|\\
        =&\frac{1}{n}\sum^{n}_{i=1}\xi(\mathbf{x}_{i};f_{\mathcal{S}})\label{eq:zero_loss}\\
        \leq&\mathbb{E}\left[\frac{1}{n}\sum^{n}_{i=1}\xi(\mathbf{x}_{i};f_{\mathcal{S}})\right]+\sqrt{\frac{L^{2}_{l}\log\frac{1}{\gamma}}{2n}}\label{eq:hoeffding's}\\
        \leq&\sum_{t\in\{0,1\}}2\,(\lambda_{l}+\frac{1}{3}\lambda_{t}L^{\frac{3}{2}}_{l})\,\left(\delta_{(t,t)}
        +\delta_{(t,1-t)}\right) + 2\,\kappa_{\mathcal{H}}+\sqrt{\frac{L^{2}_{l}\log \frac{1}{\gamma}}{2n}}.\label{eq:conclusion}
    \end{align}
\end{subequations}

The equality (\ref{eq:zero_loss}) is by the zero training loss assumption, the inequality (\ref{eq:hoeffding's}) is by incorporating (\ref{eq:hoeffdings_inequality}), the final inequality (\ref{eq:conclusion}) is concluded by incorporating (\ref{eq:general_form}).

\textbf{Discussion on the bound}: Given the i.i.d. sampled pool set from the domain $\mathcal{X}$, it is noted that in the final probabilistic bound, if the sampled two distributions were identical and we selected the entire pool set for training, simply by definition, we have both the factual and counterfactual covering radius be completely zero, and also the distributional discrepancy term $\kappa_{\mathcal{H}}$ counted by IPM be completed zero, leaving the tightness of the risk upper bound solely depends on the size of the pool set $n$ and for the given probability threshold $\gamma$.

\end{proof} 

\subsection{Proof of Lemma \ref{lemma:expected_risk}\label{appendix:lemma_1}}

\begin{definition}
    Given the loss metric $l$, the expected factual loss $\epsilon_{F}$ and counterfactual loss $\epsilon_{CF}$ are defined in a manner consistent with \cite{shalit2017estimating} as follows: 
    \begin{equation}
    \begin{split}
        &\epsilon_{F} = \int_{\mathcal{X}\times\mathcal{T}}l(\mathbf{x},t)p(\mathbf{x},t)d\mathbf{x}dt,\\
        &\epsilon_{CF} = \int_{\mathcal{X}\times\mathcal{T}}l(\mathbf{x},t)p(\mathbf{x},1-t)d\mathbf{x}dt.
    \end{split}
    \end{equation}
\end{definition}

\begin{lemma}\label{lemma:expected_risk}
    Let $\mathbf{x}$ to be sampled i.i.d. $n$ times from the domain $\mathcal{X}$. With the two-headed trained model $f_{\mathcal{S}}=\{\hat{f}^{t=1},\hat{f}^{t=0}\}$ on the selected subset $\mathcal{S}$, where $\hat{f}^{t=1},\hat{f}^{t=0}:\mathcal{X}\rightarrow\mathcal{Y}$ are the estimators for the treatment effect $y^{t}$ respectively. The expected value of $\frac{1}{n}\sum^{n}_{i=1}\xi(\mathbf{x}_{i};f_{\mathcal{S}})$ is upper-bounded as follows:
    \begin{equation}
        \mathbb{E}\left[\frac{1}{n}\sum^{n}_{i=1}\xi(\mathbf{x}_{i};f_{\mathcal{S}})\right]\leq 2(\epsilon_{F}+\epsilon_{CF}).
    \end{equation}
\end{lemma}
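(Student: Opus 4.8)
The plan is to reduce the stated bound to a pointwise inequality on $\xi(\mathbf{x};f_{\mathcal{S}})$ and then integrate it against the covariate marginal. First I would observe that, since $\tau(\mathbf{x})=\mathbb{E}[Y^{t=1}-Y^{t=0}\mid\mathbf{x}]$ is a deterministic function of $\mathbf{x}$ and the $\mathbf{x}_{i}$ are drawn i.i.d. from $p(\mathbf{x})$, linearity of expectation collapses the sample average into a single population integral, $\mathbb{E}[\frac{1}{n}\sum_{i=1}^{n}\xi(\mathbf{x}_{i};f_{\mathcal{S}})]=\int_{\mathcal{X}}\xi(\mathbf{x};f_{\mathcal{S}})\,p(\mathbf{x})\,d\mathbf{x}$. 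Thus it suffices to bound this integral.

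The core pointwise step comes next. Writing $m^{t}(\mathbf{x})=\mathbb{E}[Y^{t}\mid\mathbf{x}]$ and $e^{t}(\mathbf{x})=\hat{f}^{t}(\mathbf{x})-m^{t}(\mathbf{x})$, the ITE estimation error factors exactly as $\hat{\tau}(\mathbf{x})-\tau(\mathbf{x})=e^{t=1}(\mathbf{x})-e^{t=0}(\mathbf{x})$, so $\xi(\mathbf{x};f_{\mathcal{S}})=(e^{t=1}(\mathbf{x})-e^{t=0}(\mathbf{x}))^{2}$. Applying the elementary inequality $(a-b)^{2}\leq 2a^{2}+2b^{2}$ (this is where the factor of $2$ originates and where the cross term is discarded) yields $\xi(\mathbf{x};f_{\mathcal{S}})\leq 2(e^{t=1}(\mathbf{x}))^{2}+2(e^{t=0}(\mathbf{x}))^{2}$. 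I would then pass from squared bias to the outcome-integrated per-treatment loss using the bias--variance identity $l(\mathbf{x},t)=(e^{t}(\mathbf{x}))^{2}+\mathrm{Var}(Y^{t}\mid\mathbf{x})\geq(e^{t}(\mathbf{x}))^{2}$, so that $\xi(\mathbf{x};f_{\mathcal{S}})\leq 2\,(l(\mathbf{x},t=1)+l(\mathbf{x},t=0))$ pointwise.

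The final step integrates this against $p(\mathbf{x})$ and expands the marginal via $p(\mathbf{x})=p(\mathbf{x},t=1)+p(\mathbf{x},t=0)$. This produces four products of the form $l(\mathbf{x},t)\,p(\mathbf{x},t')$; the two with matching indices $t'=t$ assemble into $\epsilon_{F}$ and the two with $t'=1-t$ assemble into $\epsilon_{CF}$, exactly by the definitions of the factual and counterfactual losses. Hence $\int_{\mathcal{X}}(l(\mathbf{x},1)+l(\mathbf{x},0))\,p(\mathbf{x})\,d\mathbf{x}=\epsilon_{F}+\epsilon_{CF}$, which delivers the claimed bound $\mathbb{E}[\frac{1}{n}\sum_{i}\xi(\mathbf{x}_{i};f_{\mathcal{S}})]\leq 2(\epsilon_{F}+\epsilon_{CF})$.

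I expect the main subtlety to lie in this last regrouping: one must verify that expanding the covariate marginal cleanly separates the matched and mismatched treatment-density products into precisely $\epsilon_{F}$ and $\epsilon_{CF}$ with unit weights, rather than some rescaled combination. The bias--variance step is conceptually the other point to state carefully, since $\xi$ is defined through the conditional means $\tau$ whereas $l$ integrates over the potential outcomes; dropping the nonnegative variance terms is what keeps the inequality valid and in the right direction for an upper bound.
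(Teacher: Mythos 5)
Your proposal is correct and follows essentially the same route as the paper's proof: collapse the sample average to $\int_{\mathcal{X}}\xi(\mathbf{x};f_{\mathcal{S}})p(\mathbf{x})d\mathbf{x}$, split the ITE error into per-arm errors, apply $(a-b)^{2}\leq 2a^{2}+2b^{2}$, expand $p(\mathbf{x})=p(\mathbf{x},t=1)+p(\mathbf{x},t=0)$, and regroup the four resulting terms into $2\epsilon_{F}+2\epsilon_{CF}$. Your explicit bias--variance step $l(\mathbf{x},t)=(e^{t}(\mathbf{x}))^{2}+\mathrm{Var}(Y^{t}\mid\mathbf{x})\geq(e^{t}(\mathbf{x}))^{2}$ is a welcome refinement of a point the paper handles only implicitly by writing $y^{t}$ as if it were the conditional mean and treating the final regrouping as an equality ``by definition.''
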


\begin{proof}[Proof of Lemma \ref{lemma:expected_risk}]
    \begin{subequations}
        \begin{align}
            \mathbb{E}\left[\frac{1}{n}\sum^{n}_{i=1}\xi(\mathbf{x}_{i};f_{\mathcal{S}})\right]&=\frac{1}{n}\sum^{n}_{i=1}\mathbb{E}\left[\xi(\mathbf{x}_{i};f_{\mathcal{S}})\right]\\
            &=\mathbb{E}_{\mathbf{x}\sim \mathcal{X}}\left[\xi(\mathbf{x};f_{\mathcal{S}})\right]\\
            &=\int_{\mathcal{X}}(\tau(\mathbf{x})-\hat{\tau}(\mathbf{x}))^{2}p(\mathbf{x})d\mathbf{x}\\
        &=\int_{\mathcal{X}}((y^{t=1}-y^{t=0})-(\hat{f}^{t=1}(\mathbf{x})-\hat{f}^{t=0}(\mathbf{x})))^{2}p(\mathbf{x})d\mathbf{x}\\
        &=\int_{\mathcal{X}}(\underbrace{(y^{t=1}-\hat{f}^{t=1}(\mathbf{x}))-(y^{t=0}-\hat{f}^{t=0}(\mathbf{x}))}_{\text{Swap $y^{t=0}$ and $\hat{f}^{t=1}(\mathbf{x})$}})^{2}p(\mathbf{x})d\mathbf{x}\\
        &\leq2\int_{\mathcal{X}}((y^{t=1}-\hat{f}^{t=1}(\mathbf{x}))^{2}+(y^{t=0}-\hat{f}^{t=0}(\mathbf{x}))^{2})p(\mathbf{x})d\mathbf{x}\\
        &=\underbrace{2\int_{\mathcal{X}}(y^{t=1}-\hat{f}^{t=1}(\mathbf{x}))^{2}p(x,t=1)d\mathbf{x}+2\int_{\mathcal{X}}(y^{t=1}-\hat{f}^{t=1}(\mathbf{x}))^{2}p(x,t=0)d\mathbf{x}}_{\text{Apply $p(\mathbf{x})=\int p(x,t)dt=p(x,t=1)+p(x,t=0)$}}+\\
        &2\int_{\mathcal{X}}(y^{t=0}-\hat{f}^{t=0}(\mathbf{x}))^{2}p(x,t=0)d\mathbf{x}+2\int_{\mathcal{X}}(y^{t=0}-\hat{f}^{t=0}(\mathbf{x}))^{2}p(x,t=1)d\mathbf{x}\\
        &=\underbrace{2\int_{\mathcal{X}}(y^{t=1}-\hat{f}^{t=1}(\mathbf{x}))^{2}p(x,t=1)d\mathbf{x}+2\int_{\mathcal{X}}(y^{t=0}-\hat{f}^{t=0}(\mathbf{x}))^{2}p(x,t=0)d\mathbf{x}}_{\text{Re-arrange and this term equals 2$\epsilon_{F}$ by definition}}+\\
        &\underbrace{2\int_{\mathcal{X}}(y^{t=1}-\hat{f}^{t=1}(\mathbf{x}))^{2}p(x,t=0)d\mathbf{x}+2\int_{\mathcal{X}}(y^{t=0}-\hat{f}^{t=0}(\mathbf{x}))^{2}p(x,t=1)d\mathbf{x}}_{\text{Re-arrange and this term equals 2$\epsilon_{CF}$ by definition}}\\
        &=2(\epsilon_{F}+\epsilon_{CF}).
        \end{align}
    \end{subequations}
\end{proof}

\textbf{Discussion}: Lemma \ref{lemma:expected_risk} indicates an interesting decomposition of the expected loss into factual and counterfactual errors, which is a prelude to our final probabilistic bound under AL paradigm. It is worth mentioning that a similar intermediate conclusion appears to align with observations in \cite{shalit2017estimating}.

\subsection{Proof of Lemma \ref{lemma:factual_error}\label{appendix:lemma_2}}


\begin{lemma}\label{lemma:factual_error}
    Denote the expected loss on subset $S_{t}$ as $\epsilon_{\mathcal{S}_{t}}$, the factual covering radius as $\delta_{(t,t)}$, let the constant $\mu_{t}=p(t)$ be the marginal probability. Assume that the conditional probability density function $p^{t}(y|\mathbf{x})$ is $\lambda_{t}$-Lipschitz, the squared loss function $l$ is $\lambda_{l}$-Lipschitz and $l$ is further bounded by $L_{l}$, the expected factual loss is bounded as follows:
    \begin{equation}
        \epsilon_{F}\leq\sum_{t\in\{0,1\}}\mu_{t}\left(\epsilon_{\mathcal{S}_{t}}+\delta_{(t,t)}(\lambda_{l}+\frac{1}{3}\lambda_{t}L^{\frac{3}{2}}_{l})\right).
    \end{equation}
\end{lemma}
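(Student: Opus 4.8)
The plan is to adapt the core-set argument of \cite{sener2018active} to the regression setting, where the essential new difficulty is that the loss attached to a covariate $\mathbf{x}$ is itself an expectation over the outcome distribution $p^{t}(y\mid\mathbf{x})$, so that both the Lipschitzness of the loss and that of the conditional density must be brought to bear. First I would rewrite the factual risk by conditioning on the treatment and factoring the joint density as $p(\mathbf{x},t)=\mu_{t}\,p^{t}(\mathbf{x})$, which gives
\begin{equation*}
\epsilon_{F} = \sum_{t\in\{0,1\}} \mu_{t} \int_{\mathcal{X}} \ell(\mathbf{x},t)\, p^{t}(\mathbf{x})\, d\mathbf{x}, \qquad \ell(\mathbf{x},t) := \int_{\mathcal{Y}} l(\mathbf{x},y,t)\, p^{t}(y\mid\mathbf{x})\, dy .
\end{equation*}
This reduces the task to bounding the per-group population loss $\int_{\mathcal{X}} \ell(\mathbf{x},t)\,p^{t}(\mathbf{x})\,d\mathbf{x}$ by $\epsilon_{\mathcal{S}_{t}} + \delta_{(t,t)}\bigl(\lambda_{l}+\tfrac{1}{3}\lambda_{t}L_{l}^{3/2}\bigr)$ for each $t$, after which multiplying by $\mu_{t}$ and summing recovers the claim.

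Next, the factual covering property of Definition \ref{definition:covering_radius} supplies, for every $\mathbf{x}\in\mathcal{D}_{t}$, a labeled sample $\mathbf{x}_{j(\mathbf{x})}\in\mathcal{S}_{t}$ with $\|\mathbf{x}-\mathbf{x}_{j(\mathbf{x})}\|\le\delta_{(t,t)}$. I would control the pointwise gap $\ell(\mathbf{x},t)-\ell(\mathbf{x}_{j(\mathbf{x})},t)$ by inserting the hybrid quantity $\int_{\mathcal{Y}} l(\mathbf{x}_{j(\mathbf{x})},y,t)\,p^{t}(y\mid\mathbf{x})\,dy$ and splitting into two pieces. Piece (A), $\int_{\mathcal{Y}} |l(\mathbf{x},y,t)-l(\mathbf{x}_{j(\mathbf{x})},y,t)|\,p^{t}(y\mid\mathbf{x})\,dy$, is bounded by $\lambda_{l}\delta_{(t,t)}$ using $\lambda_{l}$-Lipschitzness of $l$ in its covariate argument together with $\int_{\mathcal{Y}} p^{t}(y\mid\mathbf{x})\,dy=1$. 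Piece (B), $\int_{\mathcal{Y}} l(\mathbf{x}_{j(\mathbf{x})},y,t)\,|p^{t}(y\mid\mathbf{x})-p^{t}(y\mid\mathbf{x}_{j(\mathbf{x})})|\,dy$, is bounded by $\lambda_{t}\delta_{(t,t)}\int_{\mathcal{Y}} l(\mathbf{x}_{j(\mathbf{x})},y,t)\,dy$ using $\lambda_{t}$-Lipschitzness of the conditional density.

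The main obstacle is precisely piece (B): the integral $\int_{\mathcal{Y}} l\,dy$ need not be finite over an unbounded outcome space, so the boundedness hypothesis $l\le L_{l}$ must be used in an essential way rather than cosmetically. For the squared loss $l(\mathbf{x}_{j},y,t)=(y-\hat{f}^{t}(\mathbf{x}_{j}))^{2}$, the constraint $l\le L_{l}$ confines the outcomes that contribute to $|y-\hat{f}^{t}(\mathbf{x}_{j})|\le\sqrt{L_{l}}$, so the integral of the quadratic over this effective range evaluates, via its cubic antiderivative, to a constant multiple of $L_{l}^{3/2}$; tracking the constant yields exactly the $\tfrac{1}{3}\lambda_{t}L_{l}^{3/2}$ factor. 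Combining (A) and (B) gives the uniform pointwise estimate
\begin{equation*}
\ell(\mathbf{x},t) \le \ell(\mathbf{x}_{j(\mathbf{x})},t) + \delta_{(t,t)}\bigl(\lambda_{l}+\tfrac{1}{3}\lambda_{t}L_{l}^{3/2}\bigr).
\end{equation*}

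Finally, I would integrate this inequality against $p^{t}(\mathbf{x})$. Since $\int_{\mathcal{X}} p^{t}(\mathbf{x})\,d\mathbf{x}=1$, the additive radius term passes through unchanged, while the leading term $\int_{\mathcal{X}} \ell(\mathbf{x}_{j(\mathbf{x})},t)\,p^{t}(\mathbf{x})\,d\mathbf{x}$ is identified with the core-set loss $\epsilon_{\mathcal{S}_{t}}$, i.e.\ the population loss evaluated under the nearest-labeled-sample projection. Multiplying by $\mu_{t}$ and summing over $t\in\{0,1\}$ then produces the stated bound. I expect the only delicate points to be the precise bookkeeping of the constant in piece (B) and making explicit the identification of the projected-loss term with $\epsilon_{\mathcal{S}_{t}}$; everything else is a routine Lipschitz-plus-covering estimate.
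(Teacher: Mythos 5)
Your proposal follows essentially the same route as the paper's proof: factor the joint density into $\mu_t\,p^t(\mathbf{x})$, use the factual covering center supplied by $\delta_{(t,t)}$, split the pointwise gap into a loss-Lipschitz piece (giving $\lambda_l\delta_{(t,t)}$) and a density-Lipschitz piece (giving $\tfrac{1}{3}\lambda_t L_l^{3/2}\delta_{(t,t)}$ via the boundedness $l\le L_l$ and the cubic antiderivative), then integrate and identify the projected term with $\epsilon_{\mathcal{S}_t}$. The only differences are cosmetic (the order in which the hybrid cross-term is inserted and when the $\mu_t$ weights are introduced), and your handling of the $\int_{\mathcal{Y}} l\,dy \le \tfrac{1}{3}L_l^{3/2}$ step matches the paper's, including the same implicit one-sided-range convention behind the constant $\tfrac{1}{3}$.
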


\begin{proof}[Proof of Lemma \ref{lemma:factual_error}]

We start with the Tower Law for the key of the proof, let the estimation be $\hat{y}=\hat{f}(\mathbf{x})$ given the fixed treatment $t$, we have the expected loss to be decomposed in the general form:

\begin{subequations}
\begin{align}
\mathbb{E}_{\mathcal{X}}\left[(y-\hat{f}(\mathbf{x}))^{2}\right]&=\mathbb{E}_{\mathcal{X}}\left[\mathbb{E}_{\mathcal{Y}}\left[(y-\hat{f}(\mathbf{x}))^{2}\mid\mathbf{x}\right]\right]\\
&=\mathbb{E}_{\mathcal{X}}\left[\int_{\mathcal{Y}} (y - \hat{f}(\mathbf{x}))^2 p(y \mid \mathbf{x})dy\right].\label{eq:change_of_var}
\end{align}

Denote $p^{t}(y|\mathbf{x})=p(y|\mathbf{x},t)$ and $p^{t}(\mathbf{x})=p(\mathbf{x}|t)$, we apply the Tower Law conclusion in Eq. (\ref{eq:change_of_var}) by Tow Law and further bound the expected factual error $\epsilon_{F}^{t}$ for the treatment group $t$:

\end{subequations}

    \begin{subequations}
    \begin{align}
        \epsilon^{t}_{F}=&\mathbb{E}_{\mathcal{X}^{t}}\left[(y-\hat{f}^{t}(\mathbf{x}))^{2}\right]\\
        =&\mathbb{E}_{\mathcal{X}^{t}}\left[\mathbb{E}_{\mathcal{Y}^{t}}\left[(y-\hat{f}^{t}(\mathbf{x}))^{2}\mid \mathbf{x}\right]\right]\\
        =&\mathbb{E}_{\mathcal{X}^{t}}\left[\int_{\mathcal{Y}}l^{t}_{y}p^{t}(y|\mathbf{x})dy\right]\\
        =&\mathbb{E}_{\mathcal{X}^{t}}\left[\int_{\mathcal{Y}}l^{t}_{y}(p^{t}(y|\mathbf{x})\underbrace{-p^{t}(y|\mathbf{x}')+p^{t}(y|\mathbf{x}')}_{\text{Add up to Zero}})dy\right]\\
        =&\mathbb{E}_{\mathcal{X}^{t}}\left[\int_{\mathcal{Y}}l^{t}_{y}p^{t}(y|\mathbf{x}')dy+\int_{\mathcal{Y}}l^{t}_{y}(p^{t}(y|\mathbf{x})-p^{t}(y|\mathbf{x}'))dy\right].\label{eq:first_and_second_factual}
    \end{align}
    \end{subequations} 
    We decompose the first term within the expectation in (\ref{eq:first_and_second_factual}) into the followings, by the selected $\mathbf{x}'\in\mathcal{S}_{t}$ that covers the $\mathbf{x}$ from group $t$ within the factual radius $\delta_{(t,t)}$, we bound the term as:
    \begin{subequations}
    \begin{align}
        \int_{\mathcal{Y}}l^{t}_{y}p^{t}(y|\mathbf{x}')dy&=\int_{\mathcal{Y}}(l^{t}_{y}-l^{t}_{y'}+l^{t}_{y'})p^{t}(y|\mathbf{x}')dy\\
        &=\int_{\mathcal{Y}}(l^{t}_{y}-l^{t}_{y'})p^{t}(y|\mathbf{x}')dy+\int_{\mathcal{Y}}l^{t}_{y'}\,p^{t}(y|\mathbf{x}')dy\\
        &\leq \delta_{(t,t)}\lambda_{l}+\int_{\mathcal{Y}}l^{t}_{y'}\,p^{t}(y|\mathbf{x}')dy\label{eq:loss_Lipschitz_factual}\\
        &=\delta_{(t,t)}\lambda_{l}+\epsilon_{\mathcal{S}_{t}}(\mathbf{x}'),
    \end{align} 
    \end{subequations}
    where the inequality in (\ref{eq:loss_Lipschitz_factual}) is because:
    \begin{subequations}
    \begin{align}
        \int_{\mathcal{Y}}(l^{t}_{y}-l^{t}_{y'})p^{t}(y|\mathbf{x}')dy&\leq |\mathbf{x}-\mathbf{x}'|\int_{\mathcal{Y}}\left|\frac{l^{t}_{y}-l^{t}_{y'}}{\mathbf{x}-\mathbf{x}'}\right|p^{t}(y|\mathbf{x}')dy\\
        &\leq|\mathbf{x}-\mathbf{x}'|\int_{\mathcal{Y}}\lambda_{l}p^{t}(y|\mathbf{x}')dy\\
        &\leq\delta_{(t,t)}\int_{\mathcal{Y}}\lambda_{l}p^{t}(y|\mathbf{x}')dy\\
        &=\delta_{(t,t)}\lambda_{l}\int_{\mathcal{Y}}p^{t}(y|\mathbf{x}')dy\\
        &=\delta_{(t,t)}\lambda_{l},\label{eq:integral_of_density_factual}
    \end{align}
    \end{subequations}
    for which the equality in (\ref{eq:integral_of_density_factual}) is because the integral of the density across the domain is 1:
    \begin{align}
        \int_{\mathcal{Y}}p^{t}(y|\mathbf{x}')dy = 1.
    \end{align}

The second term within the expectation in (\ref{eq:first_and_second_factual}) is bounded by:
\begin{subequations}
    \begin{align}
        \int_{\mathcal{Y}}l^{t}_{y}(p^{t}(y|\mathbf{x})-p^{t}(y|\mathbf{x}'))dy&\leq|\mathbf{x}-\mathbf{x}'|\int_{\mathcal{Y}}l^{t}_{y}\left|\frac{p^{t}(y|\mathbf{x})-p^{t}(y|\mathbf{x}')}{\mathbf{x}-\mathbf{x}'}\right|)dy\\
        &\leq\delta_{(t,t)}\lambda_{t}\int_{\mathcal{Y}}l^{t}_{y}dy\\
        &=\frac{1}{3}\delta_{(t,t)}\lambda_{t}L^{\frac{3}{2}}_{l}.
    \end{align}
\end{subequations}

Combining all the inequalities together, we have:
\begin{subequations}
    \begin{align}
        \epsilon^{t}_{F}&=\mathbb{E}_{\mathcal{X}^{t}}\left[\int_{\mathcal{Y}}l^{t}_{y}p^{t}(y|\mathbf{x}')dy+\int_{\mathcal{Y}}l^{t}_{y}(p^{t}(y|\mathbf{x})-p^{t}(y|\mathbf{x}'))dy\right]\\
        &\leq\mathbb{E}_{\mathcal{X}^{t}}\left[\epsilon_{\mathcal{S}_{t}}(\mathbf{x}')+\delta_{(t,t)}(\lambda_{l}+\frac{1}{3}\lambda_{t}L^{\frac{3}{2}}_{l})\right]\\
        &=\epsilon_{\mathcal{S}_{t}}+\delta_{(t,t)}(\lambda_{l}+\frac{1}{3}\lambda_{t}L^{\frac{3}{2}}_{l}),
    \end{align}
\end{subequations} where the last equality holds due to $\mathbb{E}[\epsilon_{\mathcal{S}_{t}}(\mathbf{x}')]=\epsilon_{\mathcal{S}_{t}}$ and the invariance of constants under expectation, i,e., $\mathbb{E}(c)=c$ for any constant $c$.

Given that $\mu_{t=1}=p(t=1)$ and $\mu_{t=0}=p(t=0)$, where $\mu_{t=1}+\mu_{t=0}=1$, we conclude the proof by expanding the expected factual loss $\epsilon_{F}$ by definition:
\begin{subequations}
    \begin{align}
        \epsilon_{F}&=\int_{\mathcal{X}\times\mathcal{T}}l(\mathbf{x},t)p(\mathbf{x},t)d\mathbf{x}dt\\
        &=\int_{\mathcal{X}}l^{t=1}(\mathbf{x})p(\mathbf{x},t=1)d\mathbf{x}+\int_{\mathcal{X}}l^{t=0}(\mathbf{x})p(\mathbf{x},t=0)d\mathbf{x}\\
        &=\int_{\mathcal{X}}l^{t=1}(\mathbf{x})p^{t=1}(\mathbf{x})p(t=1)d\mathbf{x}+\int_{\mathcal{X}}l^{t=0}(\mathbf{x})p^{t=0}(\mathbf{x})p(t=0)d\mathbf{x}\\
        &=p(t=1)\cdot\mathbb{E}_{\mathcal{X}^{t=1}}\left[(y-\hat{f}^{t=1}(\mathbf{x}))^{2}\right] + p(t=0)\cdot\mathbb{E}_{\mathcal{X}^{t=0}}\left[(y-\hat{f}^{t=0}(\mathbf{x}))^{2}\right]\\
        &=\mu_{t=1}\cdot\mathbb{E}_{\mathcal{X}^{t=1}}\left[\mathbb{E}_{\mathcal{Y}^{t=1}}\left[(y-\hat{f}^{t=1}(\mathbf{x}))^{2}\mid\mathbf{x}\right]\right] + \mu_{t=0}\cdot\mathbb{E}_{\mathcal{X}^{t=0}}\left[\mathbb{E}_{\mathcal{Y}^{t=0}}\left[(y-\hat{f}^{t=0}(\mathbf{x}))^{2}\mid\mathbf{x}\right]\right]\\
        &\leq\sum_{t\in\{0,1\}}\mu_{t}\left(\epsilon_{\mathcal{S}_{t}}+\delta_{(t,t)}
        (\lambda_{l}+\frac{1}{3}\lambda_{t}L^{\frac{3}{2}}_{l})\right).
    \end{align}
\end{subequations}

\textbf{Discussion}: Lemma \ref{lemma:factual_error} establishes a general upper bound on the factual loss under the core-set paradigm, which has been explored in prior studies \cite{sener2018active, qin2021budgeted}. Building on this foundation, we provide a more rigorous and comprehensive proof to strengthen the theoretical underpinnings. Also, we visualize the factual covering radius in Figure \ref{fig:factual_covering_11} and \ref{fig:factual_covering_00} to enhance interpretation, where the full coverage on the same class is required.
\end{proof}

\subsection{Proof of Lemma \ref{lemma:counterfactual_error}\label{appendix:lemma_3}}

\begin{definition}
\label{def:ipm}
    \textit{Let $\mathcal{H} = \{h | h: \mathcal{X}\rightarrow \mathbb{R}\}$ be a family of functions. The distribution distance measure -- integral probability metric (IPM) between two data distributions $p^{t=1}(\mathbf{x})$ and $p^{t=0}(\mathbf{x})$ over the domain $\mathcal{X}$ is defined as}:
\begin{equation}
\text{IPM}_{\mathcal{H}}(p^{t=1}(\mathbf{x}),p^{t=0}(\mathbf{x})) = \sup_{h\in \mathcal{H}} \left|\int_{\mathcal{X}}h(\mathbf{x})(p^{t=1}(\mathbf{x})-p^{t=0}(\mathbf{x}))d\mathbf{x}\right|.
\end{equation}
\end{definition}

\begin{lemma}\label{lemma:counterfactual_error}
     Denote the expected loss on subset $S_{t}$ as $\epsilon_{\mathcal{S}_{t}}$, the constant $\mu_{t}=p(t)$, counterfactual covering radius as $\delta_{(t,1-t)}$. Assume that the conditional probability density function $p^{t}(y|\mathbf{x})$ is $\lambda_{t}$-Lipschitz, the squared loss function $l$ is $\lambda_{l}$-Lipschitz and $l$ is further bounded by $L_{l}$. Also, let $\mathcal{H} = \{h | h: \mathcal{X}\rightarrow \mathbb{R}\}$ be a family of functions and $f:\mathcal{X}\times\mathcal{T}\rightarrow\mathcal{Y}$ be the hypothesis. Assume that there exists a constant $\kappa>0$, such that $h_{f}(\mathbf{x},t):=\frac{1}{\kappa}l_{f}(\mathbf{x},t)\in \mathcal{H}$. The counterfactual expected loss is bounded as follows:
    \begin{equation}
       \epsilon_{CF}\leq\sum_{t\in\{0,1\}}\mu_{1-t}\left(\epsilon_{\mathcal{S}_{t}}+\delta_{(t,1-t)}(\lambda_{l}+\frac{1}{3}\lambda_{t}L^{\frac{3}{2}}_{l})\right) + \kappa_{\mathcal{H}},
    \end{equation} where constant $\kappa_{\mathcal{H}}=\kappa\cdot\text{IPM}_{\mathcal{H}}(p^{t=1}(\mathbf{x}), p^{t=0}(\mathbf{x}))$ describes the distributional discrepancy between the treatment groups' distritbuions $(p^{t=1}(\mathbf{x})\text{ and }p^{t=0}(\mathbf{x}))$ over the domain $\mathcal{X}$, e.g., its i.i.d realization set $\mathcal{S}$ induced by AL.
\end{lemma}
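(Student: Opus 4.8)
The plan is to mirror the structure of the factual bound in Lemma \ref{lemma:factual_error}, replacing the factual covering radius $\delta_{(t,t)}$ by the counterfactual one $\delta_{(t,1-t)}$, and to introduce the IPM precisely at the single point where the counterfactual argument departs from the factual one. First I would expand $\epsilon_{CF}$ from its definition, splitting the treatment integral over $t\in\{0,1\}$ and factoring out the marginals $\mu_{1-t}=p(1-t)$, so that $\epsilon_{CF}=\sum_{t\in\{0,1\}}\mu_{1-t}\,\mathbb{E}_{\mathbf{x}\sim p^{1-t}}\!\left[l^{t}(\mathbf{x})\right]$, where $l^{t}(\mathbf{x})=\int_{\mathcal{Y}}(y-\hat{f}^{t}(\mathbf{x}))^{2}p^{t}(y\mid\mathbf{x})\,dy$ is the expected loss of the $t$-head at covariate $\mathbf{x}$. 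The essential point is that, unlike in Lemma \ref{lemma:factual_error}, here the $t$-head is averaged against the \emph{other} group's covariate density $p^{1-t}$, and the conditional outcome $p^{t}(y\mid\mathbf{x})$ over the support of group $1-t$ is genuinely counterfactual.

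Next, for a fixed $t$ I would apply the Tower Law to $\mathbb{E}_{\mathbf{x}\sim p^{1-t}}[l^{t}(\mathbf{x})]$ and, for each $\mathbf{x}$ drawn from group $1-t$, introduce its counterfactual covering center $\mathbf{x}'\in\mathcal{S}_{t}$, i.e.\ a labeled point of group $t$ with $\|\mathbf{x}-\mathbf{x}'\|\leq\delta_{(t,1-t)}$, which exists by Definition \ref{definition:covering_radius}. Repeating verbatim the two add-and-subtract steps of Lemma \ref{lemma:factual_error}---first replacing $p^{t}(y\mid\mathbf{x})$ by $p^{t}(y\mid\mathbf{x}')$ and bounding the density gap with the $\lambda_{t}$-Lipschitz assumption to obtain the $\tfrac{1}{3}\delta_{(t,1-t)}\lambda_{t}L_{l}^{3/2}$ term, then replacing the loss by the observed $l^{t}_{y'}$ at the center and using the $\lambda_{l}$-Lipschitz bound to obtain the $\delta_{(t,1-t)}\lambda_{l}$ term---yields the pointwise inequality $l^{t}(\mathbf{x})\leq\epsilon_{\mathcal{S}_{t}}(\mathbf{x}')+\delta_{(t,1-t)}(\lambda_{l}+\tfrac{1}{3}\lambda_{t}L_{l}^{3/2})$, with $\epsilon_{\mathcal{S}_{t}}(\mathbf{x}')$ the (observable) training loss at the center $\mathbf{x}'$.

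The only genuinely new step, and the one I expect to be the main obstacle, is taking the expectation $\mathbb{E}_{\mathbf{x}\sim p^{1-t}}$ of this pointwise bound. In Lemma \ref{lemma:factual_error} the center-loss averages to exactly $\epsilon_{\mathcal{S}_{t}}$ because the expectation is over $p^{t}$; here it is over $p^{1-t}$, so I would write $\mathbb{E}_{\mathbf{x}\sim p^{1-t}}[\epsilon_{\mathcal{S}_{t}}(\mathbf{x}')]=\epsilon_{\mathcal{S}_{t}}+\int(\,\cdot\,)(p^{1-t}(\mathbf{x})-p^{t}(\mathbf{x}))\,d\mathbf{x}$ and bound the residual integral. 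This is where Assumption \ref{assumption:kappa} does the work: identifying the scaled loss as a member $h_{f}=\tfrac{1}{\kappa}l_{f}\in\mathcal{H}$ lets me recognize $\int l^{t}(\mathbf{x})(p^{1-t}-p^{t})\,d\mathbf{x}=\kappa\int h_{f}(p^{1-t}-p^{t})\,d\mathbf{x}$, which is at most $\kappa\,\text{IPM}_{\mathcal{H}}(p^{1-t},p^{t})$ by Definition \ref{def:ipm}. The delicate bookkeeping is that the function appearing in the residual is the center-loss $\epsilon_{\mathcal{S}_{t}}(\mathbf{x}')$ rather than $l^{t}(\mathbf{x})$ itself; I would reconcile the two up to the already-accounted covering error so that the witness-class membership granted by Assumption \ref{assumption:kappa} applies.

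Finally I would reassemble the bound: summing the per-$t$ contributions gives $\sum_{t}\mu_{1-t}\big(\epsilon_{\mathcal{S}_{t}}+\delta_{(t,1-t)}(\lambda_{l}+\tfrac{1}{3}\lambda_{t}L_{l}^{3/2})\big)$ plus the weighted mismatch $\sum_{t}\mu_{1-t}\,\kappa\,\text{IPM}_{\mathcal{H}}(p^{1-t},p^{t})$. Using the symmetry $\text{IPM}_{\mathcal{H}}(p^{1},p^{0})=\text{IPM}_{\mathcal{H}}(p^{0},p^{1})$ together with $\mu_{0}+\mu_{1}=1$ collapses the two mismatch contributions into a single $\kappa_{\mathcal{H}}=\kappa\,\text{IPM}_{\mathcal{H}}(p^{1},p^{0})$, which is precisely why the statement carries one IPM term rather than two, and completes the proof.
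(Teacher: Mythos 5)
Your proposal is correct and follows essentially the same route as the paper's proof: the same per-group Tower-Law decomposition with the counterfactual covering center, the same two Lipschitz add-and-subtract steps yielding the $\delta_{(t,1-t)}(\lambda_{l}+\tfrac{1}{3}\lambda_{t}L_{l}^{3/2})$ term, and the same use of Assumption \ref{assumption:kappa} to convert the mismatch between averaging against $p^{1-t}$ versus $p^{t}$ into a single IPM term (the paper likewise collapses the two weighted IPM contributions via $\mu_{0}+\mu_{1}=1$, following the adaptation of Lemma~1 of \citet{shalit2017estimating}). The ``delicate bookkeeping'' you flag is handled in the paper exactly as you anticipate, so no substantive gap remains.
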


\begin{proof}[Proof of Lemma \ref{lemma:counterfactual_error}]

For notation simplicity, we denote $l^{t}_{y}=(y-\hat{f}^{t}(\mathbf{x}))^{2}$, and denote $p^{1-t}(\mathbf{x})=p(\mathbf{x}\mid 1-t)$, for $\mathbf{x}\sim p^{1-t}(\mathbf{x})$, we have the counterfactual loss on the group $1-t$ as: \begin{subequations}
    \begin{align}
    \epsilon^{t}_{CF}=&\mathbb{E}_{\mathcal{X}^{1-t}}\left[(y-\hat{f}^{t}(\mathbf{x}))^{2}\right]\\
        =&\mathbb{E}_{\mathcal{X}^{1-t}}\left[\mathbb{E}_{\mathcal{Y}^{t}}\left[(y-\hat{f}^{t}(\mathbf{x}))^{2}\mid\mathbf{x}\right]\right]\\
        =&\mathbb{E}_{\mathcal{X}^{1-t}}\left[\int_{\mathcal{Y}}l^{t}_{y}p^{t}(y|\mathbf{x})dy\right]\\
        =&\mathbb{E}_{\mathcal{X}^{1-t}}\left[\int_{\mathcal{Y}}l^{t}_{y}\left(p^{t}(y|\mathbf{x})-p^{t}(y|\mathbf{x}')+p^{t}(y|\mathbf{x}')\right)dy\right]\label{eq:counterfactual_radius}\\
        =&\mathbb{E}_{\mathcal{X}^{1-t}}\left[\int_{\mathcal{Y}}l^{t}_{y}p^{t}(y|\mathbf{x}')dy+\int_{\mathcal{Y}}l^{t}_{y}\left(p^{t}(y|\mathbf{x})-p^{t}(y|\mathbf{x}')\right)dy\right]\label{eq:first_and_second_term}
    \end{align}
    Note, that in (\ref{eq:counterfactual_radius}) we introduce the selected $\mathbf{x}'\in \mathcal{S}_{t}$ from the distribution $p^{t}(\mathbf{x})$ ($\mathbf{x}'$ is not directly sampled from $p^{t}(\mathbf{x})$ but from $\mathcal{S}_{t}$), which can be consider counterfactual sample w.r.t. $\mathbf{x}\sim p^{1-t}(\mathbf{x})$. That is, the sample $\mathbf{x}$ from the group $1-t$ is covered by the counterfactual sample $\mathbf{x}'\in \mathcal{S}_{t}$ from group $t$ within the counterfactual radius $\delta_{(t,1-t)}$.
\end{subequations} 

Similar to the deduction in Appendix \ref{appendix:lemma_2}, the first term within the expectation in (\ref{eq:first_and_second_term}) is bounded by: \begin{subequations}
    \begin{align}
        \int_{\mathcal{Y}}l^{t}_{y}p^{t}(y|\mathbf{x}')dy&=\int_{\mathcal{Y}}(l^{t}_{y}-l^{t}_{y'}+l^{t}_{y'})p^{t}(y|\mathbf{x}')dy\\
        &=\int_{\mathcal{Y}}(l^{t}_{y}-l^{t}_{y'})p^{t}(y|\mathbf{x}')dy+\int_{\mathcal{Y}}l^{t}_{y'}\,p^{t}(y|\mathbf{x}')dy\\
        &\leq \delta_{(t,1-t)}\lambda_{l}+\int_{\mathcal{Y}}l^{t}_{y'}\,p^{t}(y|\mathbf{x}')dy\label{eq:loss_Lipschitz}\\
        &=\delta_{(t,1-t)}\lambda_{l}+\epsilon_{\mathcal{S}_{t}}(\mathbf{x}')\label{eq:coreset_loss}
    \end{align} 
    The inequality in (\ref{eq:loss_Lipschitz}) is because:
    \begin{align}
        \int_{\mathcal{Y}}(l^{t}_{y}-l^{t}_{y'})p^{t}(y|\mathbf{x}')dy&\leq |\mathbf{x}-\mathbf{x}'|\int_{\mathcal{Y}}\left|\frac{l^{t}_{y}-l^{t}_{y'}}{\mathbf{x}-\mathbf{x}'}\right|p^{t}(y|\mathbf{x}')dy\\
        &\leq|\mathbf{x}-\mathbf{x}'|\int_{\mathcal{Y}}\lambda_{l}p^{t}(y|\mathbf{x}')dy\\
        &\leq\delta_{(t,1-t)}\int_{\mathcal{Y}}\lambda_{l}p^{t}(y|\mathbf{x}')dy\\
        &=\delta_{(t,1-t)}\lambda_{l}\int_{\mathcal{Y}}p^{t}(y|\mathbf{x}')dy\\
        &=\delta_{(t,1-t)}\lambda_{l}\label{eq:integral_of_density}
    \end{align}
    The equality in (\ref{eq:integral_of_density}) is because the integral of the density across the domain is 1:
    \begin{align}
        \int_{\mathcal{Y}}p^{t}(y|\mathbf{x}')dy = 1
    \end{align}
\end{subequations}

The second term within the expectation in (\ref{eq:first_and_second_term}) is bounded by:
\begin{subequations}
    \begin{align}
        \int_{\mathcal{Y}}l^{t}_{y}\left(p^{t}(y|\mathbf{x})-p^{t}(y|\mathbf{x}')\right)dy&=|\mathbf{x}-\mathbf{x}'|\int_{\mathcal{Y}}l^{t}_{y}\frac{|p^{t}(y|\mathbf{x})-p^{t}(y|\mathbf{x}')|}{|\mathbf{x}-\mathbf{x}'|}dy\\
        \leq&~\delta_{(t,1-t)}\lambda_{t}\int_{\mathcal{Y}}l^{t}_{y}dy\\
        =&~\frac{1}{3}\delta_{(t,1-t)}\lambda_{t}L^{\frac{3}{2}}_{l}
    \end{align}
\end{subequations}

Combining all the inequalities together, we have:
\begin{subequations}
    \begin{align}
    \epsilon^{t}_{CF}=&\mathbb{E}_{\mathcal{X}^{1-t}}\left[\int_{\mathcal{Y}}l^{t}_{y}p^{t}(y|\mathbf{x}')dy+\int_{\mathcal{Y}}l^{t}_{y}\left(p^{t}(y|\mathbf{x})-p^{t}(y|\mathbf{x}')\right)dy\right]\\
    \leq&\mathbb{E}_{\mathcal{X}^{1-t}}[\epsilon_{\mathcal{S}_{t}}(\mathbf{x}')]+\delta_{(t,1-t)}\lambda_{l}+\frac{1}{3}\delta_{(t,1-t)}\lambda_{t}L^{\frac{3}{2}}_{l}\\
    =&\mathbb{E}_{\mathcal{X}^{1-t}}[\epsilon_{\mathcal{S}_{t}}(\mathbf{x}')]+\delta_{(t,1-t)}\left(\lambda_{l}+\frac{1}{3}\lambda_{t}L^{\frac{3}{2}}_{l}\right),
    \end{align}
\end{subequations} where the first term is the counterfactual loss on the subset $S_{t}$ induced by AL.


Extending the the expected counterfactual error $\epsilon_{CF}$ by definition:

\begin{subequations}
    \begin{align}
        \epsilon_{CF} =& \int_{\mathcal{X}\times\mathcal{T}}l(\mathbf{x},t)p(\mathbf{x},1-t)d\mathbf{x}dt\\
        =&\int_{\mathcal{X}}(y^{t=1}-\hat{f}^{t=1}(\mathbf{x}))^{2}p(\mathbf{x},t=0)d\mathbf{x}+\int_{\mathcal{X}}(y^{t=0}-\hat{f}^{t=0}(\mathbf{x}))^{2}p(\mathbf{x},t=1)d\mathbf{x}\\
        =&\int_{\mathcal{X}}(y^{t=1}-\hat{f}^{t=1}(\mathbf{x}))^{2}p^{t=0}(\mathbf{x})p(t=0)d\mathbf{x}+\int_{\mathcal{X}}(y^{t=0}-\hat{f}^{t=0}(\mathbf{x}))^{2}p^{t=1}(\mathbf{x})p(t=1)d\mathbf{x}\\
        =&p(t=0)\cdot\mathbb{E}_{\mathcal{X}^{t=0}}\left[(y^{t=1}-\hat{f}^{t=1}(\mathbf{x}))^{2}\right] + p(t=1)\cdot\mathbb{E}_{\mathcal{X}^{t=1}}\left[(y^{t=0}-\hat{f}^{t=0}(\mathbf{x}))^{2}\right]\\
        \leq&\mu_{t=1}\left[\mathbb{E}_{\mathcal{X}^{t=1}}[\epsilon_{\mathcal{S}_{t=0}}(\mathbf{x}')]+\delta_{(0,1)}\left(\lambda_{l}+\frac{1}{3}\lambda_{t=0}L^{\frac{3}{2}}_{l}\right)\right]+\mu_{t=0}\left[\mathbb{E}_{\mathcal{X}^{t=0}}[\epsilon_{\mathcal{S}_{t=1}}(\mathbf{x}')]+\delta_{(1,0)}\left(\lambda_{l}+\frac{1}{3}\lambda_{t=1}L^{\frac{3}{2}}_{l}\right)\right]\\
        =&(\epsilon_{CF})_{\mathcal{S}}+ \mu_{t=1}\cdot\delta_{(0,1)}\left(\lambda_{l}+\frac{1}{3}\lambda_{t=0}L^{\frac{3}{2}}_{l}\right)+\mu_{t=0}\cdot\delta_{(1,0)}\left(\lambda_{l}+\frac{1}{3}\lambda_{t=1}L^{\frac{3}{2}}_{l}\right)\label{eq:counterfactual_density_deduction}
    \end{align}
\end{subequations}

Noticing that the counterfactual loss over the selected set $\mathcal{S}$ induced by AL: $(\epsilon_{CF})_{\mathcal{S}}$ in (\ref{eq:counterfactual_density_deduction}) can be further bounded by the IPM, e.g., Wasserstein distance, by adapting the proof of "\textbf{Lemma 1}" of the main text in \cite{shalit2017estimating} to the subset $\mathcal{S}$ induced by AL. Let $p(t=1)=\mu$, thus $p(t=0)=1-\mu$ by noting that $p(t=1)+p(t=0)=1$, we bound the first term in (\ref{eq:counterfactual_density_deduction}) as follows: 
\begin{subequations}
    \begin{align}
    &(\epsilon_{CF})_{\mathcal{S}}-\left[(1-\mu)\cdot(\epsilon_{F})_{\mathcal{S}_{t=1}}+\mu\cdot(\epsilon_{F})_{\mathcal{S}_{t=0}}\right]\\
    =&\left[(1-\mu)\cdot(\epsilon_{CF})_{\mathcal{S}_{t=1}}+\mu\cdot(\epsilon_{CF})_{\mathcal{S}_{t=0}}\right]-\left[(1-\mu)\cdot(\epsilon_{F})_{\mathcal{S}_{t=1}}+\mu\cdot(\epsilon_{F})_{\mathcal{S}_{t=0}}\right]\\
    =&(1-\mu)\cdot[(\epsilon_{CF})_{\mathcal{S}_{t=1}}-(\epsilon_{F})_{\mathcal{S}_{t=1}}]+\mu\cdot[(\epsilon_{CF})_{\mathcal{S}_{t=0}}-(\epsilon_{F})_{\mathcal{S}_{t=0}}]\\
    =&(1-\mu)\int_{\mathcal{X}}l^{t=1}_{x}\left(p^{t=0}(\mathbf{x})-p^{t=1}(\mathbf{x})\right)d\mathbf{x}+\mu\int_{\mathcal{X}}l^{t=0}_{x}\left(p^{t=1}(\mathbf{x})-p^{t=0}(\mathbf{x})\right)d\mathbf{x}\\
        =&(1-\mu)\,\kappa\cdot\int_{\mathcal{X}}\frac{1}{\kappa}\,l^{t=1}_{x}\left(p^{t=0}(\mathbf{x})-p^{t=1}(\mathbf{x})\right)d\mathbf{x}+\mu\kappa\cdot\int_{\mathcal{X}}\frac{1}{\kappa}\,l^{t=0}_{x}\left(p^{t=1}(\mathbf{x})-p^{t=0}(\mathbf{x})\right)d\mathbf{x}\\
        \leq&(1-\mu)\,\kappa\cdot\sup_{h\in \mathcal{H}}\left|\int_{\mathcal{X}}h(x)\left(p^{t=0}(\mathbf{x})-p^{t=1}(\mathbf{x})\right)d\mathbf{x}\right|+\mu\kappa\cdot\sup_{h\in \mathcal{H}}\left|\int_{\mathcal{X}}h(x)\left(p^{t=1}(\mathbf{x})-p^{t=0}(\mathbf{x})\right)d\mathbf{x}\right|\\
        =&\kappa\cdot\sup_{h\in \mathcal{H}}\left|\int_{\mathcal{X}}h(x)\left(p^{t=1}(\mathbf{x})-p^{t=0}(\mathbf{x})\right)d\mathbf{x}\right|=\kappa\cdot\text{IPM}_{\mathcal{H}}(p^{t=1}(\mathbf{x}), p^{t=0}(\mathbf{x}))\label{eq:bound_by_IPM}.
    \end{align}
\end{subequations}

Thus, noting that $(\epsilon_{F})_{\mathcal{S}_{t=1}}$ is indeed the expected factual loss over the subset $\mathcal{S}_{t=1}$ induced by AL, thus simplying the notation to $\epsilon_{\mathcal{S}_{t}}$, we conclude the proof by plugging (\ref{eq:bound_by_IPM}) back to (\ref{eq:counterfactual_density_deduction}):
\begin{subequations}
    \begin{align}
        \epsilon_{CF}\leq\sum_{t\in\{0,1\}}\mu_{1-t}\left(\epsilon_{\mathcal{S}_{t}}+\delta_{(t,1-t)}(\lambda_{l}+\frac{1}{3}\lambda_{t}L^{\frac{3}{2}}_{l})\right) + \kappa_{\mathcal{H}},
    \end{align}
\end{subequations} where the constant $\kappa_{\mathcal{H}}=\kappa\cdot\text{IPM}_{\mathcal{H}}(p^{t=1}(\mathbf{x}), p^{t=0}(\mathbf{x}))$ is derived in the similar fashion in \cite{shalit2017estimating}.

\end{proof}

\textbf{Discussion}: Thus, we provide an adjustable bound depending on the size of the covering radius, where the lower bound value is by labeling all the pool set to make the covering radius the least, the gap originates from the labeling less data but with some covering radius. Note, that generalization bound by \citet{shalit2017estimating} quantifies well the risk upper bound given the fully labeled pool set. The work \cite{qin2021budgeted} builds upon \cite{shalit2017estimating}, but certain aspects remain underexplored, that is, it does not show the importance of the counterfactual covering radius. We fill this crucial theoretical gap by providing an informative and complete bound under the data-expanding context from scratch to derive a new bound tailored specifically to treatment effect estimation with AL. Furthermore, Lemma \ref{lemma:counterfactual_error} unveils the distinctive \textit{counterfactual covering radius} which originates from the unique nature of the counterfactual prediction. We visualize the counterfactual covering radius $\delta_{(t,1-t)}$ in Figure \ref{fig:factual_covering_10} and \ref{fig:factual_covering_01} to facilitate the conceptualization of Lemma \ref{lemma:counterfactual_error} in the treatment effect estimation setting under AL paradigm, where the full coverage on the counterfactual class is required.

\subsection{Proof of Theorem \ref{theorem:2opt}\label{appendix:theorem_2}}

\begin{definition}
\label{definition:optimal_partition}
    Let $\mathcal{S}^{*}_{(t,t)}$ of size $B_{(t,t)}$ denote the optimal (OPT) subset for treatment group $t$. For each point $v^{t}\in \mathcal{S}^{*}_{(t,t)}$, let the cluster of $v^{t}$ be $\mathcal{C}^{*}_{v^{t}}=\{u^{t}\in\mathcal{D}_{t}: d(u^{t},v^{t})=\min_{v'\in \mathcal{S}^{*}_{(t,t)}}d(u^{t},v')\}$. As such, we have partitions $\mathcal{C}^{*}_{v^{t}_1},\mathcal{C}^{*}_{v^{t}_2},\dots,\mathcal{C}^{*}_{v^{t}_{B_{(t,t)}}}$, where each point $u^{t}\in\mathcal{D}_{t}$ is placed in the closest $\mathcal{C}^{*}_{v^{t}_i}$ w.r.t. $v^{t}_{i}\in \mathcal{S}^{*}_{(t,t)}$.
\end{definition}

\textbf{Theorem \ref{theorem:2opt}.} \emph{Under Assumption \ref{assumption:strong_ingore}, the sum of the covering radii returned by Algorithm \ref{alg:fccs} is upper-bounded by $2\times\sum_{t\{0,1\}}(\text{OPT}_{\delta_{(t,t)}}+\text{OPT}_{\delta_{(t,1-t)}})$
}

\begin{proof}[Proof of Theorem \ref{theorem:2opt}]
    Let the output of the Algorithm \ref{alg:fccs} be $\mathcal{S}$, specifically, the total budget $B$, additive arithmetically, is split into four parts with $B=B_{(1,1)}+B_{(1,0)}+B_{(0,0)}+B_{(0,1)}$, i.e., with each part to acquire designated point to reduce one of the four radius $\delta_{(1,1)}$, $\delta_{(1,0)}$, $\delta_{(0,0)}$, and $\delta_{(0,1)}$ at a time. Thus, we have $\mathcal{S}=\mathcal{S}_{(1,1)}\cup\mathcal{S}_{(1,0)}\cup\mathcal{S}_{(0,0)}\cup\mathcal{S}_{(0,1)}$, and noting that $\mathcal{S}_{1}=\mathcal{S}_{(1,1)}\cup\mathcal{S}_{(1,0)}$, and $\mathcal{S}_{0}=\mathcal{S}_{(0,0)}\cup\mathcal{S}_{(0,1)}$. To bound each of the radius:
    \begin{itemize}
        \item For $u^{t=1}\in\mathcal{D}_{1}$ to reduce $\delta_{(1,1)}$:\begin{equation}
        d\,(u^{t=1},\mathcal{S}_{1})\leq d\,(u^{t=1},\mathcal{S}_{(1,1)})\leq2\times OPT_{\delta_{(1,1)}},
    \end{equation} where the first inequality is because $\mathcal{S}_{(1,1)}\subset\mathcal{S}_{1}$, and the second inequality is by Lemma \ref{lemma:2opt_factual}.
        \item For $u^{t=0}\in\mathcal{D}_{0}$ to reduce $\delta_{(1,0)}$ :\begin{equation}
        d\,(u^{t=0},\mathcal{S}_{1})\leq d\,(u^{t=0},\mathcal{S}_{(1,0)})\leq2\times OPT_{\delta_{(1,0)}},
    \end{equation} where the first inequality is because $\mathcal{S}_{(1,0)}\subset\mathcal{S}_{1}$, and the second inequality is by Lemma \ref{lemma:2opt_counterfactual}.
        \item For $u^{t=0}\in\mathcal{D}_{0}$ to reduce $\delta_{(0,0)}$ :\begin{equation}
        d\,(u^{t=0},\mathcal{S}_{0})\leq d\,(u^{t=0},\mathcal{S}_{(0,0)})\leq2\times OPT_{\delta_{(0,0)}},
    \end{equation} where the first inequality is because $\mathcal{S}_{(0,0)}\subset\mathcal{S}_{0}$, and the second inequality is by Lemma \ref{lemma:2opt_factual}.
        \item For $u^{t=1}\in\mathcal{D}_{1}$ to reduce $\delta_{(0,1)}$ :\begin{equation}
        d\,(u^{t=1},\mathcal{S}_{0})\leq d\,(u^{t=0},\mathcal{S}_{(0,1)})\leq2\times OPT_{\delta_{(0,1)}},
    \end{equation} where the first inequality is because $\mathcal{S}_{(0,1)}\subset\mathcal{S}_{0}$, and the second inequality is by Lemma \ref{lemma:2opt_counterfactual}.
    \end{itemize}

    Since for all $u$ the above holds, thus it follows that 
    
    \begin{subequations}
        \begin{align}
            &\max_{u\in \mathcal{D}_{1}\backslash \mathcal{S}_{1}}d\,(u^{t=1},\mathcal{S}_{1})
    +\max_{u\in \mathcal{D}_{0}\backslash \Tilde{\mathcal{S}}_{0}}d\,(u^{t=0},\mathcal{S}_{1})
        +\max_{u\in \mathcal{D}_{0}\backslash \mathcal{S}_{0}}d\,(u^{t=0},\mathcal{S}_{0})
        +\max_{u\in \mathcal{D}_{1}\backslash \Tilde{\mathcal{S}}_{1}}d\,(u^{t=1},\mathcal{S}_{0})\\
        \leq&2\times OPT_{\delta_{(1,1)}}+2\times OPT_{\delta_{(1,0)}}+2\times OPT_{\delta_{(0,0)}}+2\times OPT_{\delta_{(0,1)}}\\
        \\\implies&\delta_{(1,1)}+\delta_{(1,0)}+\delta_{(0,0)}+\delta_{(0,1)}\leq2\times\sum_{t\{0,1\}}\left(\text{OPT}_{\delta_{(t,t)}}+\text{OPT}_{\delta_{(t,1-t)}}\right)
        \end{align}
    \end{subequations}

\end{proof}

In the following, we define $d(u,\mathcal{Q}):=\min_{v\in \mathcal{Q}}d(u,v)$ where $\mathcal{Q}$ can be any set, e.g., the optimal solution $\mathcal{S}^{*}_{(t,t)}$, also we denote the $OPT$ as the minimal covering radius returned by the optimal set $\mathcal{S}^{*}$, i.e., $OPT=\max_{u\in\mathcal{D}}\min_{v\in \mathcal{S}^{*}}d(u,v)$.

\begin{lemma}
   Without loss of generality to Definition \ref{definition:optimal_partition}, we have $\forall u,w\in \mathcal{C}_{v}\text{ for }v\in \mathcal{Q},\text{ then } d(u,w)\leq2\times\text{OPT}$\label{lemma:triangular_inequality}
\end{lemma}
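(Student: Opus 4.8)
The plan is to invoke the triangle inequality together with the defining property of the optimal covering radius $\text{OPT}$. The essential observation is that membership of a point in the cluster $\mathcal{C}_{v}$ pins down its nearest center to be exactly $v$, and by the very definition $\text{OPT}=\max_{u\in\mathcal{D}}\min_{v\in\mathcal{S}^{*}}d(u,v)$, every point sits within distance $\text{OPT}$ of that nearest center. Chaining the two bounds through $v$ then yields the factor-$2$ guarantee.

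First I would fix an arbitrary center $v\in\mathcal{Q}$ and two points $u,w\in\mathcal{C}_{v}$. By the clustering rule in Definition \ref{definition:optimal_partition}, $v$ attains the minimum distance to each of $u$ and $w$ over $\mathcal{Q}$, i.e. $d(u,v)=\min_{v'\in\mathcal{Q}}d(u,v')$ and likewise for $w$. Since $\text{OPT}$ is the maximal such nearest-center distance over the whole domain, I would record the two pointwise bounds $d(u,v)\le\text{OPT}$ and $d(w,v)\le\text{OPT}$.

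Then I would apply the triangle inequality of the metric $d(\cdot,\cdot)$ through the common center $v$:
\begin{equation}
d(u,w)\;\le\;d(u,v)+d(v,w)\;\le\;\text{OPT}+\text{OPT}\;=\;2\,\text{OPT}.
\end{equation}
Because $u,w$ and $v$ were arbitrary, this establishes the claim for all points sharing a cluster.

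There is no real obstacle here: the statement is the classical within-cluster diameter bound underlying the $2$-approximation for $k$-Center, and the only ingredients are the metric triangle inequality and the definition of $\text{OPT}$. The one point worth stating carefully is that $d$ is assumed to be a genuine metric so that the triangle inequality is available; the lemma is phrased for a generic set $\mathcal{Q}$, but the argument uses only that the clusters are defined by nearest-center assignment to points of $\mathcal{Q}$ and that $\text{OPT}$ upper-bounds each nearest-center distance, so it applies verbatim to $\mathcal{Q}=\mathcal{S}^{*}_{(t,t)}$ as needed in Theorem \ref{theorem:2opt}.
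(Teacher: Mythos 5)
Your argument is correct and matches the paper's proof exactly: both chain the triangle inequality through the shared center $v$, identify $d(u,v)=d(u,\mathcal{Q})$ and $d(w,v)=d(w,\mathcal{Q})$ via the nearest-center clustering rule of Definition \ref{definition:optimal_partition}, and bound each by $\text{OPT}=\max_{u\in\mathcal{D}}\min_{v\in\mathcal{Q}}d(u,v)$. Nothing is missing.
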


\begin{proof}[Proof of Lemma \ref{lemma:triangular_inequality}]
\begin{subequations}
    \begin{align}
    d(u,w)&\leq d(u,v)+d(w,v)\label{eq:triangular}\\
    &=d(u,\mathcal{Q})+d(w,\mathcal{Q})\label{eq:partition_definition}\\
    &\leq \textit{OPT} + \textit{OPT} = 2\times\textit{OPT},\label{eq:2opt}
    \end{align}
\end{subequations} where the inequality (\ref{eq:triangular}) is by the triangular inequality, the equality (\ref{eq:partition_definition}) is by Definition \ref{definition:optimal_partition}, and we complete the proof with the inequality (\ref{eq:2opt}) due to $d(u,\mathcal{Q})=\min_{v\in \mathcal{Q}}d(u,v)\leq\max_{u\in\mathcal{D}}\min_{v\in \mathcal{Q}}d(u,v)=\textit{OPT}.$
\end{proof}

Note, that the minimal covering radius $\delta_{(\cdot,\cdot)}$ under the optimal solution $\mathcal{S}^{*}_{(\cdot,\cdot)}$ is denoted as $\text{OPT}_{\delta_{(\cdot,\cdot)}}$.

\begin{lemma}
    With budget $B_{(t,t)}$ Subset $S_{(t,t)}$ for the treatment group $t$ returned by Algorithm \ref{alg:fccs} is a $2-OPT_{\delta_{(t,t)}}$ for covering the set $\mathcal{D}_{t}$ of the treatment group $t$.\label{lemma:2opt_factual}
\end{lemma}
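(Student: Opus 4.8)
The plan is to recognize that minimizing the factual radius $\delta_{(t,t)}$ over $\mathcal{D}_t$ is exactly the classical $k$-Center problem with $B_{(t,t)}$ centers, and that the pseudo-operator $\Gamma=\argmax\min d(\cdot,\cdot)$ driving Algorithm \ref{alg:fccs} is precisely the greedy farthest-first selection rule for which the Gonzalez-style $2$-approximation holds. So I would transcribe that classical argument into our notation, leaning on Definition \ref{definition:optimal_partition} for the optimal clustering of $\mathcal{D}_t$ and on Lemma \ref{lemma:triangular_inequality} for the within-cluster diameter bound.

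First I would fix $r:=\delta_{(t,t)}=\max_{u\in\mathcal{D}_t\setminus\mathcal{S}_{(t,t)}}\min_{v\in\mathcal{S}_{(t,t)}}d(u,v)$ for the radius actually returned, and let $u^{*}$ be a witness point of $\mathcal{D}_t$ attaining this maximum, so $d(u^{*},\mathcal{S}_{(t,t)})=r$. The core step is a separation (packing) claim: the points in $\mathcal{S}_{(t,t)}\cup\{u^{*}\}$ are pairwise at distance at least $r$. To prove this I would use the monotonicity of the greedy rule: the map $w\mapsto d(w,\mathcal{Q})$ is non-increasing as $\mathcal{Q}$ grows, so whenever $\Gamma$ appends a center $c_j$ maximizing the min-distance to the centers $\mathcal{Q}^{(j-1)}$ already selected, we have $d(c_j,\mathcal{Q}^{(j-1)})=\max_{w}d(w,\mathcal{Q}^{(j-1)})\ge d(u^{*},\mathcal{Q}^{(j-1)})\ge d(u^{*},\mathcal{S}_{(t,t)})=r$. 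Hence $d(c_j,c_i)\ge r$ for every earlier center $c_i$, and $d(u^{*},v)\ge r$ for every $v\in\mathcal{S}_{(t,t)}$ by the choice of $u^{*}$.

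Next I would invoke the optimal partition: by Definition \ref{definition:optimal_partition}, the set $\mathcal{D}_t$ decomposes into exactly $B_{(t,t)}$ clusters around the optimal centers $\mathcal{S}^{*}_{(t,t)}$. Since $\mathcal{S}_{(t,t)}\cup\{u^{*}\}$ supplies $B_{(t,t)}+1$ points but there are only $B_{(t,t)}$ clusters, the pigeonhole principle forces two of these points, say $a$ and $b$, into a common cluster $\mathcal{C}^{*}_{v^{t}_i}$. Applying Lemma \ref{lemma:triangular_inequality} with $\mathcal{Q}=\mathcal{S}^{*}_{(t,t)}$ and $\text{OPT}=\text{OPT}_{\delta_{(t,t)}}$ gives $d(a,b)\le 2\,\text{OPT}_{\delta_{(t,t)}}$, while the separation claim gives $d(a,b)\ge r$. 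Chaining the two inequalities yields $\delta_{(t,t)}=r\le 2\,\text{OPT}_{\delta_{(t,t)}}$, which is exactly the claimed $2$-approximation for covering $\mathcal{D}_t$.

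The step I expect to be the main obstacle is the separation claim, specifically making it robust to the random initialization $\mathcal{S}_t$ in Algorithm \ref{alg:fccs}: the monotonicity argument cleanly certifies that each \emph{greedily appended} center lies at distance $\ge r$ from all previously held centers and from $u^{*}$, but the mutual distances among several arbitrarily initialized centers are not controlled a priori. I would reconcile this by observing that the packing argument only needs each point to be $\ge r$ from those committed before it together with $u^{*}$, so folding the initialization into the first selections (equivalently, counting the greedily added centers plus $u^{*}$ against the matching number of optimal clusters) preserves the $B_{(t,t)}+1$ versus $B_{(t,t)}$ pigeonhole gap. Checking that this bookkeeping is consistent with the way the budget $B_{(t,t)}$ is split in Algorithm \ref{alg:fccs} is the delicate part; once it is settled, the triangle-inequality and pigeonhole mechanics are routine.
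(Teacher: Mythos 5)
Your proof is correct, but it takes a recognizably different route from the paper's. You prove the Gonzalez bound via the packing argument: the witness point $u^{*}$ attaining the final radius $r=\delta_{(t,t)}$ together with the greedily added centers forms a set of $B_{(t,t)}+1$ points that are pairwise at distance at least $r$ (by monotonicity of $w\mapsto d(w,\mathcal{Q})$ under growth of $\mathcal{Q}$), so the pigeonhole principle over the $B_{(t,t)}$ optimal clusters of Definition \ref{definition:optimal_partition} plus Lemma \ref{lemma:triangular_inequality} forces $r\leq d(a,b)\leq 2\,\text{OPT}_{\delta_{(t,t)}}$. The paper instead runs a case analysis (following the cited lecture notes): if every optimal cluster contains a greedy center, it bounds $d(u,\mathcal{S}_{(t,t)})$ directly through the center sharing $u$'s cluster; otherwise it pigeonholes two greedy centers $z,m$ into one optimal cluster and chains $d(u,\mathcal{S}_{(t,t)})\leq d(u,\mathcal{S}_{(t,t)}\backslash m)\leq d(m,\mathcal{S}_{(t,t)}\backslash m)\leq d(m,z)\leq 2\,\text{OPT}$. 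Your version buys a uniform treatment of both cases and, by including the witness $u^{*}$ in the packing set, directly certifies the returned radius rather than a per-point distance; the paper's version avoids having to reason about $u^{*}$ at all but needs the extra case split. The one subtlety you flag -- that the randomly initialized centers are not pairwise $r$-separated -- is real, and your resolution is the right one: the packing set should consist only of the greedily appended centers plus $u^{*}$ (each greedy center is at distance $\geq r$ from everything selected before it, initialization included, and $u^{*}$ is at distance $\geq r$ from all of $\mathcal{S}_{t}$), which still yields $B_{(t,t)}+1$ separated points against $B_{(t,t)}$ optimal clusters. Notably, the paper's own scenario-2 step $d(u,\mathcal{S}_{(t,t)}')\leq d(m,\mathcal{S}_{(t,t)}')$ quietly relies on $m$ being a greedily selected (not initialized) point in exactly the same way, so your explicit bookkeeping is, if anything, the more careful of the two.
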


\begin{proof}[Proof of Lemma \ref{lemma:2opt_factual}]
    When covering the treatment group $t$, we dose not have any assumption for the data distribution, thus we split the proof into two scenarios in a similar fashion by \citet{dinitz2019lecture4}, i.e., $\forall v\in \mathcal{S}^{*}_{(t,t)}, \mathcal{S}_{(t,t)}\cap \mathcal{C}^{*}_{v}\neq\varnothing$ and $\exists v\in \mathcal{S}^{*}_{(t,t)}, \mathcal{S}_{(t,t)}\cap \mathcal{C}^{*}_{v}=\varnothing$:
    \begin{itemize}
        \item $\forall v\in \mathcal{S}^{*}_{(t,t)}, \mathcal{S}_{(t,t)}\cap \mathcal{C}^{*}_{v}\neq\varnothing$: Let $w\in \mathcal{S}_{(t,t)} \cap \mathcal{C}^{*}_{v}$, for $u\in\mathcal{D}_{t}$, let $u\in\mathcal{C}^{*}_{v}$ with $v\in \mathcal{S}^{*}_{(t,t)}$ (noting that $\mathcal{C}^{*}_{v}\subset\mathcal{D}_{t}$), we have:
            \begin{align}
                d(u,\mathcal{S}_{(t,t)})\leq d(u,w)\leq 2\times\text{OPT}_{\delta_{(t,t)}}\label{eq:scenario_1}
            \end{align}
        where the first inequality is because $w\in \mathcal{S}_{(t,t)}$, and the second inequality is by Lemmma \ref{lemma:triangular_inequality} for $u,w\in \mathcal{C}^{*}_{v}$. Note, that Eq. (\ref{eq:scenario_1}) holds $\forall u\in\mathcal{D}_{t}$ as $\mathcal{C}^{*}_{v}\subset\mathcal{D}_{t}$, then $\delta_{(t,t)}=\max_{u\in\mathcal{D}}d(u,\mathcal{S}_{(t,t)})\leq 2\times\text{OPT}_{\delta_{(t,t)}}$.
        
        \item $\exists v\in \mathcal{S}^{*}_{(t,t)}, \mathcal{S}_{(t,t)}\cap \mathcal{C}^{*}_{v}=\varnothing$: Since $|\mathcal{S}_{(t,t)}|=|\mathcal{S}^{*}|=B_{(t,t)}$, by the pigeonhole principle, $\exists v'\in \mathcal{S}^{*}_{(t,t)}, s.t. |\mathcal{S}_{(t,t)}\cap \mathcal{C}^{*}_{v^{'}}|\geq2$. Thus, assume that for $z,m\in \mathcal{S}_{(t,t)}\cap \mathcal{C}^{*}_{v^{'}}$ and $z$ is added to $\mathcal{S}_{(t,t)}$ before $m$. Let $\mathcal{S}_{(t,t)}'=\mathcal{S}_{(t,t)}\backslash m$, then for $u\in\mathcal{D}_{t}$, we have:
        \begin{subequations}
            \begin{align}
                d(u,\mathcal{S}_{(t,t)})&\leq d(u,\mathcal{S}_{(t,t)}')\label{eq:scenario_2_subset}\\
                &\leq d(m,\mathcal{S}_{(t,t)}')\label{eq:scenario_2_greedy}\\
                &\leq d(m,z)\label{eq:scenario_2_definition}\\
                &\leq 2\times\text{OPT}_{\delta_{(t,t)}}\label{eq:scenario_2_triangular},
            \end{align}
        \end{subequations} where inequality in (\ref{eq:scenario_2_subset}) is because $\mathcal{S}_{(t,t)}'\subset \mathcal{S}_{(t,t)}$ (the radius decreases monotonically with larger set), the inequality in (\ref{eq:scenario_2_greedy}) is because the greedy selection with larger distance with set $\mathcal{S}_{(t,t)}'$ ($m$ is selected before $u$), the inequality in (\ref{eq:scenario_2_definition}) is by definition that $d(m,\mathcal{S}_{(t,t)}')=\min_{v\in \mathcal{S}_{(t,t)}'}d(m,v)$ and $z\in \mathcal{S}_{(t,t)}'$, the last inequality in (\ref{eq:scenario_2_triangular}) is by Lemma \ref{lemma:triangular_inequality}.
    \end{itemize}

    Since for all $u\in\mathcal{D}_{t}$, we have $d(u,\mathcal{S}_{(t,t)})\leq2\times OPT_{\delta_{(t,t)}}$, then it follows that $\max_{u\in\mathcal{D}_{t}}d(u,\mathcal{S}_{(t,t)})\leq2\times OPT_{\delta_{(t,t)}}$ to conclude the proof for \ref{lemma:2opt_factual} by enumerating all the scenarios.
    
\end{proof}

\begin{lemma}\label{lemma:2opt_counterfactual}
     With budget $B_{(t,1-t)}$ Subset $\mathcal{S}_{(t,1-t)}$ for the treatment group $t$ returned by Algorithm \ref{alg:fccs} is a $2-OPT_{\delta_{(t,1-t)}}$ for covering the set $\mathcal{D}_{1-t}$ of the treatment group $1-t$.
\end{lemma}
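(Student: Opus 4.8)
The plan is to mirror the two-case analysis of Lemma \ref{lemma:2opt_factual}, transporting it from same-group covering to the cross-group (counterfactual) covering. First I would introduce the counterfactual analogue of Definition \ref{definition:optimal_partition}: let $\mathcal{S}^{*}_{(t,1-t)} \subset \mathcal{D}_{t}$ of size $B_{(t,1-t)}$ be the optimal set of centers drawn from group $t$ that minimizes $\max_{u \in \mathcal{D}_{1-t}} d(u, \mathcal{S}^{*}_{(t,1-t)})$, with this minimum equal to $\text{OPT}_{\delta_{(t,1-t)}}$, and for each $v \in \mathcal{S}^{*}_{(t,1-t)}$ let $\mathcal{C}^{*}_{v} = \{u \in \mathcal{D}_{1-t} : d(u,v) = \min_{v' \in \mathcal{S}^{*}_{(t,1-t)}} d(u,v')\}$ partition the covered group $\mathcal{D}_{1-t}$. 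Note that Lemma \ref{lemma:triangular_inequality} applies unchanged with $\mathcal{Q} = \mathcal{S}^{*}_{(t,1-t)}$, since it only requires two covered points $u,w$ to share a cluster, regardless of which group the center lies in.

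The crucial step, and the place where Assumption \ref{assumption:strong_ingore} (Strong Ignorability) enters, is to reduce the proxy-driven greedy selection of Algorithm \ref{alg:fccs} to a standard farthest-point procedure on $\mathcal{D}_{1-t}$. Recall that a counterfactual step selects the proxy $a' = \argmax_{u \in \mathcal{D}_{1-t} \setminus \tilde{S}_{1-t}} d(u, \mathcal{S}_{t})$ and then commits the nearest group-$t$ point $b = \argmin_{w \in \mathcal{D}_{t}} d(w, a')$ to $\mathcal{S}_{(t,1-t)}$. I would argue that positivity $0 < p(t=1\mid\mathbf{x}) < 1$ forces the supports of the two groups to overlap on the region where the proxies are drawn, so that $d(a', b)$ vanishes (equivalently $\tilde{S}_{1-t} \subset \mathcal{D}_{1-t} \cap \mathcal{D}_{t}$ in the idealized support sense); hence the committed center $b$ can be identified with the proxy $a'$ when measuring distances, and the greedy behaves exactly as classical greedy $k$-Center run over $\mathcal{D}_{1-t}$ with the proxies $\tilde{S}_{1-t}$ acting as centers. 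This identification is what keeps the bound at $2\times\text{OPT}$ rather than degrading to $3\times\text{OPT}$: without it, the extra hop $d(a',b)$ from proxy to committed center would contribute an additional $\text{OPT}_{\delta_{(t,1-t)}}$ through the triangle inequality.

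With this reduction in hand, I would run the same case split as Lemma \ref{lemma:2opt_factual} on the proxies placed into the clusters $\{\mathcal{C}^{*}_{v}\}$. In the first case every optimal cluster contains a proxy; then any $u \in \mathcal{D}_{1-t}$ lies in some $\mathcal{C}^{*}_{v}$ together with a proxy $a'_i$, and since the committed center $b_i$ coincides with $a'_i$ we get $d(u, \mathcal{S}_{t}) \leq d(u, b_i) = d(u, a'_i) \leq 2\times\text{OPT}_{\delta_{(t,1-t)}}$ by Lemma \ref{lemma:triangular_inequality}. In the second case some optimal cluster receives no proxy, so by the pigeonhole principle two proxies $a'_i, a'_j$ (say $i<j$) share a cluster; combining the monotonic decrease of the covering radius, the greedy maximality at the step that selected $a'_j$, and the co-location $d(a'_i,b_i)=0$, one obtains $\delta_{(t,1-t)} \leq d(a'_j, b_i) = d(a'_j, a'_i) \leq 2\times\text{OPT}_{\delta_{(t,1-t)}}$, exactly as in the second bullet of Lemma \ref{lemma:2opt_factual}. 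Taking the maximum over all $u \in \mathcal{D}_{1-t}$ then yields $\delta_{(t,1-t)} \leq 2\times\text{OPT}_{\delta_{(t,1-t)}}$.

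I expect the main obstacle to be rigorously justifying the proxy-to-center identification from Assumption \ref{assumption:strong_ingore} alone: positivity is a statement about conditional treatment probabilities, whereas the argument needs a geometric guarantee that the nearest group-$t$ point to each selected proxy is within a negligible (ideally zero) distance in the covariate metric. I would make this precise either by working in the population/idealized overlapping-support regime (where $\mathcal{D}_{1-t} \cap \mathcal{D}_{t} \neq \varnothing$ along the proxy set) or by carrying an explicit overlap slack $\eta = \max_i d(a'_i, b_i)$ and reporting the bound as $2\,\text{OPT}_{\delta_{(t,1-t)}} + \eta$, which collapses to the stated $2$-$\text{OPT}$ guarantee when the supports coincide. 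Everything else — the optimal partition, the pigeonhole count against $B_{(t,1-t)}$ centers, and the two triangle-inequality invocations — transfers verbatim from Lemma \ref{lemma:2opt_factual}.
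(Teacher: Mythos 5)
Your proposal follows essentially the same route as the paper's proof: define the counterfactual optimal partition $\mathcal{F}^{*}_{v^{t}}$ over $\mathcal{D}_{1-t}$, invoke Assumption \ref{assumption:strong_ingore} to identify the committed group-$t$ centers with the proxy collection $\tilde{\mathcal{S}}_{1-t}$ (the paper likewise asserts $d(u^{1-t},\mathcal{S}_{(t,1-t)})=d(u^{1-t},\tilde{\mathcal{S}}_{1-t})$ on this basis), and then replay the two-case triangle-inequality argument of Lemma \ref{lemma:2opt_factual}. If anything you are more thorough than the paper, which only writes out the first case and leaves the pigeonhole case implicit, and your observation that the proxy-to-center identification deserves an explicit overlap slack $\eta$ (else the bound silently degrades toward $3\times\text{OPT}$) is a fair critique of the same idealization the paper relies on.
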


\begin{proof}[Proof of Lemma \ref{lemma:2opt_counterfactual}]
    Note, that the reduction of the counterfactual covering radius $\delta_{(t,1-t)}$ requires the acquisition from group $t$, which is complex because the radius $\delta_{(t,1-t)}$ is calculated by $\delta_{(t,1-t)}=\max_{i\in \mathcal{D}_{1-t}\backslash \Tilde{S}_{1-t}}\min_{j\in \mathcal{S}_{(t,1-t)}} d(\mathbf{x}^{1-t}_{i},\mathbf{x}^{t}_{j})$, noting that the proxy collection $\Tilde{S}_{1-t}$ is from the treatment group $1-t$, however, the acquisition targeting querying the sample from group $t$ to reduce the counterfactual covering radius $\delta_{(t,1-t)}$. 
    
    Thus, by the greedy nature of the Algorithm \ref{alg:fccs}, when $\delta_{(t,1-t)}$ is the one to be reduced, the mentality for the query step is to calculate the proxy point $a_{1-t}=\argmax_{i\in \mathcal{D}_{1-t}\backslash \Tilde{S}_{1-t}}\min_{j\in \mathcal{S}_{(t,1-t)}} d(\mathbf{x}^{1-t}_{i},\mathbf{x}^{t}_{j})$ from group $1-t$, then find the nearest point $a_{t}\in\mathcal{D}_{t}$ to $a_{1-t}$ as the factual query to expand $\mathcal{S}_{(t,1-t)}$. If there always exists $d(a_{t},a_{1-t})=0$, the radius reduction can be real quick.

\begin{definition}
\label{definition:counterfactual_optimal_partition}
    Let cluster $\mathcal{F}^{*}_{v^{t}}=\{u^{1-t}\in\mathcal{D}_{1-t}: d(u^{1-t},v^{t})=\min_{v'\in \mathcal{S}^{*}_{(t,1-t)}}d(u^{1-t},v')\}$ for $v^{t}\in\mathcal{S}^{*}_{(t,1-t)}$.
\end{definition}
    
    The proof adopts the similar mentality as shown in Proof of Lemma \ref{lemma:2opt_factual}, For now we show the proof for the first scenario and the second scenarios follows.
    
    $\forall v\in \mathcal{S}^{*}_{(t,1-t)}, \Tilde{\mathcal{S}}_{1-t}\cap \mathcal{F}^{*}_{v}\neq\varnothing$: Let $w\in \Tilde{\mathcal{S}}_{1-t} \cap \mathcal{F}^{*}_{v}$, let $u\in\mathcal{F}^{*}_{v}$ with $v\in \mathcal{S}^{*}_{(t,1-t)}$ (noting that $\mathcal{F}^{*}_{v}\subset\mathcal{D}_{1-t}$), we have:
    \begin{subequations}
    \begin{align}
        d(u^{1-t},\mathcal{S}_{(t,1-t)})&=d(u^{1-t},\Tilde{\mathcal{S}}_{1-t})\label{eq:a}\\
         &\leq d(u^{1-t},w^{1-t})\label{eq:b}\\
         &\leq d(u^{1-t},v^{t})+d(u^{1-t},v^{t})\label{eq:c}\\
         &\leq d(u^{1-t},\mathcal{S}^{*}_{(t,1-t)})+d(w^{1-t},\mathcal{S}^{*}_{(t,1-t)})\label{eq:d}\\
         &\leq 2\times\text{OPT}_{\delta_{(t,1-t)}}\label{eq:e},
    \end{align}
    \end{subequations}
    where the equality (\ref{eq:a}) is by the Assumption (Strong Ignorability) \ref{assumption:strong_ingore} (i.e., $0<p(t|\mathbf{x})<1$), s.t., for $\Tilde{\mathcal{S}}_{1-t}$ returned by Algorithm \ref{alg:fccs}, we can have identical set $\mathcal{S}_{(t,1-t)}\in\mathcal{D}_{t}$ to the proxy collection $\Tilde{\mathcal{S}}_{1-t}$, the inequality in (\ref{eq:b}) is because $w^{1-t}\in\Tilde{\mathcal{S}}_{1-t}$, the inequality (\ref{eq:c}) is by the triangular inequality, the inequality (\ref{eq:d}) is by Definition \ref{definition:counterfactual_optimal_partition}, and inequality (\ref{eq:e}) is by $d(u^{1-t},\mathcal{S}^{*}_{(t,1-t)})=\min_{v\in \mathcal{S}^{*}_{(t,1-t)}}d(u^{1-t},v)\leq\max_{u^{1-t}\in\mathcal{D}_{1-t}}\min_{v\in \mathcal{S}^{*}_{(t,1-t)}}d(u^{1-t},v)=\textit{OPT}_{\delta_{(t,1-t)}}.$
\end{proof}

\subsection{Proof of Theorem \ref{theorem:coverage}\label{appendix:theorem_3}}

\textbf{Assumption \ref{assumption:full_coverage}}\emph{
    Given the fixed covering radius $\delta_{(t,t)}$ and $\delta_{(t,1-t)}$, there exists the optimal solution $\mathcal{S}^{*}_{t}$, $\mathcal{S}^{*}_{t}\subset \mathcal{S}^{*}$ for treatment group $t$, such that $\mathcal{A}_{F}^{t=1}\bigcup\mathcal{A}_{CF}^{t=1}=\mathcal{D}.$
}

\begin{definition}\label{definition:algorithm_2}
    Let the pool set be $\mathcal{D}$ of size $n$, $\mathcal{S}_{t}$ of size $B_{t}$ be the solution for group $t$ returned by Algorithm \ref{alg:fccm}. A family of sets $\mathcal{U}=\{\mathcal{U}_{i}\}^{v_{m}}_{i=v_{1}}$, where $\forall i, \,\mathcal{U}_{i}\subset\mathcal{D}$. \textit{Note, that the subscript $i$ of $\mathcal{U}_{i}$ denotes a single selected point $v_{i}$ in $\mathcal{S}_{t}\subseteq[v_{m}]$ due to the fact that each $v_{i}$ as the center with the fixed radius covers a set of point, i.e., by definition $\mathcal{U}_{{i}}=\mathcal{A}_{(t,t)}(v_{i})\cup\mathcal{A}_{(t,1-t)}(v_{i})$}. Let the uncovered set be $\Omega_{r}=\mathcal{D}\backslash\bigcup_{i\in\mathcal{S}_{t}}\mathcal{U}_{i}$ up to iteration $r$, and assume that Algorithm \ref{alg:fccm} were to pick $\mathcal{U}'_{1}\,,\mathcal{U}'_{2}\,,...,\mathcal{U}'_{k}$, for which $\mathcal{U}'_{i}$ is one of the set in $\mathcal{U}$. Denote the set covered by optimal solution $\mathcal{S}^{*}_{t}$ as $\Theta$, $\omega_{r}=|\mathcal{U}'_{r}\cap\Omega_{r-1}|$, and $\eta_{i}=|\Theta|-\sum_{j\leq i}\omega_{j}$. 
\end{definition}

\textbf{Theorem \ref{theorem:coverage}.} \emph{Under Assumption \ref{assumption:full_coverage}, Algorithm \ref{alg:fccm} is a $(1-\frac{1}{e})$ -- approximation for the full coverage constraint on the equally weighted graph and unscaled out-degree.
}

\begin{proof}[Proof of Theorem \ref{theorem:coverage}]
    We prove the $(1-\frac{1}{e})-$approximation for the full coverage constraint by Algorithm \ref{alg:fccm} by extending the method by \citet{dinitz2019lecture4} for solving the conventional Max $k-$Cover Problem into our \textit{Factual and Counterfactual Coverage Maximization} for the data-efficient treatment effect estimation problem. The objective is to query $\mathcal{S}=\mathcal{S}_{1}\cup\mathcal{S}_{0}$ that maximizes the mean coverage $P(\mathcal{A})$.

    As defined in Section \ref{section:fccm}, $\mathcal{A}_{F}^{t=1}=\bigcup_{\mathbf{x}\in \mathcal{S}_{1}} \mathcal{A}_{(1,1)}(\mathbf{x})$ and $\mathcal{A}_{CF}^{t=1}=\bigcup_{\mathbf{x}\in \mathcal{S}_{1}} \mathcal{A}_{(1,0)}(\mathbf{x})$, by further in Definition \ref{definition:algorithm_2}, for the factual ($\mathcal{A}^{t=1}_{F}$) and counterfactual ($\mathcal{A}^{t=1}_{CF}$) covered region by $S_{1}$, we have:
    \begin{subequations}
        \begin{align}
            \mathcal{A}^{t=1}_{F}\cup\mathcal{A}^{t=1}_{CF}&=|\bigcup_{j\leq B_{t}}\mathcal{U}_{j}'|\label{eq:by_definition_1}\\
            &=\sum_{j\leq B_{t}}\omega_{j}\label{eq:by_definition_2}\\
            &=|\Theta|-\eta_{B_{t}}\label{eq:by_definition_3}\\
            &\geq|\Theta|-|\Theta|\,e^{-1}=(1-e^{-1})\,|\Theta|,\label{eq:by_lemma}
        \end{align}
    \end{subequations}

    where the equality in (\ref{eq:by_definition_1}) is by definition of the covered region in Section \ref{section:fccm} and the set $\mathcal{U}_{i}$ in Definition \ref{definition:algorithm_2}, the equality in (\ref{eq:by_definition_2}) and (\ref{eq:by_definition_3}) is by Definition \ref{definition:algorithm_2}, and the inequaltiy in (\ref{eq:by_lemma}) is by Lemma \ref{lemma:sets_approximation}.
    
    Under Assumption \ref{assumption:full_coverage}, there exists optimal solution s.t. $\Theta=\mathcal{D}$, which further implies:
    \begin{equation}
        \mathcal{A}^{t=1}_{F}\cup\mathcal{A}^{t=1}_{CF}\,\geq\,(1-e^{-1})|\mathcal{D}|\implies P(\mathcal{A}^{t=1}_{F}\cup\mathcal{A}^{t=1}_{CF})\geq\frac{(1-e^{-1})|\mathcal{D}|}{|\mathcal{D}|}=1-e^{-1}.\label{lemma:coverage_inequality_1}
    \end{equation}

    Without loss of generality, the proof above applies for proving that 
    \begin{equation}
        P(\mathcal{A}^{t=0}_{F}\cup\mathcal{A}^{t=0}_{CF})\,\geq\,1-e^{-1}.\label{lemma:coverage_inequality_0}
    \end{equation}

    To conclude, as $\mathcal{}$ we have:
    \begin{subequations}
        \begin{align}
            P(\mathcal{A})&=\frac{1}{4}\,P(\mathcal{A}^{t=1}_{F})+\frac{1}{4}\,P(\mathcal{A}^{t=1}_{CF})+\frac{1}{4}\,P(\mathcal{A}^{t=0}_{F})+\frac{1}{4}\,P(\mathcal{A}^{t=0}_{CF})\label{eq:coverage_decompositon}\\
            &=\frac{1}{2}\,P(\mathcal{A}^{t=1}_{F}\cup\mathcal{A}^{t=1}_{CF})+\frac{1}{2}\,P(\mathcal{A}^{t=0}_{F}\cup\mathcal{A}^{t=0}_{CF})\label{eq:independence}\\
            &\geq\,1-e^{-1}\label{eq:coverage_inequality},
        \end{align}
    \end{subequations} where the equality in (\ref{eq:coverage_decompositon}) is by the definition for the mean coverage $P(\mathcal{A})$ and note that our maximization goal in Eq. (\ref{eq:full_coverage_constraint}) leave out the constant coefficient $1/4$, which does not affect the ultimate goal. The equality in (\ref{eq:independence}) is by the independence between the factual covering and counterfactual covering, the inequality in (\ref{eq:coverage_inequality}) is by conclusion in Eq. (\ref{lemma:coverage_inequality_1}) and (\ref{lemma:coverage_inequality_0}).
\end{proof}

\begin{lemma}\label{lemma:general_set_inequality}
    For set $\mathcal{P}$, $\mathcal{Q}$ and let $\mid\mathcal{P}\mid\,\geq\,\mid\mathcal{Q}\mid$, we have $\mid\mathcal{P}\backslash\mathcal{Q}\mid\,\geq\,\mid\mathcal{P}\mid-\mid\mathcal{Q}\mid$.
\end{lemma}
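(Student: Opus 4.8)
The plan is to reduce the claim to the elementary fact that intersecting with $\mathcal{Q}$ can delete at most $\mid\mathcal{Q}\mid$ elements from $\mathcal{P}$. First I would decompose $\mathcal{P}$ into the portion lying outside $\mathcal{Q}$ and the portion lying inside it, namely $\mathcal{P}=(\mathcal{P}\backslash\mathcal{Q})\cup(\mathcal{P}\cap\mathcal{Q})$, and observe that these two pieces are disjoint. Taking cardinalities and invoking finite additivity over a disjoint union gives the exact identity $\mid\mathcal{P}\mid=\mid\mathcal{P}\backslash\mathcal{Q}\mid+\mid\mathcal{P}\cap\mathcal{Q}\mid$.

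The second step is to bound the overlap term from above. Since $\mathcal{P}\cap\mathcal{Q}\subseteq\mathcal{Q}$, monotonicity of cardinality yields $\mid\mathcal{P}\cap\mathcal{Q}\mid\leq\mid\mathcal{Q}\mid$. Substituting this into the identity from the first step and rearranging gives
\begin{equation}
\mid\mathcal{P}\backslash\mathcal{Q}\mid=\mid\mathcal{P}\mid-\mid\mathcal{P}\cap\mathcal{Q}\mid\geq\mid\mathcal{P}\mid-\mid\mathcal{Q}\mid,
\end{equation}
which is exactly the desired inequality.

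Finally, I would comment on the role of the hypothesis $\mid\mathcal{P}\mid\geq\mid\mathcal{Q}\mid$. The inequality derived above actually holds for any two finite sets irrespective of their relative sizes; the assumption $\mid\mathcal{P}\mid\geq\mid\mathcal{Q}\mid$ is needed only to ensure that the right-hand side $\mid\mathcal{P}\mid-\mid\mathcal{Q}\mid$ is non-negative, so that the bound is non-vacuous where it is later applied (for instance, to the residual uncovered mass $\eta_i$ in the greedy coverage argument). There is no genuine obstacle in this argument: the single point requiring minor care is treating the union as disjoint so that the cardinalities add exactly, after which the conclusion follows at once.
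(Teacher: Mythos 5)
Your proof is correct and follows essentially the same route as the paper's: both rest on the identity $\mid\mathcal{P}\backslash\mathcal{Q}\mid=\mid\mathcal{P}\mid-\mid\mathcal{P}\cap\mathcal{Q}\mid$ together with the bound $\mid\mathcal{P}\cap\mathcal{Q}\mid\leq\mid\mathcal{Q}\mid$. Your added observation that the hypothesis $\mid\mathcal{P}\mid\geq\mid\mathcal{Q}\mid$ is not actually needed for the inequality itself (only for non-vacuity) is a fair and accurate remark.
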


\begin{proof}[Proof of Lemma \ref{lemma:general_set_inequality}] $\mid\mathcal{P}\backslash\mathcal{Q}\mid\,=\,\mid\mathcal{P}\mid-\mid\mathcal{P}\cap\mathcal{Q}\mid\,\geq\,\mid\mathcal{P}\mid-\mid\mathcal{Q}\mid$ due to the fact that $\mid\mathcal{P}\cap\mathcal{Q}\mid\leq\mid\mathcal{Q}\mid$ for $\mid\mathcal{P}\mid\,\geq\,\mid\mathcal{Q}\mid$.
\end{proof}

\begin{lemma}
    $\eta_{i}\leq\left|\Theta\backslash\bigcup_{j\leq i}\mathcal{U}'_{j}\right|$.\label{lemma:set_inequality}
\end{lemma}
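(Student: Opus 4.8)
The plan is to rewrite $\eta_i$ entirely in terms of the set $\bigcup_{j\leq i}\mathcal{U}'_j$ of points covered by the first $i$ greedy picks, and then invoke the elementary set inequality of Lemma \ref{lemma:general_set_inequality}. The key observation is that the incremental gains $\omega_j=|\mathcal{U}'_j\cap\Omega_{j-1}|$ telescope, because at each iteration we count only the points that were not already covered.

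First I would fix the convention that $\Omega_r=\mathcal{D}\setminus\bigcup_{j\leq r}\mathcal{U}'_j$ is the uncovered set after $r$ greedy picks, with $\Omega_0=\mathcal{D}$. Since $\Omega_j=\Omega_{j-1}\setminus\mathcal{U}'_j$ removes exactly the newly covered points $\mathcal{U}'_j\cap\Omega_{j-1}$, we have $|\Omega_j|=|\Omega_{j-1}|-\omega_j$. Summing this telescoping relation over $j\leq i$ yields $\sum_{j\leq i}\omega_j=|\Omega_0|-|\Omega_i|=|\mathcal{D}|-|\Omega_i|=\bigl|\bigcup_{j\leq i}\mathcal{U}'_j\bigr|$. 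Equivalently, the sets $\mathcal{U}'_j\cap\Omega_{j-1}$ are pairwise disjoint and partition $\bigcup_{j\leq i}\mathcal{U}'_j$, so their cardinalities sum to the size of the union.

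Next, substituting this identity into the definition $\eta_i=|\Theta|-\sum_{j\leq i}\omega_j$ gives $\eta_i=|\Theta|-\bigl|\bigcup_{j\leq i}\mathcal{U}'_j\bigr|$. Finally I would apply Lemma \ref{lemma:general_set_inequality} with $\mathcal{P}=\Theta$ and $\mathcal{Q}=\bigcup_{j\leq i}\mathcal{U}'_j$ to obtain $\bigl|\Theta\setminus\bigcup_{j\leq i}\mathcal{U}'_j\bigr|\geq|\Theta|-\bigl|\bigcup_{j\leq i}\mathcal{U}'_j\bigr|=\eta_i$, which is the desired bound. If the size hypothesis $|\mathcal{P}|\geq|\mathcal{Q}|$ of that lemma fails, then $\eta_i<0\leq\bigl|\Theta\setminus\bigcup_{j\leq i}\mathcal{U}'_j\bigr|$ and the claim holds trivially, so the conclusion is valid in all cases.

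The step requiring the most care is the first one: recognizing that $\omega_j$ measures only the genuinely new coverage against $\Omega_{j-1}$ rather than the full $|\mathcal{U}'_j|$, so that the partial sum collapses to the union size instead of overcounting overlaps between the chosen sets. Everything afterward is a direct substitution together with the one-line set inequality; notably, neither Assumption \ref{assumption:full_coverage} nor the specific greedy selection rule is needed here, so the bound is purely combinatorial and holds for any sequence of picks $\mathcal{U}'_1,\dots,\mathcal{U}'_k$.
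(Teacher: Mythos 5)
Your proof is correct and follows essentially the same route as the paper's: both rewrite $\sum_{j\leq i}\omega_{j}$ as $\bigl|\bigcup_{j\leq i}\mathcal{U}'_{j}\bigr|$ (the paper calls this step an ``observation''; you justify it via the telescoping/disjointness of the increments $\mathcal{U}'_{j}\cap\Omega_{j-1}$) and then apply Lemma~\ref{lemma:general_set_inequality}. Your explicit handling of the case $\eta_{i}<0$ is a small added courtesy but not a different argument.
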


\begin{proof}[Proof of Lemma \ref{lemma:set_inequality}]
    \begin{equation}
            \eta_{i}=|\Theta|-\sum_{j\leq i}\omega_{j}
            =|\Theta|-|\bigcup_{j\leq i}\mathcal{U}'_{j}|\leq|\Theta\backslash\bigcup_{j\leq i}\mathcal{U}'_{j}|.
    \end{equation}
The first and second equality is straightforward by definition and observation, and the inequality is by Lemma \ref{lemma:general_set_inequality} due to the fact that the optimal cover $\Theta$ is the larger.
\end{proof} 

\begin{lemma}\label{lemma:sets_approximation}
    $\eta_{B_{t}}\leq|\Theta|\,e^{-1}$.
\end{lemma}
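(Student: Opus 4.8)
The plan is to prove this via the standard greedy coverage recurrence: I will show that each greedy iteration shrinks $\eta_i$ by a multiplicative factor $(1-\tfrac{1}{B_t})$, and then unroll over the $B_t$ selections. The engine of the argument is a per-iteration averaging bound establishing that the greedy pick always captures at least a $1/B_t$ fraction of the still-uncovered portion of the optimal cover.

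First I would set up this per-step gain. Fix an iteration $i$ and consider the elements of the optimal cover not yet captured by the first $i$ greedy picks, namely $\Theta\setminus\bigcup_{j\leq i}\mathcal{U}'_j$. Under Assumption \ref{assumption:full_coverage}, $\Theta$ is the union of the $B_t$ sets comprising the optimal solution $\mathcal{S}^{*}_t$, so these $B_t$ optimal sets jointly cover all of $\Theta\setminus\bigcup_{j\leq i}\mathcal{U}'_j$. By an averaging (pigeonhole) argument, at least one optimal set covers a $\tfrac{1}{B_t}$ fraction of these uncovered elements, and all such elements lie in the uncovered universe $\Omega_i$. Invoking Lemma \ref{lemma:set_inequality} to pass from the bookkeeping quantity to the genuine uncovered set, $|\Theta\setminus\bigcup_{j\leq i}\mathcal{U}'_j|\geq\eta_i$, so that optimal set newly covers at least $\eta_i/B_t$ points of $\Omega_i$. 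Since Algorithm \ref{alg:fccm} greedily selects the vertex maximizing the newly covered count (the highest out-degree into the uncovered region), the next pick satisfies $\omega_{i+1}=|\mathcal{U}'_{i+1}\cap\Omega_i|\geq\eta_i/B_t$.

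Next I would convert this gain into a recurrence on $\eta_i$. Because $\eta_{i+1}=\eta_i-\omega_{i+1}$ follows directly from the definition $\eta_i=|\Theta|-\sum_{j\leq i}\omega_j$, the bound $\omega_{i+1}\geq\eta_i/B_t$ gives $\eta_{i+1}\leq\eta_i(1-\tfrac{1}{B_t})$. Starting from $\eta_0=|\Theta|$ and iterating $B_t$ times yields $\eta_{B_t}\leq|\Theta|(1-\tfrac{1}{B_t})^{B_t}$, and the elementary inequality $(1-\tfrac{1}{m})^m\leq e^{-1}$ for every integer $m\geq1$ then closes the argument as $\eta_{B_t}\leq|\Theta|\,e^{-1}$.

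The main obstacle, and the only place requiring real care, is the per-step averaging bound $\omega_{i+1}\geq\eta_i/B_t$. Its validity rests on two points that must be stated cleanly: that the optimal cover uses exactly $B_t$ sets (matching the greedy budget, so the averaging denominator is $B_t$), and that $\eta_i$ truly lower-bounds the number of uncovered optimal elements, which is precisely the content of Lemma \ref{lemma:set_inequality}. I would also note that $\mathcal{U}_i=\mathcal{A}_{(t,t)}(v_i)\cup\mathcal{A}_{(t,1-t)}(v_i)$ being a union of a factual and a counterfactual ball is harmless: the analysis is purely set-theoretic over $\mathcal{D}$, so the fact that a single center simultaneously covers factual and counterfactual neighbours does not disturb any step.
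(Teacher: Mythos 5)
Your proposal is correct and follows essentially the same route as the paper's proof: the pigeonhole/averaging bound $\omega_{i+1}\geq\eta_{i}/B_{t}$ via Lemma \ref{lemma:set_inequality}, the multiplicative recurrence $\eta_{i+1}\leq\eta_{i}(1-\tfrac{1}{B_{t}})$, and unrolling from $\eta_{0}=|\Theta|$. If anything, your use of the finite inequality $(1-\tfrac{1}{m})^{m}\leq e^{-1}$ is cleaner than the paper's appeal to the limit as $B_{t}\rightarrow\infty$, which technically requires noting that the sequence increases to $e^{-1}$.
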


\begin{proof}
    [Proof of Lemma \ref{lemma:sets_approximation}]
    Construct the subtraction:
    \begin{subequations}
        \begin{align}
             &\eta_{i}-\eta_{i-1}=|\Theta|-\sum_{j\leq i}\omega_{j} - \left(|\Theta|-\sum_{j\leq i-1}\omega_{j}\right)=-\omega_{i}\\
             \implies& \eta_{i}=\eta_{i-1}-\omega_{i}
        \end{align}
    \end{subequations}
    Lemma \ref{lemma:set_inequality} implies that optimal solution $\Theta$ covers at least $\eta_{i}$ uncovered samples (the uncovered samples are w.r.t. to the covered set by $\mathcal{S}_{t}$) with $B_{t}$ sets, thus by the pigeonhole principle, when $\eta_{i}$ filled into $B_{t}$ sets, each set assigned averagely $\eta_{i}/B_{t}$, however, there are less than $B_{t}$ sets availables for the Algorithm \ref{alg:fccm} to query, thus there exists the set covers at least $\eta_{i}/B_{t}$ uncovered samples. By the greedy nature of the Algorithm \ref{alg:fccm}, we have $\omega_{i+1}\geq\frac{\eta_{i}}{B_{t}}$, then it further implies that:
    \begin{subequations}
        \begin{align}
            &\eta_{i}=\eta_{i-1}-\omega_{i}
            \leq\eta_{i-1}-\frac{\eta_{i-1}}{B_{t}}
            =\eta_{i-1}(1-\frac{1}{B_{t}})\\
            \implies&\frac{\eta_{i}}{\eta_{i-1}}=\frac{1}{B_{t}}
        \end{align}
    \end{subequations}

    For $\omega_{0}=0$ implies that $\eta_{0}=|\Theta|$, and maximally with budget $B_{t}$, we have $\eta_{B_{t}}$, and then construct the following:
    \begin{equation}
        \frac{\eta_{B_{t}}}{\eta_{0}}=\underbrace{\frac{\eta_{B_{t}}}{\eta_{B_{t}-1}}\times\frac{\eta_{B_{t}-1}}{\eta_{B_{t}-2}}\times\cdots\times\frac{\eta_{2}}{\eta_{1}}\times\frac{\eta_{1}}{\eta_{0}}}_{B_{t}\text{ quantities above}}=(1-\frac{1}{B_{t}})^{B_{t}}
    \end{equation}

    Noting that:
    \begin{equation}
        \lim_{B_{t}\rightarrow\infty}(1-\frac{1}{B_{t}})^{B_{t}}=\frac{1}{e}\implies \frac{\eta_{B_{t}}}{\eta_{0}}\leq\frac{1}{e}.
    \end{equation}

    Thus, it can be concluded that $\eta_{B_{t}}\leq|\Theta|\,e^{-1}$.
\end{proof}

\section{Related Work\label{appendix:related_works}}

\textbf{Treatment effect estimation.} Many early works in causal effect estimation (a.k.a., the treatment effect estimation) focus on group-level estimation, e.g., conditional average treatment effect (CATE). The widely used inverse probability weighting method \cite{rosenbaum1983central,imbens2015causal} and the doubly robust model \cite{robins1994estimation} are designed to mitigate the selection bias in CATE estimation, but are not generalizable to the unseen individuals or groups without labels. So far, various methods \cite{shalit2017estimating,louizos2017causal,alaa2017bayesian,yao2018representation,yoon2018ganite,shi2019adapting,zhang2020learning,kallus2020deepmatch,jesson2020identifying,wang2024optimal} have been proposed due to the proliferation of deep learning (DL). These parametric models are good at modeling the individual-level causal effect and are generalizable to unseen instances. Furthermore, the strong expressive power of such deep models can handle the high-dimensional data and relax the pivot assumptions in causal effect estimation, e.g., unconfoundedness assumption, by learning the deconfounded latent representations via neural mapping for treated and control groups. Additionally, there is another branch of work that investigate the treatment effect estimation under interference (e.g., the violation of the SUTVA assumptions) \cite{rakesh2018linked,ma2021causal,ma2022learning,lin2023estimating,lin2024treatment,chen2024doubly,lin2025scalable}, which is out-of-scope to the foucs of this paper.

\textbf{Active learning.} The concept of active learning (AL) dates back over a century \cite{smith1918standard}. Over time, it has evolved into a prominent branch of machine learning research \cite{settles2009active, ren2021survey, zhan2022comparative}. The primary goal of AL is to optimize model performance in a cost-efficient manner, achieving low model risk while minimizing the number of labeled samples required. AL methods are typically categorized into three main scenarios: query synthesis \cite{wang2015active}, stream-based \cite{fujii2016budgeted}, and pool-based approaches \cite{wu2018pool}. This paper focuses on pool-based AL, particularly in regression problems, where key acquisition strategies include uncertainty-based sampling \cite{gal2017deep}, density-based querying \cite{sener2018active}, and hybrid methods \cite{ash2019deep}. For example, Bayesian Active Learning by Disagreement (BALD) \cite{gal2017deep} uses epistemic uncertainty to select unlabeled samples, while core-set \cite{sener2018active} prioritizes samples based on their maximum distance to the nearest neighbor in the hidden space. ACS-FW \cite{pinsler2019bayesian} combines core-set and Bayesian approaches, balancing sample diversity and uncertainty in batch-mode acquisition. Although general AL methods are not specifically designed for CEE, benchmarking these methods can yield valuable insights.

\textbf{Treatment effect estimation with active learning.} Thus far, some progress has been made in this area of research. For instance, \cite{sundin2019active} proposes a querying criterion based on the estimated S-type error rate—the probability that the model incorrectly infers the sign of the treatment effect. However, this work focuses on estimating the correct sign of the treatment effect, which differs from the risk metric used in our study. For research aligned with the same risk metric, \citet{qin2021budgeted} introduce a theoretical framework that extends the upper-bound formulation from \cite{shalit2017estimating} mainly by a core-set approach \cite{tsang2005core}. Despite this, their proposed algorithm QHTE does not adequately address distribution alignment during data acquisition. To mitigate acquisition imbalance, Causal-BALD \cite{jesson2021causal} adopts an information-theoretic perspective, introducing the $\mu\rho$BALD criterion. This criterion scales the acquisition metric inversely with counterfactual variance, encouraging the selection of samples that align with similar counterfactuals when certain treatments are underrepresented. This represents a notable improvement over its predecessor, $\mu$BALD, an uncertainty-based softmax-BALD method \cite{kirsch2021stochastic}. However, Causal-BALD depends heavily on accurate uncertainty quantification and computationally intensive training using complex estimators, such as deep kernel learning \cite{wilson2016deep}. Recently, \citet{wen2024progressive} introduced a straightforward yet effective algorithm, MACAL, which reduces distributional discrepancies while remaining model-independent. Despite its advantages, MACAL requires querying data in pairs (one from the treated group and one from the control group), limiting its generalizability in scenarios where optimality can be achieved by querying from only one treatment group. Furthermore, while MACAL includes convergence analysis for sub-objectives, it lacks guarantees on overall risk upper-bound convergence. Additionally, some studies \cite{deng2011active, addanki2022sample, connolly2023task, ghadiri2024finite} leverage AL for efficient experimental trial design, where treatment information is applied only after sample acquisition, rather than being included in the initial pool. This setup differs fundamentally from our focus, where treatment information is available from the start.

\setcounter{algorithm}{0} 
\renewcommand{\thealgorithm}{1} 

\begin{algorithm}[h!]
\caption{Greedy Radius Reduction\label{appendix:alg_1}}
\begin{algorithmic}[1]
   \STATE \textbf{Input:} Pool set $D=\mathcal{D}_{1}\cup \mathcal{D}_{0}$; random initialization $S^{\text{init}}=\mathcal{S}_{1}\cup \mathcal{S}_{0}$, where $\mathcal{S}_{1}$ and $\mathcal{S}_{0}$ are the random initialization set for the treated and control group respectively; budget $B$; distance metric $d(\cdot,\cdot)$
   \STATE $\mathcal{S}=\mathcal{S}^{\text{init}},\Tilde{S}_{1}=\varnothing,\Tilde{S}_{0}=\varnothing$
    \WHILE{$|\mathcal{S}|<|S^{\text{init}}|+B$} 
    

    \STATE $\delta = \max\{\delta_{(1,1)}, \delta_{(1,0)}, \delta_{(0,0)}, \delta_{(0,1)}\}$

    \IF{$\delta == \delta_{(1,1)}$}

    \STATE$a=\argmax_{i\in \mathcal{D}_{1}\backslash \mathcal{S}_{1}}\min_{j\in \mathcal{S}_{1}} d(\mathbf{x}^{t=1}_{i},\mathbf{x}^{t=1}_{j})$\hfill\COMMENT{\textcolor{gray}{To reduce $\delta_{(1,1)}$ by querying data point from group $t=1$}}

    \ELSIF{$\delta == \delta_{(0,0)}$}

    \STATE$a=\argmax_{i\in \mathcal{D}_{0}\backslash \mathcal{S}_{0}}\min_{j\in \mathcal{S}_{0}} d(\mathbf{x}^{t=0}_{i},\mathbf{x}^{t=0}_{j})$\hfill\COMMENT{\textcolor{gray}{To reduce $\delta_{(0,0)}$ by querying data point from group $t=0$}}

    \ELSIF{$\delta == \delta_{(1,0)}$}

    \STATE $a'=\argmax_{i\in \mathcal{D}_{0}\backslash\Tilde{S}_{0}}\min_{j\in \mathcal{S}_{1}} d(\mathbf{x}^{t=0}_{i},\mathbf{x}^{t=1}_{j})$\hfill\COMMENT{\textcolor{gray}{To reduce $\delta_{(1,0)}$ by querying data point from group $t=0$}}
    \STATE $b=\argmin_{i\in \mathcal{D}_{1}}d(\mathbf{x}^{t=1}_{i},a')$\hfill\COMMENT{\textcolor{gray}{Find the nearest point from $t=1$ to $a'$ to reduce counterfactual radius $\delta_{(1,0)}$}}
    

    \IF{$b\notin \mathcal{S}_{1}$} \STATE$a=b, \Tilde{S}_{0}=\Tilde{S}_{0}\cup\{a'\}$\hfill\COMMENT{\textcolor{gray}{Counterfactual radius $\delta_{(1,0)}$ can be reduced, add $b$ into the training set}}
    
    \ELSE \STATE \text{Repeat Line 4-25 by excluding $\delta_{(1,0)}$ from the $\max$ function}\hfill\COMMENT{\textcolor{gray}{Counterfactual radius $\delta_{(1,0)}$ cannot be reduced, go back to other reducible covering radii from the largest one}}
    \ENDIF
    
    \ELSIF{$\delta == \delta_{(0,1)}$}

    \STATE $a'=\argmax_{i\in \mathcal{D}_{1}\backslash\Tilde{S}_{1}}\min_{j\in \mathcal{S}_{0}} d(\mathbf{x}^{t=1}_{i},\mathbf{x}^{t=0}_{j})$\hfill\COMMENT{\textcolor{gray}{To reduce $\delta_{(0,1)}$ by querying data point from group $t=1$}}
    \STATE $b=\argmin_{i\in \mathcal{D}_{0}}d(\mathbf{x}^{t=0}_{i},a')$\hfill\COMMENT{\textcolor{gray}{Find the nearest point from $t=0$ to $a'$ to reduce counterfactual radius $\delta_{(0,1)}$}}
    

    \IF{$b\notin \mathcal{S}_{0}$} \STATE$a=b, \Tilde{S}_{1}=\Tilde{S}_{1}\cup\{a'\}$\hfill\COMMENT{\textcolor{gray}{Counterfactual radius $\delta_{(0,1)}$ can be reduced, add $b$ into the training set}}
    
    \ELSE \STATE \text{Repeat Line 4-25 by excluding $\delta_{(0,1)}$ from the $\max$ function}\hfill\COMMENT{\textcolor{gray}{Counterfactual radius $\delta_{(0,1)}$ cannot be reduced, go back to other reducible covering radii from the largest one}}
    \ENDIF
    
    \ENDIF

    
    

    

    \STATE $\mathcal{S}=\mathcal{S}\cup\{a\}$
    
    \ENDWHILE
    \STATE \textbf{Output:} $\mathcal{S}$
\end{algorithmic}
\end{algorithm}

\newpage
\section{Additional Details}

\subsection{Algorithm 1\label{appendix:detailed_alg_1}}

\textbf{Proxy collection $\Tilde{S}_{1-t}$}. To promote sample diversity in the acquired data, the reduction of the counterfactual covering radius $\delta_{(t,1-t)}$ queries the data from group $t$, however, such radius is calculated by $\delta_{(t,1-t)}=\max_{i\in \mathcal{D}_{1-t}\backslash \Tilde{S}_{1-t}}\min_{j\in \mathcal{S}_{t}} d(\mathbf{x}^{1-t}_{i},\mathbf{x}^{t}_{j})$, where a direct acquisition from the treatment group $1-t$ is performed. But our actual acquisition targets the query from group $t$ to reduce the counterfactual covering radius $\delta_{(t,1-t)}$ by covering the counterfactual samples. Thus, by the greedy nature of the Algorithm \ref{alg:fccs}, when $\delta_{(t,1-t)}$ is the one to be reduced, the mentality for the query step is to calculate the proxy point $a'=\argmax_{i\in \mathcal{D}_{1-t}\backslash \Tilde{S}_{1-t}}\min_{j\in \mathcal{S}_{(t,1-t)}} d(\mathbf{x}^{1-t}_{i},\mathbf{x}^{t}_{j})$ from group $1-t$, then find the nearest point $a_{t}\in\mathcal{D}_{t}$ to $a'$ as the eventual query to expand $\mathcal{S}_{t}$, and adding the proxy pint $a'$ into the proxy collection $\Tilde{S}_{1-t}$ as a already marked position. 

\textbf{Heads-up:} Note that querying the factual sample to reduce the counterfactual covering radius is not a necessity to help reduce the counterfactual covering radius, because a direct acquisition on the counterfactual samples can indeed reduce the counterfactual covering radius by the rigorous math definition. However, doing so cannot promote the sample diversity in the counterfactual group, for example, querying the counterfactual sample fall under the neighborhood of an acquired counterfactual sample to reduce the counterfactual covering radius can cause redundancy under limited budget.

\subsection{Algorithm 2\label{appendix:more_explanation_alg_2}}

The data acquisition performed in this algorithm is mainly repeating the two steps: 1). Pick the node with the highest scaled out-degree (which is recalculated each round for new point selection); 2). remove the incoming factual/out-going counterfactual edges to the picked node and its neighbors. Note that the scaled out-degree is by multiplying the out-degree with an coefficient $c$ that is directly associated with the covering radii and the covered points to further balance the distribution discrepancy. That is, given the union of factual and counterfactual covering ball induced by center $\mathbf{x}$: $\mathcal{A}_{(t,t)}(\mathbf{x})\bigcup\mathcal{A}_{(t,1-t)}(\mathbf{x})$, three main possible scenarios are: \begin{figure*}[h!]
  \centering
  \subfigure[$\mathcal{A}_{(t,t)}(\mathbf{x})\neq\varnothing\land \mathcal{A}_{(t,1-t)}(\mathbf{x})\neq\varnothing$]{\includegraphics[width=0.28\textwidth]{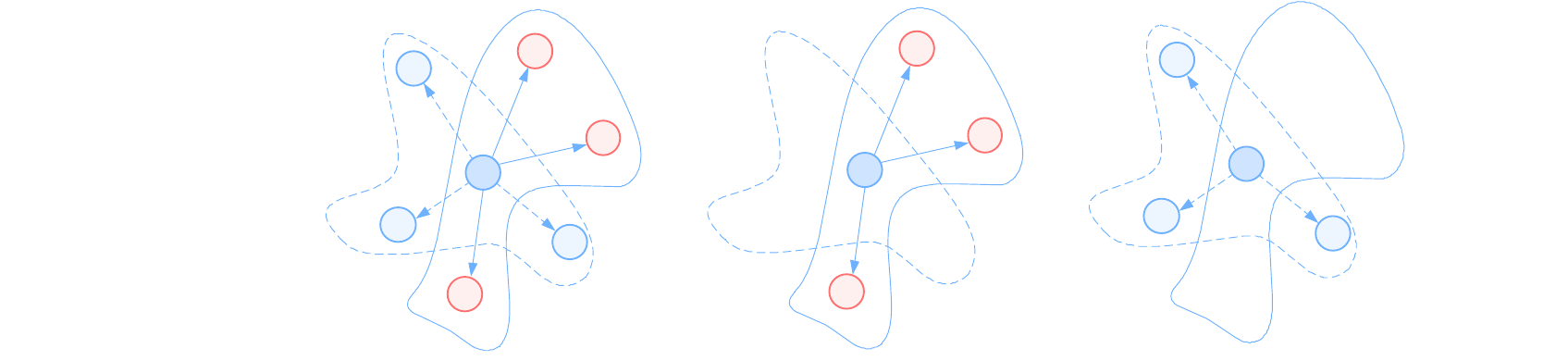}\label{fig:local_neighbour_1}}\quad\quad
  \subfigure[$\mathcal{A}_{(t,t)}(\mathbf{x})=\varnothing\land \mathcal{A}_{(t,1-t)}(\mathbf{x})\neq\varnothing$]{\includegraphics[width=0.28\textwidth]{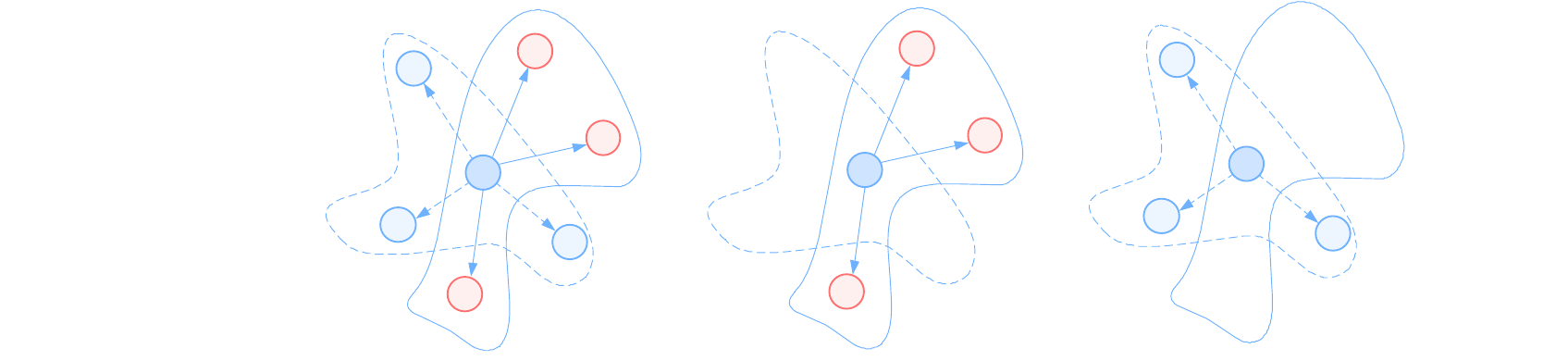}\label{fig:local_neighbour_2}}\quad\quad
  \subfigure[$\mathcal{A}_{(t,t)}(\mathbf{x})\neq\varnothing\land \mathcal{A}_{(t,1-t)}(\mathbf{x})=\varnothing$]{\includegraphics[width=0.28\textwidth]{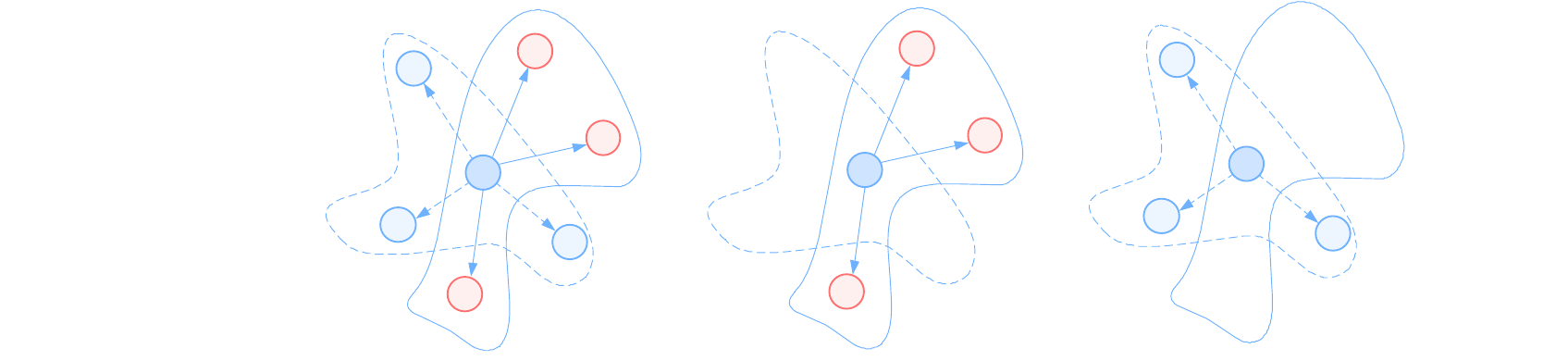}\label{fig:local_neighbour_3}}
  \caption{Visualization of the factual covering (dashed manifold) and the counterfactual covering (solid manifold) on the local neighborhood from the center $\mathbf{x}$. Note that the zero-neighbor scenario is omitted as it is the least preferred to maximize the coverage, \label{fig:local_neighbours}}
\end{figure*} 

Thus, the scaling coefficient $c$ is calculated as: 
\begin{equation}
c(\mathcal{\mathbf{x}})=\zeta(\mathcal{\mathbf{x}})(1-\zeta(\mathcal{\mathbf{x}})), \forall\zeta(\mathcal{\mathbf{x}})=\frac{|\mathcal{A}_{(t,t)}(\mathbf{x})|}{|\mathcal{A}_{(t,t)}(\mathbf{x})\bigcup\mathcal{A}_{(t,1-t)}(\mathbf{x})|}\in[0,1],
\end{equation} where the minimum and maximum of the coefficient $c$ is obtained respectively by $\zeta=0$ (Figure \ref{fig:local_neighbour_2} and \ref{fig:local_neighbour_3}) and $\zeta=\frac{1}{2}$ (Figure \ref{fig:local_neighbour_1}). Noting that the shape of the coefficient $c$ w.r.t. $\zeta$ over the interval [0,1] is a downward-opening parabola, which gives the maximum $c$ on the evenly divided scenario, with $c$ declines when $\zeta$ departing from $\frac{1}{2}$. Additionally, when $c=0$, scenario \ref{fig:local_neighbour_2} is preferred to scenario \ref{fig:local_neighbour_3} from overlapping perspective with higher number of counterfactual neighbors prioritized.

\subsection{Dataset Details\label{appendix:dataset}}

\textbf{Toy:} The 2-dimensional dataset is generated by creating multiple clusters for each treatment group $t$. Let center of cluster $v^{t}_{i}=(x^{t}_{i,1},x^{t}_{i,2})$ to be drawn randomly from the uniform distribution:
\begin{equation}
    x^{t}_{i,1},\,x^{t}_{i,2}\sim\text{Uniform}([-9+\beta_{t},9+\beta_{t}]),
\end{equation} where the $\beta_{t}$ is a offset to create larger distribution discrepancy between two treatment groups. Let the collection of the centers of size $k$ for group $t$ be $\mathcal{V}^{t}_{k}=\{v^{t}_{i}\}^{k}_{i=1}$, the randomly added center $v^{t}_{k}$ should satisfy:
\begin{equation}
    d_{v'\in\mathcal{V}^{t}_{k-1}}(v^{t}_{k},v')\geq1.5\times0.9^{j},\label{eq:new_centers}
\end{equation} where $j$ counts from 0 with unit increment each time when Eq. (\ref{eq:new_centers}) is not satisfied for the randomly generated center $v^{t}_{k}$ over 100 times. That is, we reduce the minimum distance to a smaller one if the qualified sample cannot be found from the remaining pool set.

Given the centers, $\mathcal{V}^{t}_{n'_{t}}$, we define the mean $\mu^{t}_{j}=\mathcal{V}^{t}_{n'_{t}}[j]$ and variance $\sigma^{2}_{j}=1$ and generating $\mathbf{X}^{t}(i)\in\mathbb{R}^{n_{t}\times2}$ for cluster $i$:
\begin{equation}
    \mathbf{X}^{t}_{j}(i)\sim\mathcal{N}(\mu^{t}_{j},\sigma^{2}_{j}),\,\forall\,j\leq n_{t}\implies \mathbf{X}^{t}=\bigcup_{i\leq n'_{t}}\mathbf{X}^{t}(i).
\end{equation} Thus, set $\beta_{1}=2$ and $\beta_{0}=-2$, number of clusters $n'_{1}=50$ and $n'_{0}=30$, number of samples for each cluster $n_{1}=n_{0}=200$, such that we generate covariate matrix $\mathbf{X}^{t=1}\in\mathbb{R}^{10000\times2}$ for treatment group $t=1$ and covariate matrix $\mathbf{X}^{t=0}\in\mathbb{R}^{6000\times2}$ for treatment group $t=1$. See visualization in main text Figure \ref{fig:data_distribution} for the generated data. Furthermore, let $\mathbf{T}\in\mathbb{R}^{(n'_{t}\cdot n_{t})\times1}$ be the treatment indicator, to simulate the response curve $y$, we have: \begin{equation}
    y^{t}_{i}=\sin\,(1.5\times x^{t}_{i,1})+\cos\,(1.5\times x^{t}_{i,2}) +5t,\forall\,(\mathbf{x}^{t}_{i},t)\in(\mathbf{X}^{t},\mathbf{T}).
\end{equation}

\textbf{IBM \cite{shimoni2018benchmarking}:} This dataset is based on the real-world 177 covariates from a cohort of 100,000 individuals, from the publicly available Linked Births and Infant Deaths Database. The generated response curve is based on the randomly selected 25,000 individuals out of the 100,000 base, and the potential outcomes have 10 different simulations according to \cite{shimoni2018benchmarking}. 

\textbf{CMNIST \cite{jesson2021quantifying}:} This dataset contains 60,000 image samples (10 classes) of size 28$\times$28, which are adapted from MINIST \cite{lecun1998mnist} benchmark. CMNIST is completely distinct from the previous tabular datasets by leveraging the image data for the treatment effect estimation. The potential outcomes are simulated 10 times and generated by projecting the digits into a 1-dimensional latent manifold as described in \cite{jesson2021quantifying}.

\subsection{Further Discussions\label{appendix:discussion_ablation}}
\begin{figure}[t!]
  \centering
  \subfigure[IBM]{\includegraphics[width=0.33\textwidth]{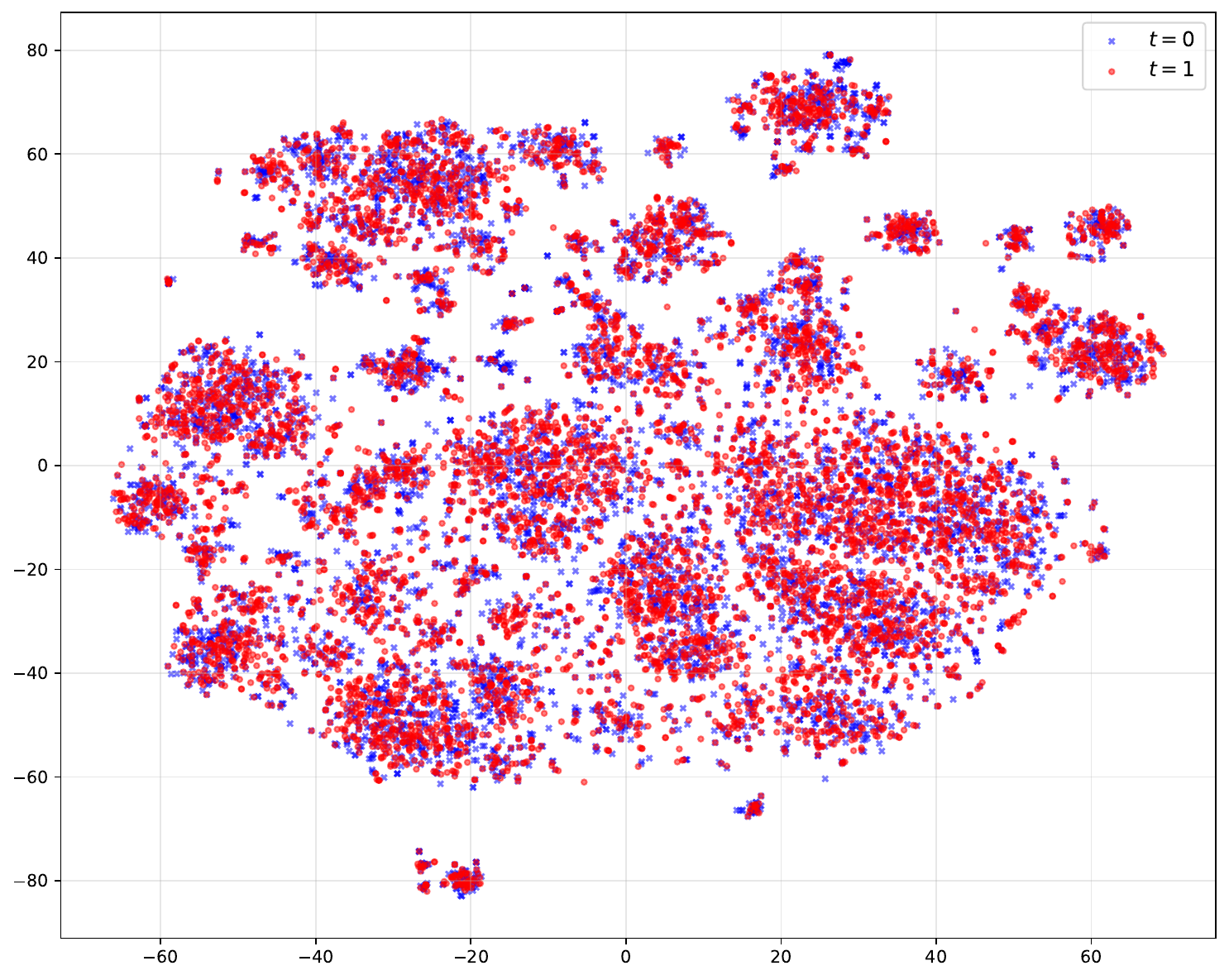}\label{fig:ibm_distribution}}
    \subfigure[$t=1$]{\includegraphics[width=0.33\textwidth]{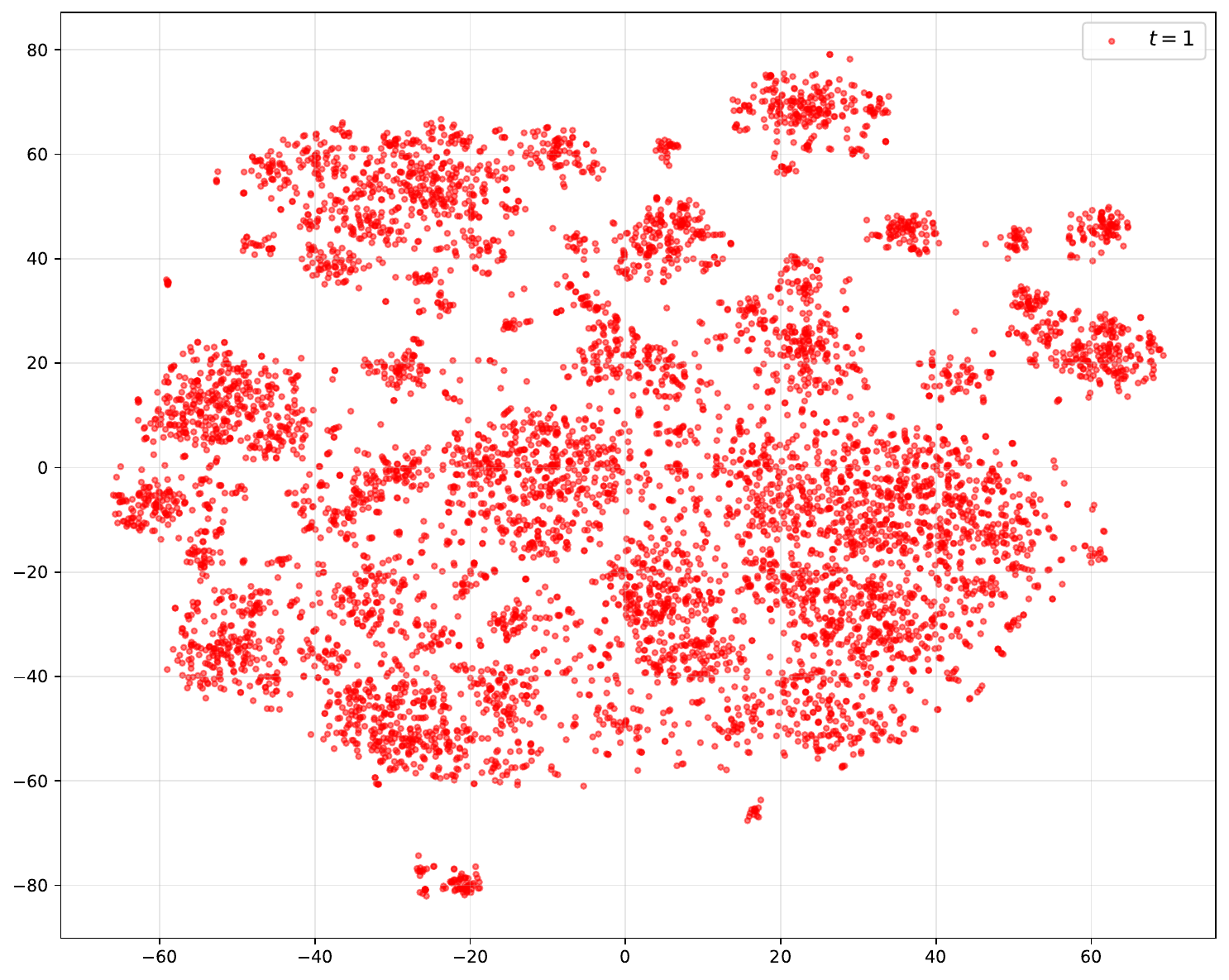}\label{fig:ibm_distribution_1}}
  \subfigure[$t=0$]{\includegraphics[width=0.33\textwidth]{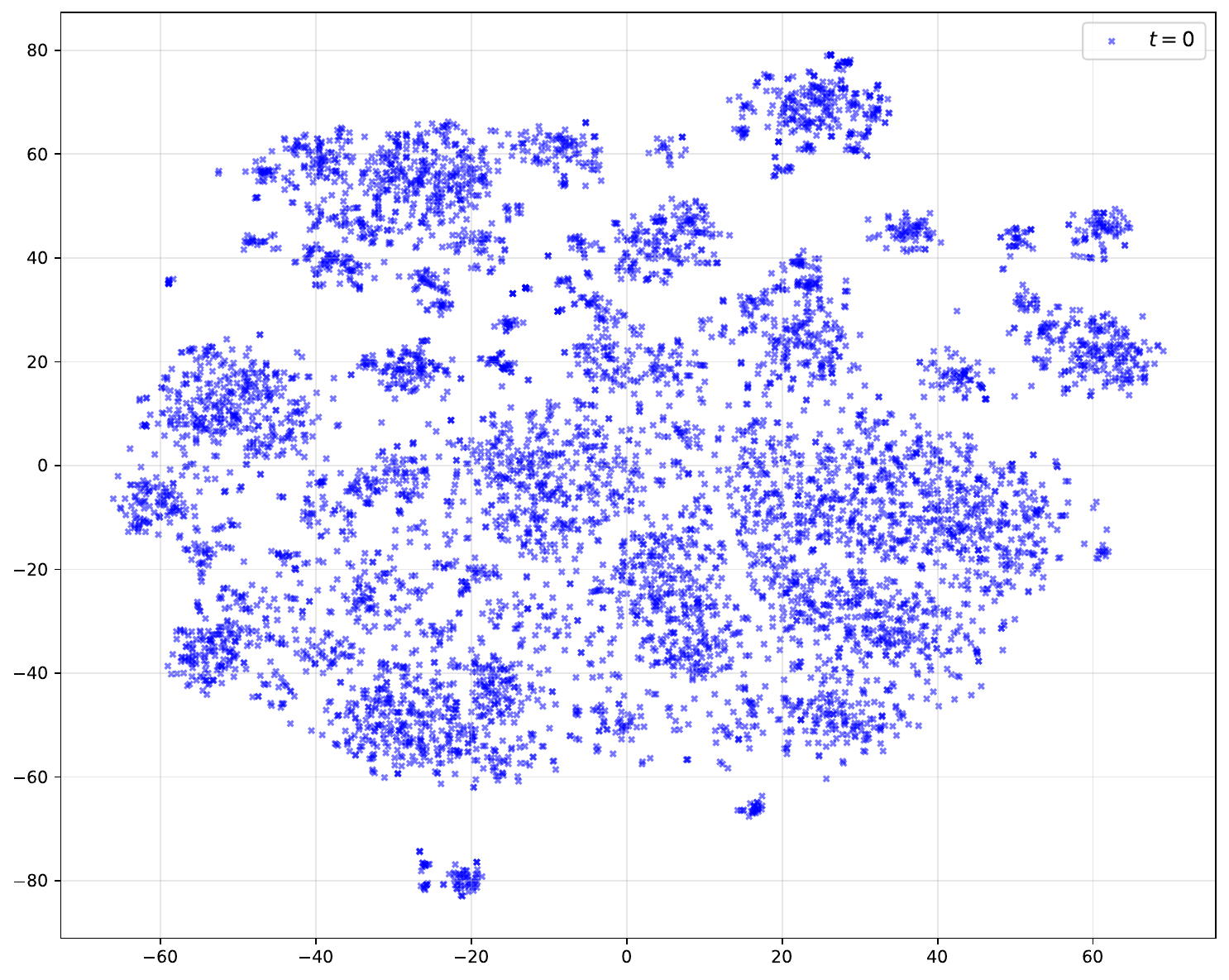}\label{fig:ibm_distribution_0}}
  \subfigure[CMNIST]{\includegraphics[width=0.33\textwidth]{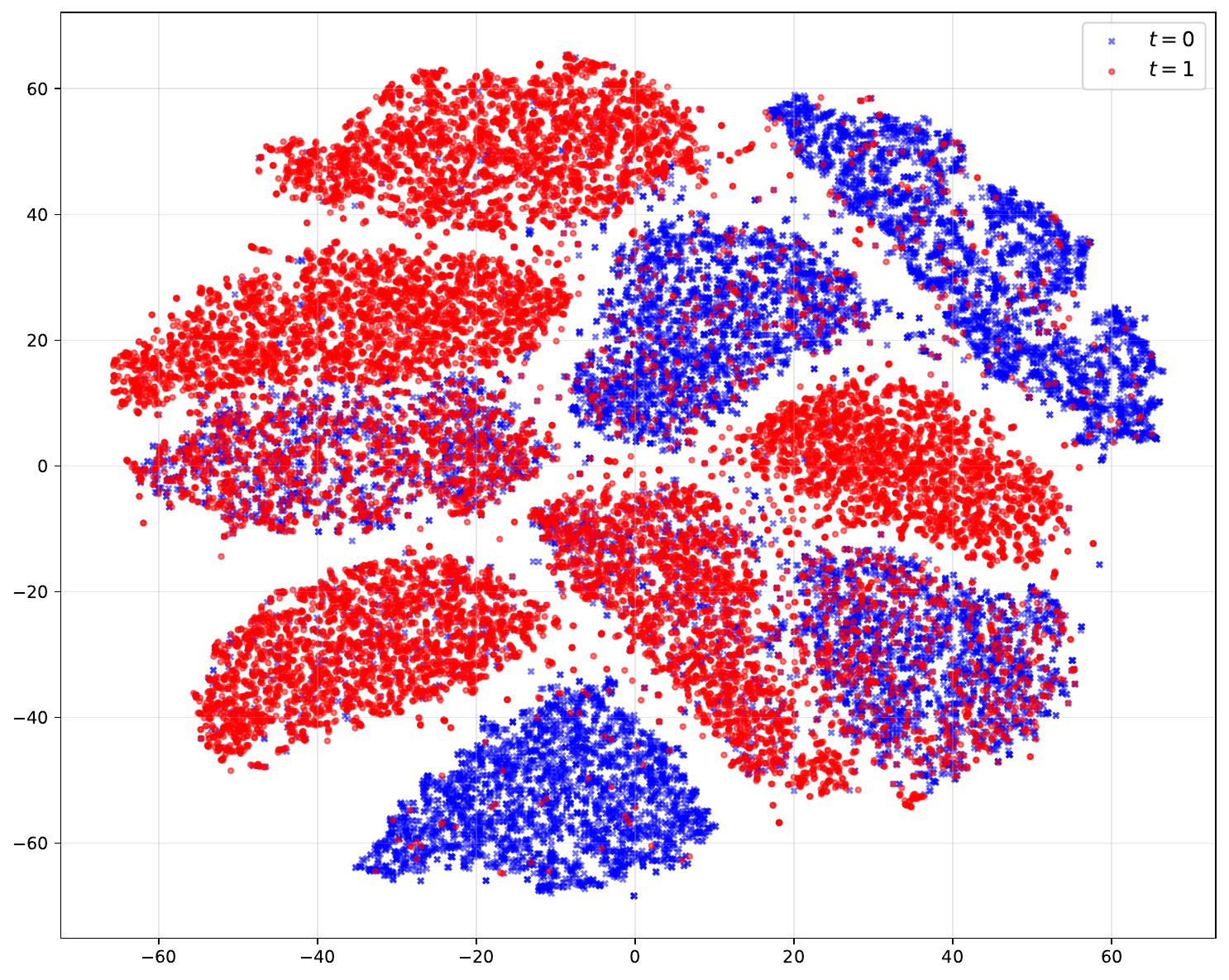}\label{fig:cmnist_distribution}}
    \subfigure[$t=1$]{\includegraphics[width=0.33\textwidth]{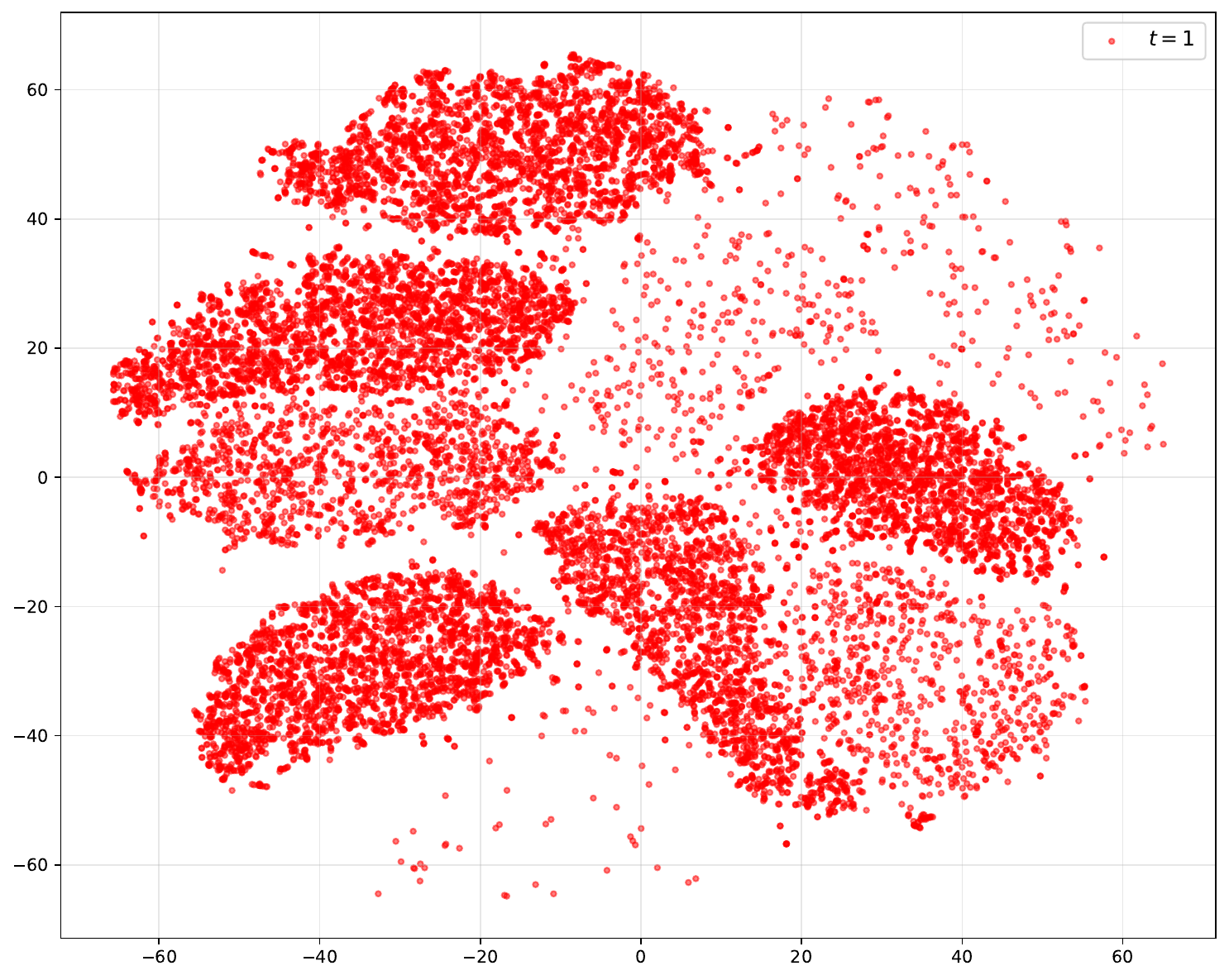}\label{fig:cmnist_distribution_1}}
  \subfigure[$t=0$]{\includegraphics[width=0.33\textwidth]{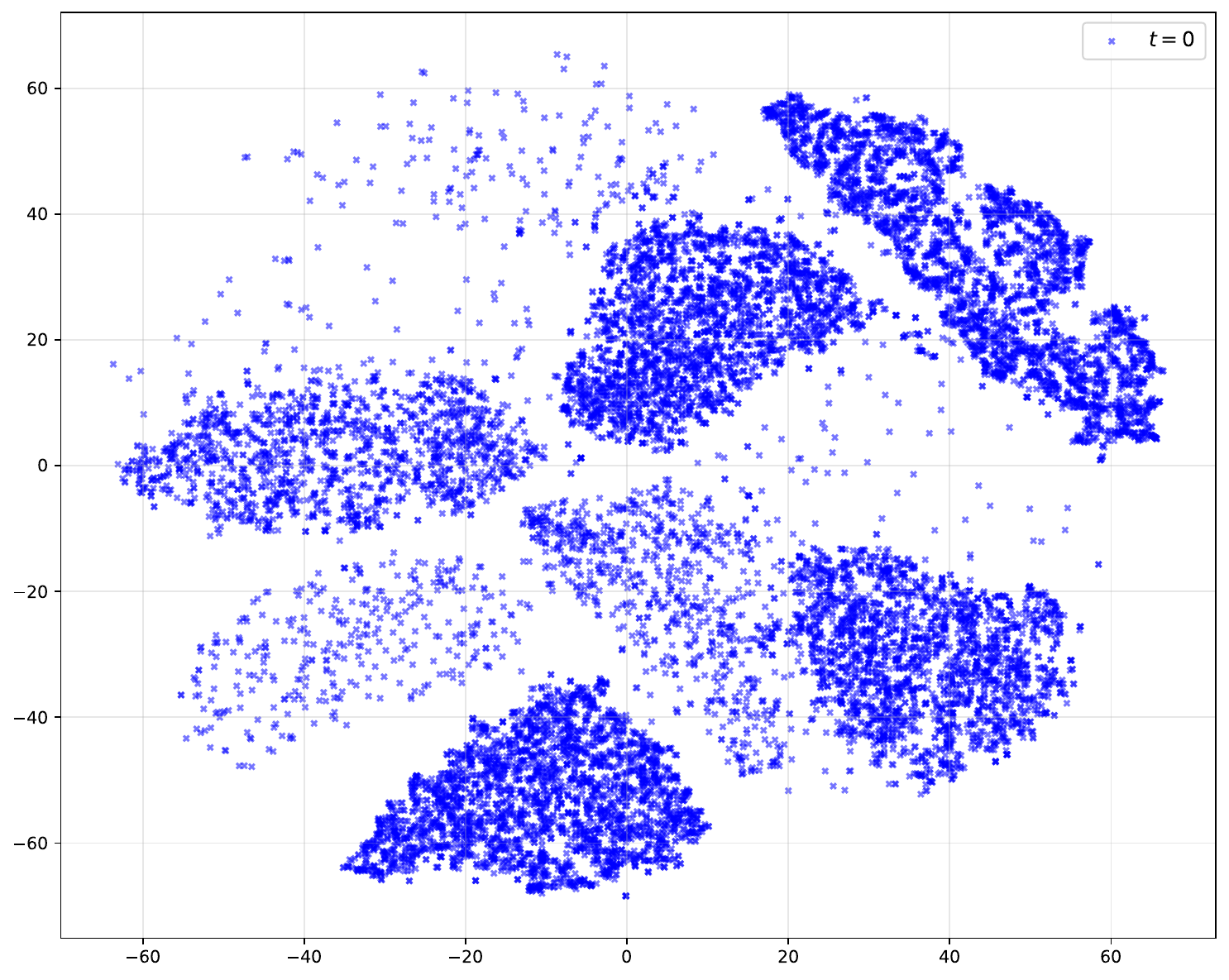}\label{fig:cmnist_distribution_0}}
  \caption{High dimensional data projected into 2-dimensional space via t-SNE, with colors indicating \textcolor{purple}{overlapping}, \textcolor{red}{group $t=1$}, and \textcolor{blue}{group $t=0$}. Left: entire data distribution; Mid: distribution on group $t=1$; Right: distribution on group $t=0$.}
  \label{fig:ablation_analysis}
\end{figure} 

It is seen that in Section \ref{section:ablation} for ablation study, the performance difference shown in Table \ref{table:ablation_study} indicates that the no measurable and significant performance gain on IBM and CMNIST dataset, respectively. Here, we qualitatively discuss the causes for this phenomenon from the data distribution perspective.

The entire data distribution for IBM is plot in Figure \ref{fig:ibm_distribution}, where a well-blended distribution is seen. We further split the entire data distribution into sole plots on Figure \ref{fig:ibm_distribution_1} ($t=1$) and Figure \ref{fig:ibm_distribution_0} ($t=0$). It is observed that data distributions on different treatment groups are almost identical, which is aligned to the dataset description in the original paper that generates the IBM benchmark \cite{shimoni2018benchmarking}. Thus, the high-density region is built on the overlapping region since two groups are fully overlapped. Therefore, given all the samples on group $t=0$ known, when data acquisition starts on group $t=1$, algorithm only needs to acquire the sample from the high-density regions as the overlapping condition is synchronously satisfied. Thus, FCCM and $\text{FCCM}^{-}$ is indistinguishable on such data distribution. In the meanwhile, it is observed that the entire CMNIST data distribution (Figure \ref{fig:cmnist_distribution}) embeds significantly less overlapping regions with the covariates extracted from the MNIST \cite{lecun1998mnist} benchmark which has 10 classes of data (10 digits). Unlike IBM \cite{shimoni2018benchmarking}, the data generating process in \cite{jesson2021quantifying} creates significant distribution discrepancy for two treatment groups, where the high-density regions for group $t=1$ (Figure \ref{fig:cmnist_distribution_1}) and group $t=0$ (Figure \ref{fig:cmnist_distribution_0}) are well-distinguished, leaving less overlapping regions on the entire data distribution. Therefore, our proposed method -- FCCM prioritizes the data acquisition toward the high-density and overlapping region, while $\text{FCCM}^{-}$ only focus on high-density region where less overlapping is seen, thus resulting in significant performance difference as seen in Table \ref{table:ablation_study}.

\subsection{Practicability of the Assumptions\label{appendix:discussions_on_assumptions}}

Strong ignorability: The validity of the SI can be approximated by carefully selecting sufficient relevant covariates and constructing a more balanced dataset.

Lipschitz continuity: Theorem 3.3 assumes the Lipschitzness of the $p^{t}(y|\mathbf{x})$, this can be a practical assumption if the regression model $f$, e.g., a well-regularized neural network (NN), learns a smooth mapping from $\mathbf{x}$ to $y$, which implies the Lipschitzness for $p^{t}(y|\mathbf{x})$. Also, the NN $f$ is differentiable w.r.t. $\mathbf{x}$, thus the squared loss $l_{f}$ is also bounded and sufficiently differentiable w.r.t. $\mathbf{x}$, further implying the Lipschitzness of $l_{f}$.

Constant $\kappa$: The existence of the constant $\kappa$ is thoroughly discussed by \citet{shalit2017estimating} in Appendix A.3 and A.4.

Additionally, we discuss consequence for the proposed theorems when the above-mentioned assumptions are not satisfied:

Lipschitz continuity for Theorem \ref{theorem:overall}: If the Lipschitzness does not hold, the multiplicative constant will be unbounded, and thus the reduction of the radii may not help control the risk upper bound. Strong ignorability for Theorem \ref{theorem:2opt}: Strong ignorability provides an ideal scenario for acquiring the counterfactual samples, where a quick bound reduction by Algorithm 1 is seen in Figure \ref{fig:ideal_convergence}. If strong ignorability cannot be guaranteed in real-world data, e.g., CMNIST, the reduction of the bound will be significantly slower and less effective for Algorithm \ref{alg:fccs}, as shown in Figure \ref{fig:realistic_convergence}. Thus, it motivates us to propose Algorithm \ref{alg:fccm} that can handle compromised data distributions more effectively via a slight trade-off on coverage.

\subsection{Hyperparameters Tuning\label{appendix:model_trianing}}

We conduct all the experiments with 24GB NVIDIA RTX-3090 GPU on Ubuntu 22.04 LTS platform with the 12th Gen Intel i7-12700K 12-Core 20-Thread CPU. As stated in the main text, for fair comparison, we take the consistent hyperparameters tuned in \cite{jesson2021causal,wen2024progressive} for the estimators: \textbf{DUE-DNN} \cite{van2021feature} and \textbf{DUE-CNN} \cite{van2021feature} shown in Table \ref{table:estimator_search_space}. Additionally, we search the best hyperparameters, i.e., covering radius $\delta$ and edge weight $\alpha$ for counterfactual linkage, for Algorithm \ref{alg:fccm} with the validation set shown in Table \ref{table:algorithm_search_space}. Note, that in Section \ref{section:estimating_radius}, we approximating a narrower range around 95\% threshold to further determine the covering radius for the best performance. 
\begin{table}[h!]
\centering
\begin{minipage}[t]{0.35\textwidth} 
\centering
\caption{Hyperparameters for Estimators}
\begin{tabularx}{\textwidth}{l>{\centering\arraybackslash}X>{\centering\arraybackslash}X} 
\toprule\label{table:estimator_search_space}
\text{Hyperparameters} & \text{DNN} & \text{CNN} \\
\midrule
Kernel  & RBF & Matern \\
Inducing Points  & 100 & 100 \\
Hidden Neurons  & 200 & 200 \\
Depth  & 3 & 2 \\
Dropout Rate  & 0.1 & 0.05 \\
Spectral Norm  & 0.95 & 3.0 \\
Learning Rate  & 1e\textsuperscript{-3} & 1e\textsuperscript{-3} \\
\bottomrule
\end{tabularx}
\end{minipage}%
\hspace{0.02\textwidth} 
\begin{minipage}[t]{0.6\textwidth} 
\centering
\caption{Hyperparameters for Algorithm \ref{alg:fccm}}
\begin{tabularx}{\textwidth}{l>{\centering\arraybackslash}X>{\centering\arraybackslash}X} 
\toprule\label{table:algorithm_search_space}
\text{Hyperparameters} & \text{Search Space} & Tuned \\
\midrule
$\delta_{(1,1)}$ for TOY & [0.11, 0.12, 0.13] & 0.11  \\
$\delta_{(1,0)}$ for TOY & [0.11, 0.12, 0.13] & 0.11  \\
$\delta_{(1,1)}$ for IBM  & [0.11, 0.13, 0.15] & 0.11 \\
$\delta_{(1,0)}$ for IBM  & [0.11, 0.13, 0.15] & 0.11 \\
$\delta_{(1,1)}$ for CMNIST  & [0.40, 0.45, 0.50] & 0.50 \\
$\delta_{(1,0)}$ for CMNIST  & [0.40, 0.45, 0.50] & 0.40 \\
Edge weight $\alpha$ & [1.0, 2.5, 5.0] & 2.5  \\
\bottomrule
\end{tabularx}
\end{minipage}
\end{table}

\newpage
\subsection{Main Results under Highest Resolution\label{appendix:high_resolution}}
\begin{figure}[h!]
    \centering
    \subfigure[2\% increment on TOY dataset]{
    \includegraphics[width=0.31\linewidth]{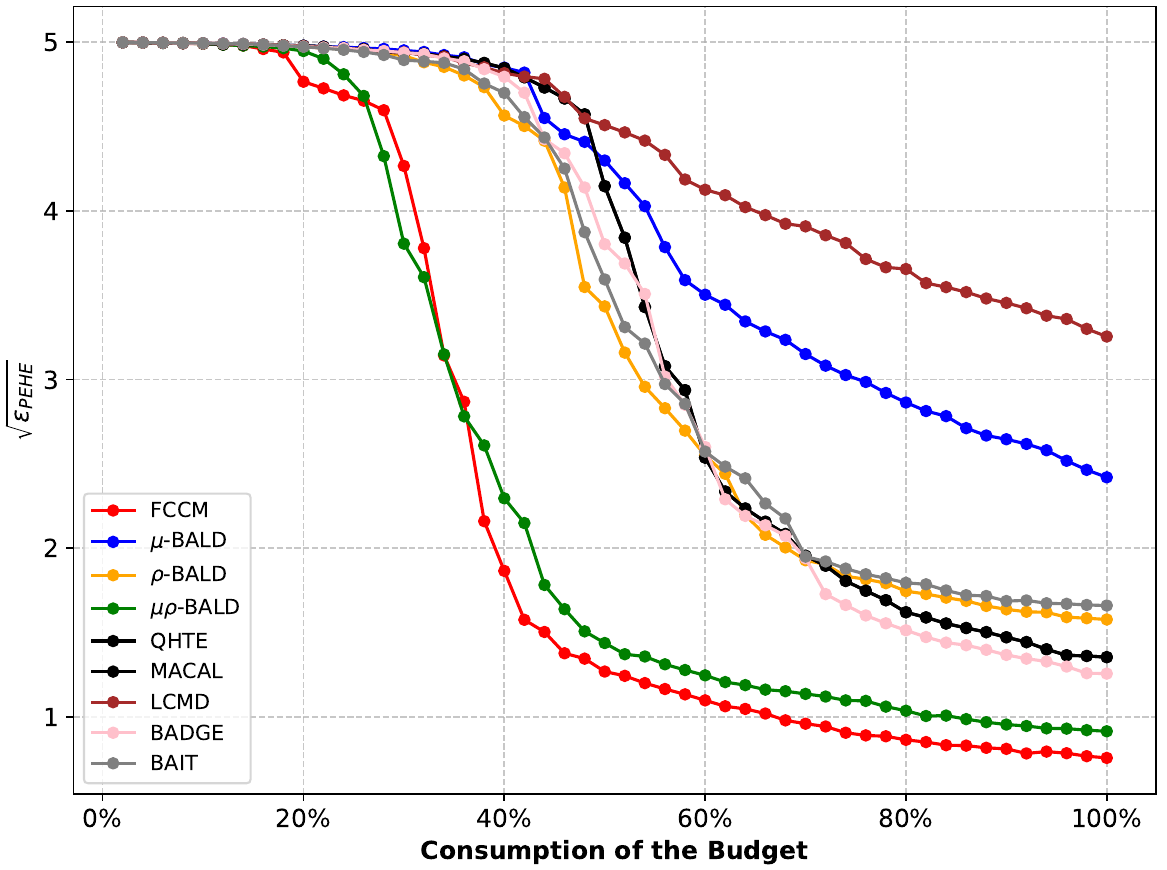}}
    \subfigure[2\% increment on IBM dataset]{
    \includegraphics[width=0.31\linewidth]{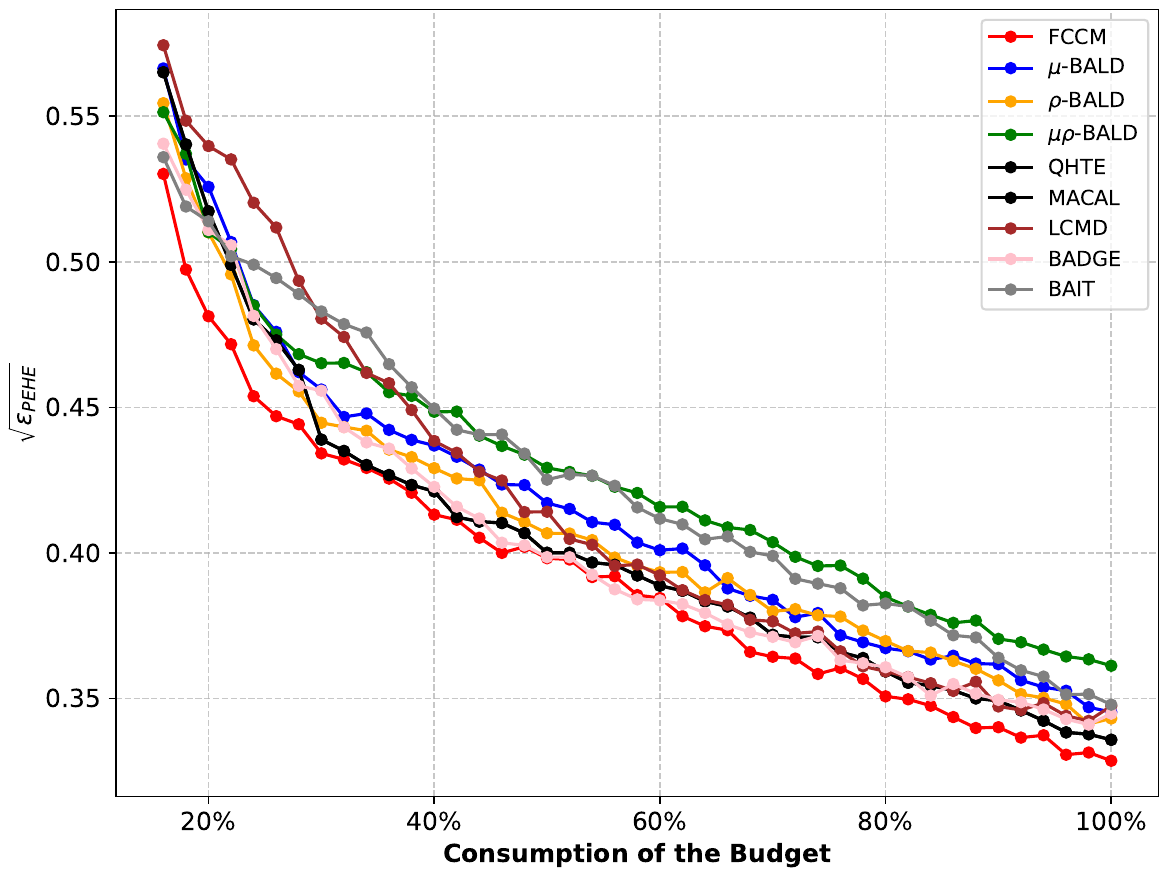}}
    \subfigure[2\% increment on CMNIST dataset]{
    \includegraphics[width=0.31\linewidth]{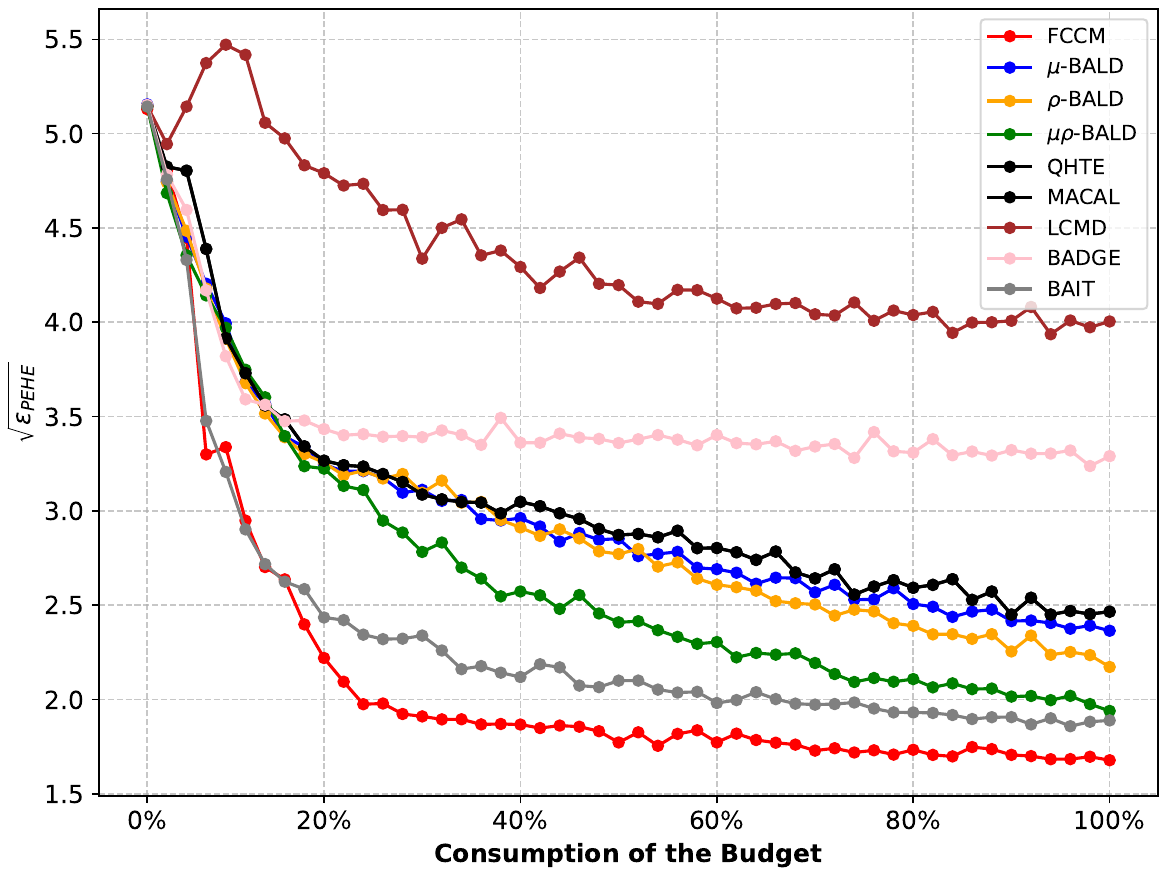}}
    \label{fig:enter-label}
\end{figure}

\subsection{Sensitivity Study\label{appendix:sensitivity_study}}
Note that the acquisition on treatment sample $t=1$ is insensitive on $\delta_{(0,0)}$ and $\delta_{(0,1)}$ in our setting, as all control samples ($t=0$) are seen. For $\alpha$, our setting of $\alpha=2.5$ has an overall lower error across different acquisition budgets.
\begin{figure}[h!]
    \centering
    \subfigure[Analysis for the weight $\alpha$ on TOY]{
    \includegraphics[width=0.31\linewidth]{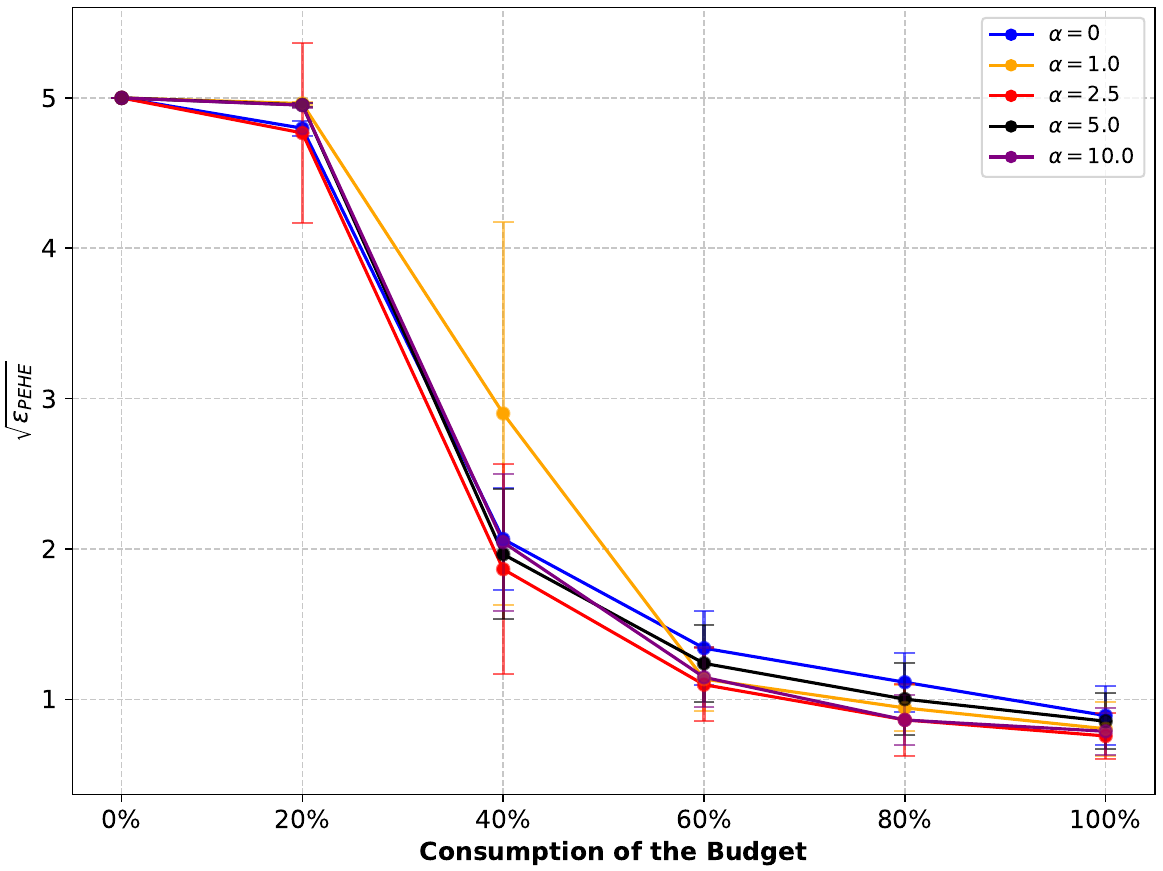}}
    \subfigure[Analysis for the weight $\alpha$ on IBM]{
    \includegraphics[width=0.31\linewidth]{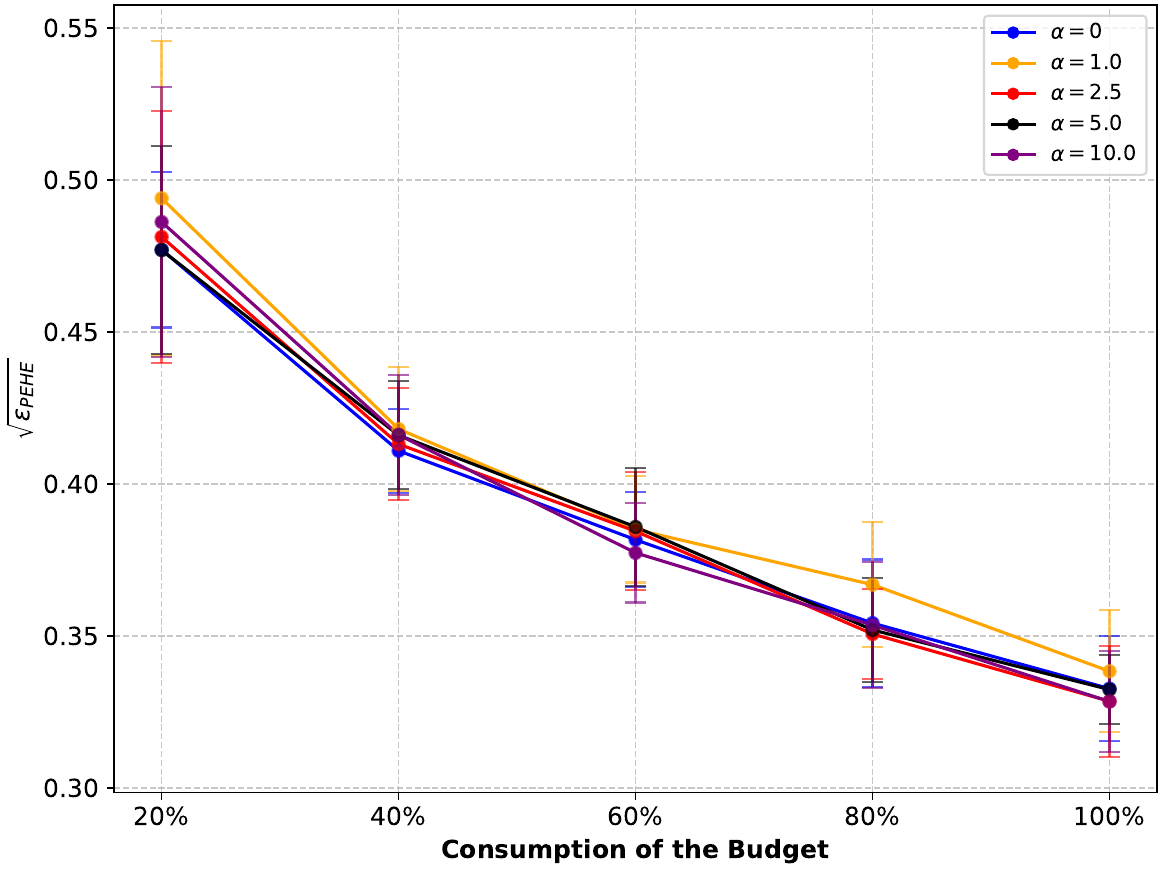}}
    \subfigure[Analysis for the weight $\alpha$ on CMNIST]{\includegraphics[width=0.31\linewidth]{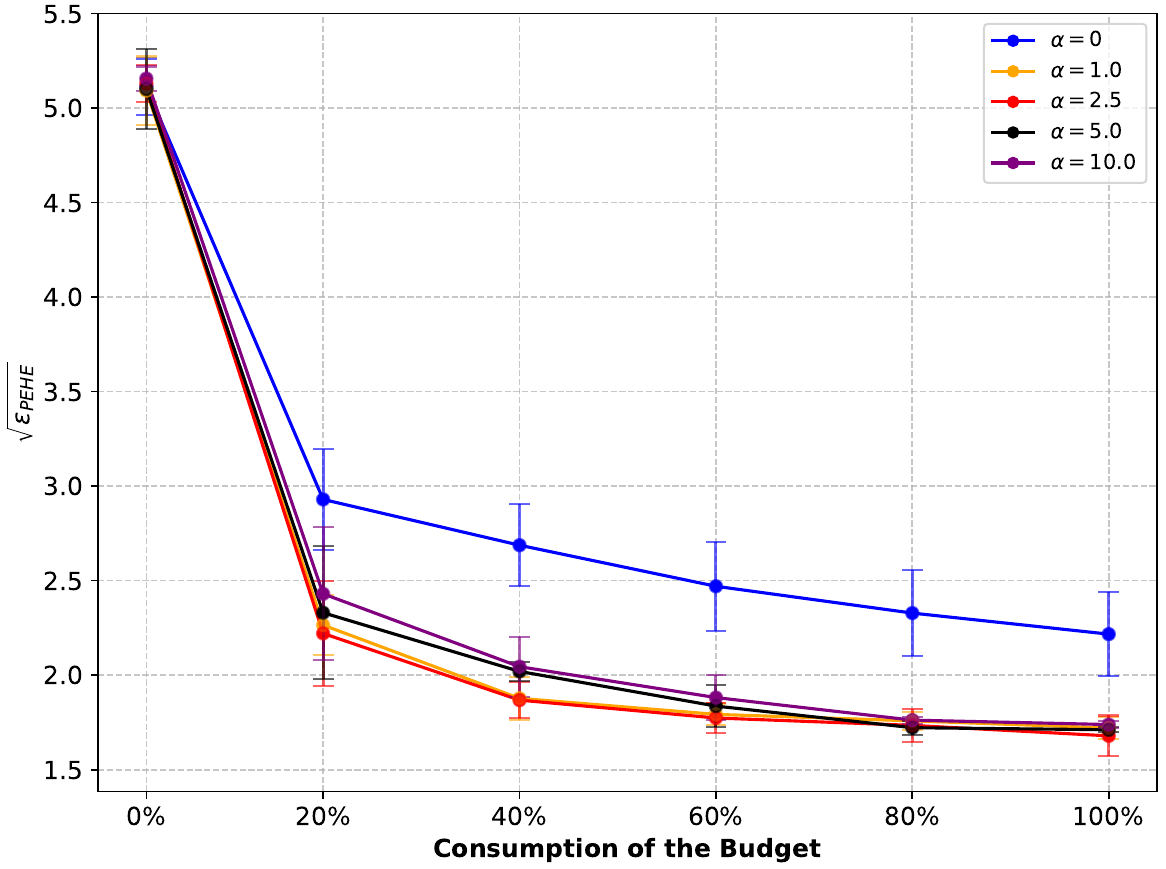}}\\
    \subfigure[Analysis for $\delta_{(1,1)}$ on TOY]{
    \includegraphics[width=0.31\linewidth]{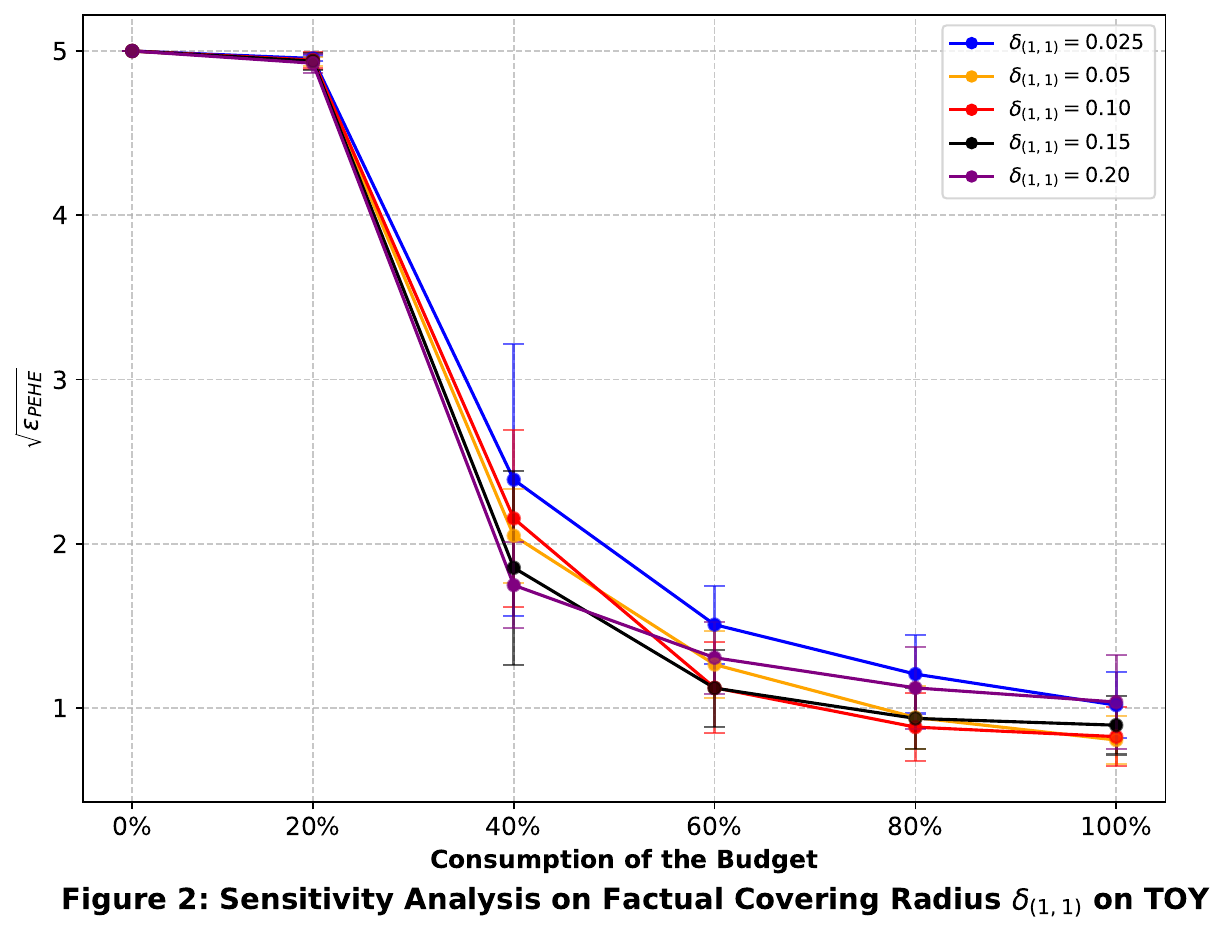}}
    \subfigure[Analysis for $\delta_{(1,1)}$ on IBM]{\includegraphics[width=0.31\linewidth]{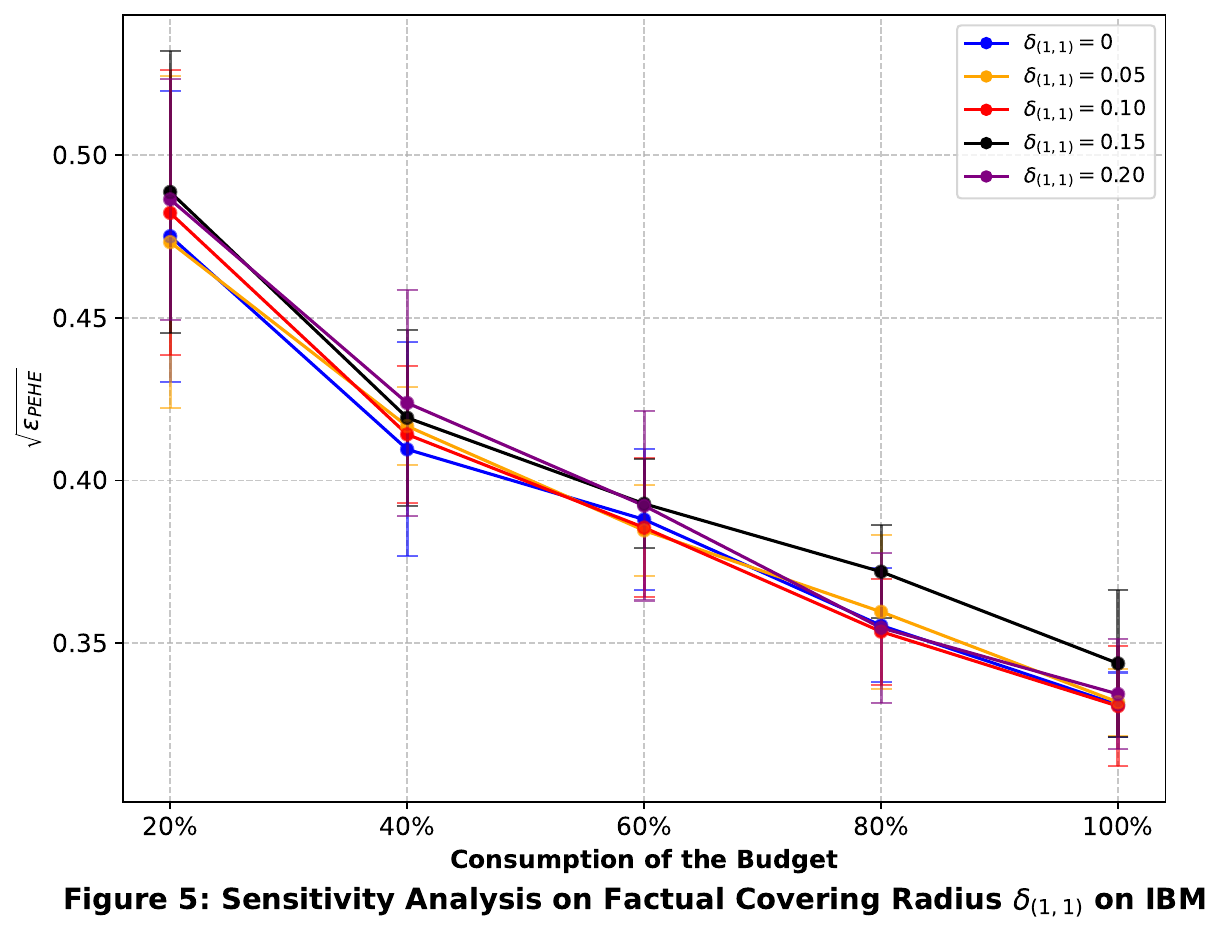}}
    \subfigure[Analysis for $\delta_{(1,1)}$ on CMNIST]{\includegraphics[width=0.31\linewidth]{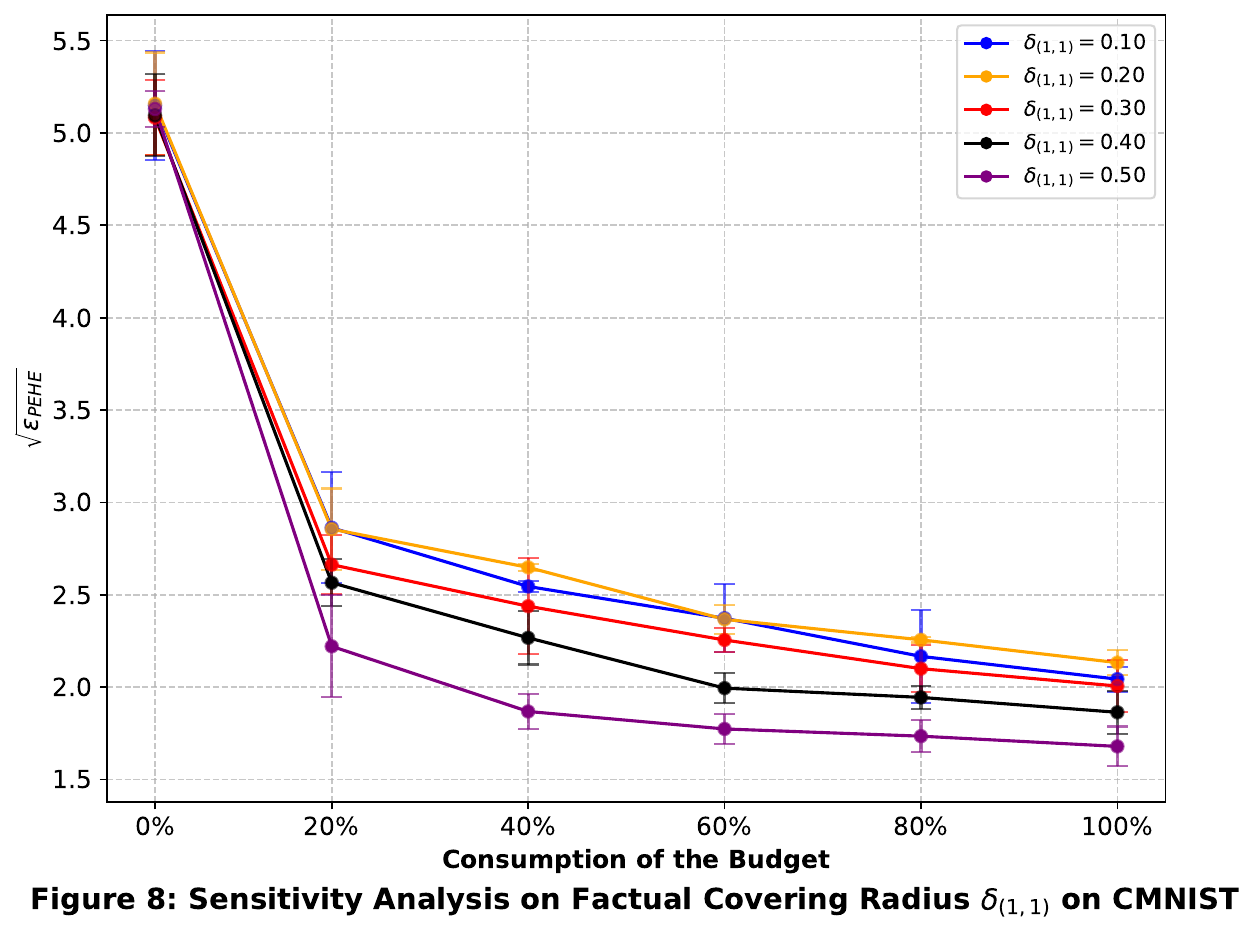}}\\
    \subfigure[Analysis for $\delta_{(1,0)}$ on TOY]{
    \includegraphics[width=0.31\linewidth]{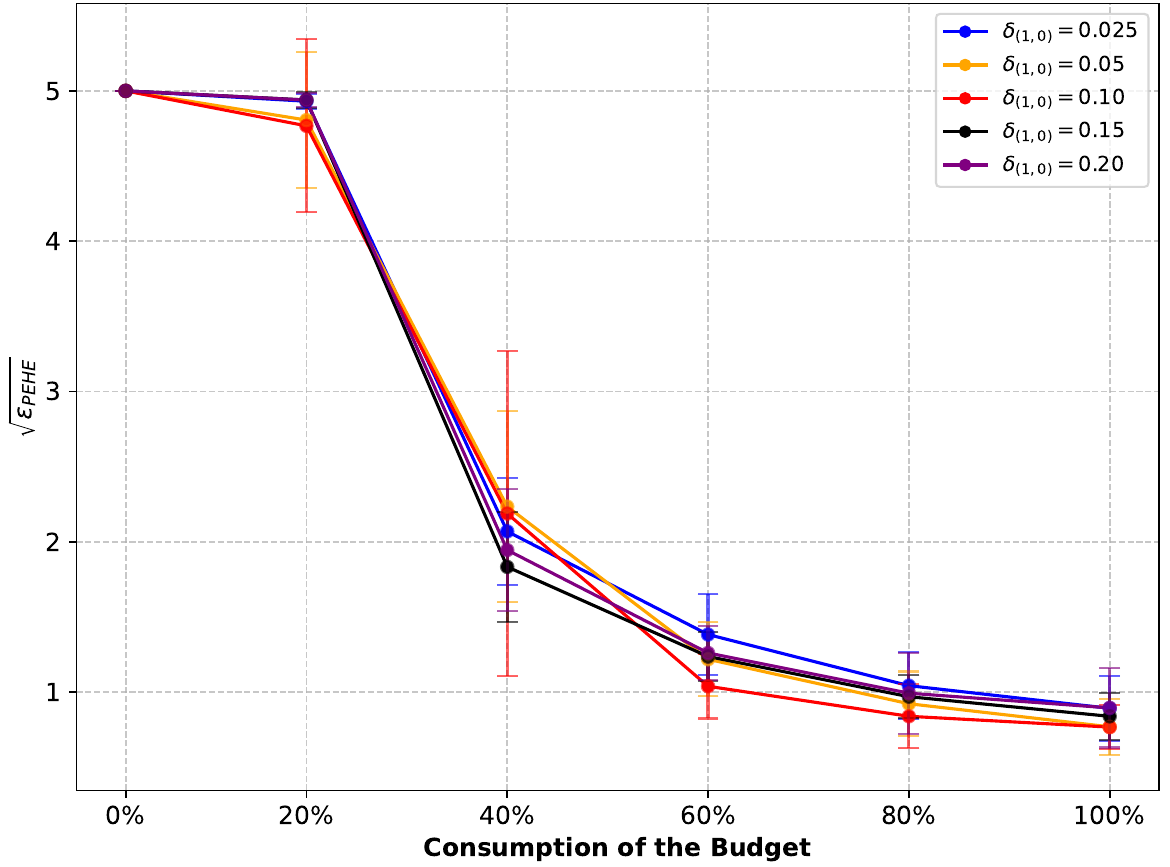}}\subfigure[Analysis for $\delta_{(1,0)}$ on IBM]{
    \includegraphics[width=0.31\linewidth]{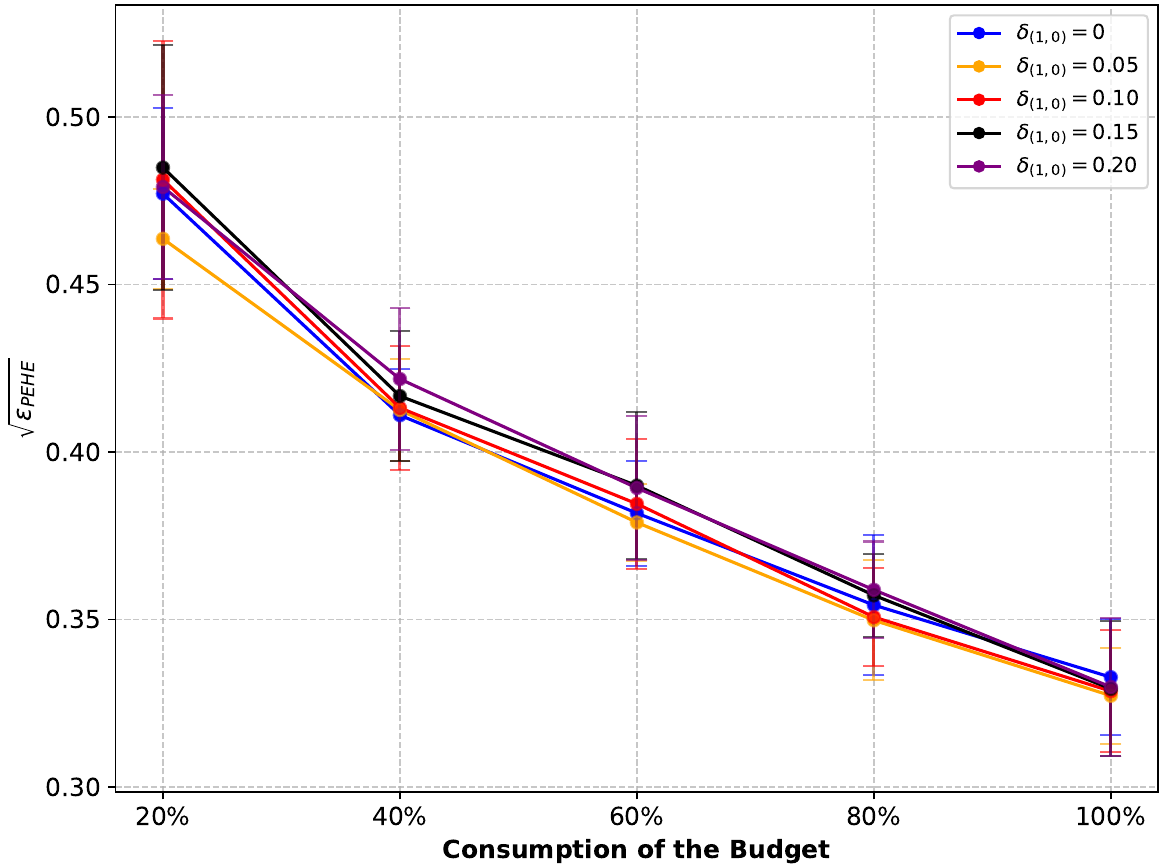}}\subfigure[Analysis for $\delta_{(1,0)}$ on CMNIST]{\includegraphics[width=0.31\linewidth]{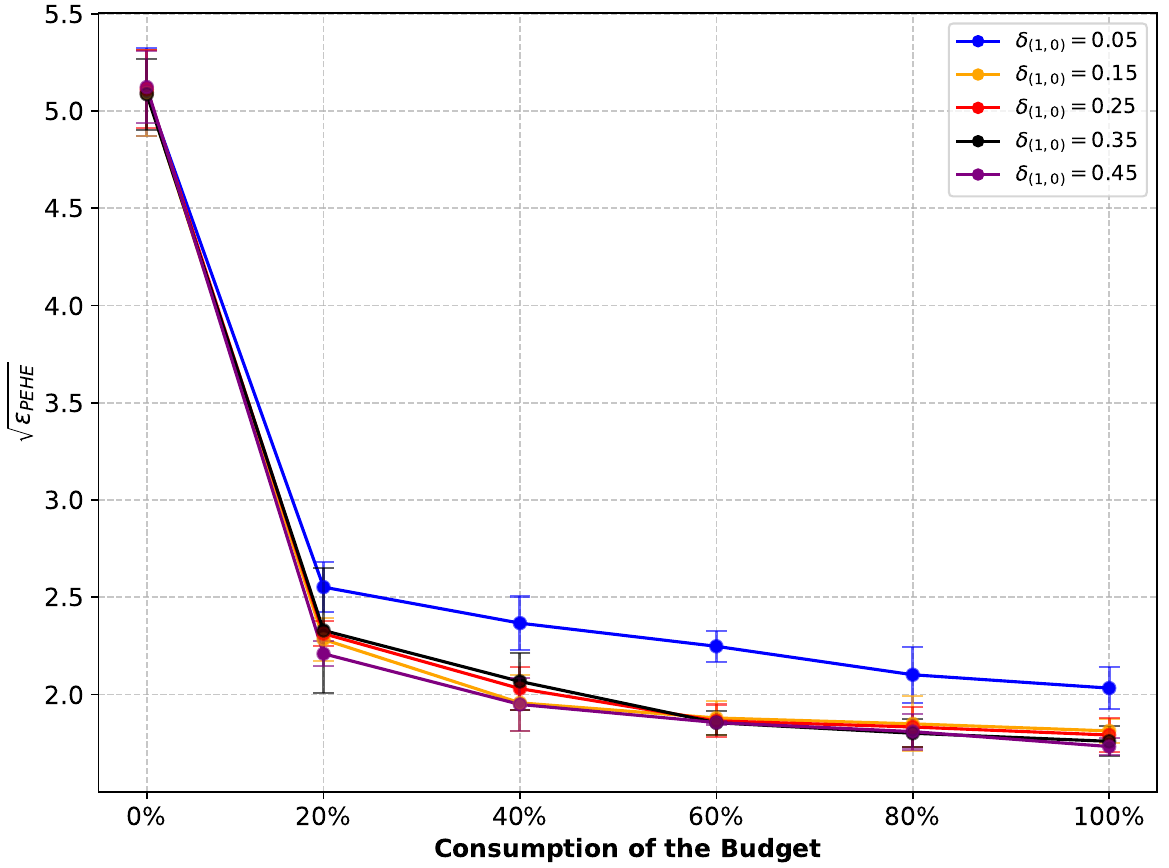}}\\
    \label{fig:enter-label-}
\end{figure}

\end{document}